\documentclass[journal]{IEEEtran}

\usepackage[utf8]{inputenc}
\usepackage[T1]{fontenc}
\usepackage{graphicx}
\usepackage{amsmath,amssymb}
\usepackage{amsthm}
\usepackage{hyphenat}
\newtheorem{assumption}{Assumption}
\newtheorem{proposition}{Proposition}
\newtheorem{theorem}{Theorem}
\newtheorem{corollary}{Corollary}
\newtheorem{lemma}{Lemma}
\newtheorem{remark}{Remark}
\usepackage{booktabs}
\usepackage{array}
\usepackage[ruled,vlined]{algorithm2e}
\usepackage{url}
\usepackage[hidelinks]{hyperref}

\begin{document}

\title{How Much of the Routing Gap Is Real? Decomposing the
Router-to-Oracle Gap into Reproducible Specialist Advantage
and Single-Draw Label Noise}

\author{Teng-Ruei~Chen%
\thanks{T.-R. Chen is with (1) the Institute of Bioinformatics and Systems Biology,
National Yang Ming Chiao Tung University, Hsinchu 300, Taiwan; and (2) Krixvon AI,
Taiwan (e-mail: luka@krixvon.com).}}

\markboth{Preprint,~July~2026}{Chen: How Much of the Routing Gap Is Real?}

\maketitle

\begin{abstract}
On real open-model pools, $12$--$36\%$ of the reported router-to-oracle gap is
single-draw label noise that no single-commit router can capture, while the
majority is genuine, recoverable specialist advantage; this work proves why---a
recoverability asymmetry---and releases a protocol to measure it.

Routing among large language models (LLMs) trades cost for quality, motivated by
the large reported gap between learned routers and a per-instance oracle. But that
oracle is one \emph{single} correctness label per (query, model): under stochastic
decoding it is a Bernoulli draw, not a reproducible property. We recast the
question structurally---the expected oracle decomposes as
$O^{\exp}=O^{\mathrm{repro}}+\Delta$, into reproducible single-commit headroom
$O^{\mathrm{repro}}$ and a non-negative single-commit selection floor $\Delta$. Our
main result is a \emph{recoverability asymmetry}: this floor is closed by \emph{no}
single-commit router (deterministic or randomized), yet is provably recovered by
test-time \emph{sampling}~\cite{audibert2010best,kaufmann2016complexity}---best-of-$K$
on the committed model, at the oracle's own budget, dominates the independent-pool
single-draw oracle. The cap needs no cross-model independence, pinning ``not
recoverable'' to single-commit \emph{selection}, not to information; we prove it
alongside the exact decomposition and noise-share bounds that shrink as the budget
grows. The procedure adds no router, only resampling. The floor's \emph{magnitude}
is a prospective, conservative localization, not an audit: our primary target
LLMRouterBench ($33$ models, $391{,}645$ instances) builds its oracle as a
per-query union of single $T{=}0.2$ draws, so its ${\sim}20$-point gap is by
construction a union of stochastic draws; since $O^{\mathrm{repro}}$ is
non-identifiable at $k{=}1$, we re-estimate by fresh $k\!\ge\!20$ resampling under
one-sided, dependence-corrected bounds. Across three controlled open-model
re-generations (arithmetic, competition math, and non-math science), single-draw
noise is a substantial minority of the gap---larger on unsaturated benchmarks,
approaching half on the hardest queries---while the majority is recoverable
specialist advantage. We release a multi-sample oracle protocol that routing
benchmarks can adopt.
\end{abstract}

\begin{IEEEkeywords}
LLM routing, model selection, evaluation methodology, oracle bias, stochastic decoding, benchmark reliability.
\end{IEEEkeywords}

\section{Introduction}
\IEEEPARstart{S}{erving} large language models (LLMs) at scale forces a cost--quality
trade-off, and selecting or combining among models per query has become the
principal means of navigating it. Three paradigms have emerged: \emph{routing}
(commit to one model up front), \emph{cascading} (escalate from cheap to
expensive), and \emph{fusion} (run several and aggregate). All rest on a shared
premise of \emph{model complementarity}---no single model is best on every
input~\cite{moa,routellm,cascaderouting,routingsurvey}.

\begin{figure*}[t]
\centering
\includegraphics[width=0.95\textwidth]{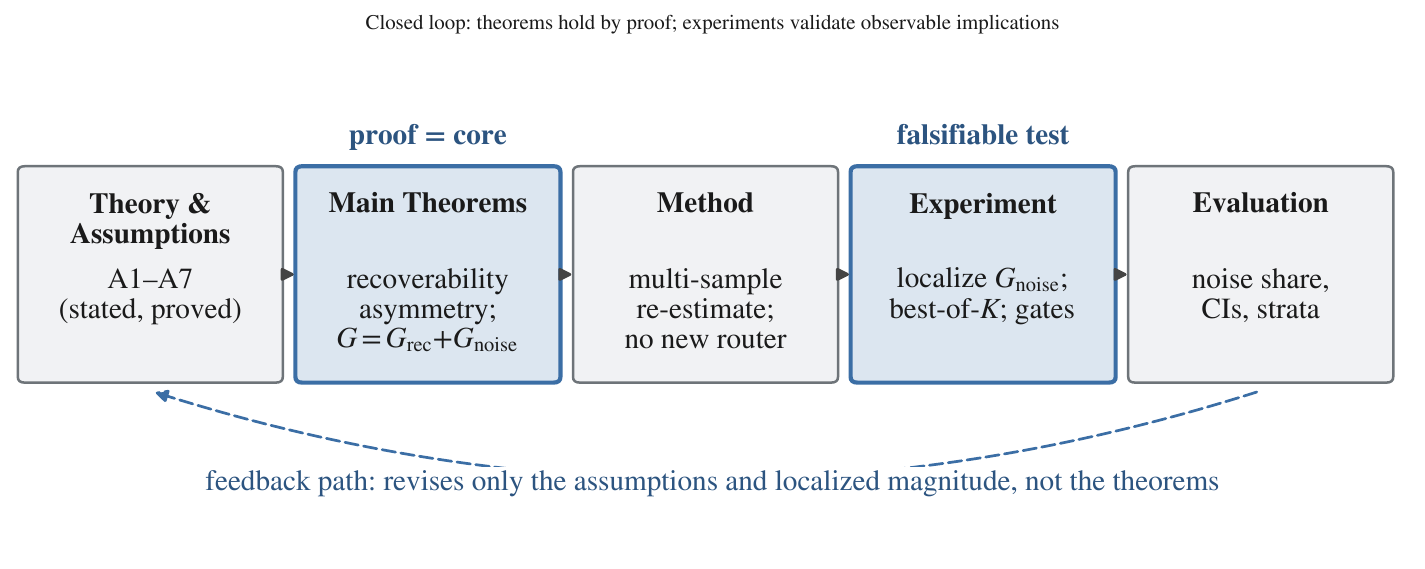}
\caption{\textbf{The paper as a closed loop.} Assumptions (A1--A7) yield the main
theorems by deduction---the recoverability asymmetry (Thm.~\ref{thm:recoverability})
and the exact decomposition $G=G_{\mathrm{rec}}+G_{\mathrm{noise}}$
(Thm.~\ref{thm:decomp}), shaded. The method only re-samples (no new router); the
experiment \emph{localizes} the magnitude of $G_{\mathrm{noise}}$ and runs the
falsifiable best-of-$K$ check. The dashed feedback path revises only the assumptions
and localized magnitude, not the theorems.}
\label{fig:framework}
\end{figure*}

This premise is operationalized through a \emph{per-instance oracle}: the model
that is best on each query, chosen in hindsight. RouterBench~\cite{routerbench}
and, at far larger scale, LLMRouterBench~\cite{llmrouterbench} report the gap
between deployed routers and this oracle as the field's central measure of
remaining opportunity. That gap---and its diagnosis as ``model-recall
failure'', the router's failure to select the lone correct specialist---now
motivates a growing literature, from preference-trained
routers~\cite{routellm} to unified routing--cascading theory~\cite{cascaderouting}
and sub-task routing~\cite{r2reasoner}.

Yet the oracle is computed from a \emph{single} correctness label per
(query, model). Under stochastic decoding (temperature $T>0$), that label is one
Bernoulli sample of a latent success probability, so the oracle---a maximum
over many such single draws---inherits the upward bias that inflates
best-of-$N$ estimates~\cite{bestofinf,winnerscurse,clark1961greatest,dontpassk}. Concurrent work has
begun to correct this oracle inflation by \emph{debiasing} it~\cite{capfrontier}; this work
instead asks which part of the routing gap is \emph{irrecoverable}---a reproducible
ceiling that no single-commit router can pass---versus recoverable only by test-time
resampling, and proves a \emph{recoverability asymmetry} the debiasing view does not
address. As a concrete
example, a pool of $10$ models that each solve a query with probability $0.1$
yields a single-draw oracle of $1-0.9^{10}\approx0.65$, yet no model is reliable
and a single-commit router can reach only $0.1$: of the apparent $0.65$, only $0.1$
is recoverable and $0.55$ is single-draw noise (Fig.~\ref{fig:example}).

\begin{figure}[t]
\centering
\includegraphics[width=\columnwidth]{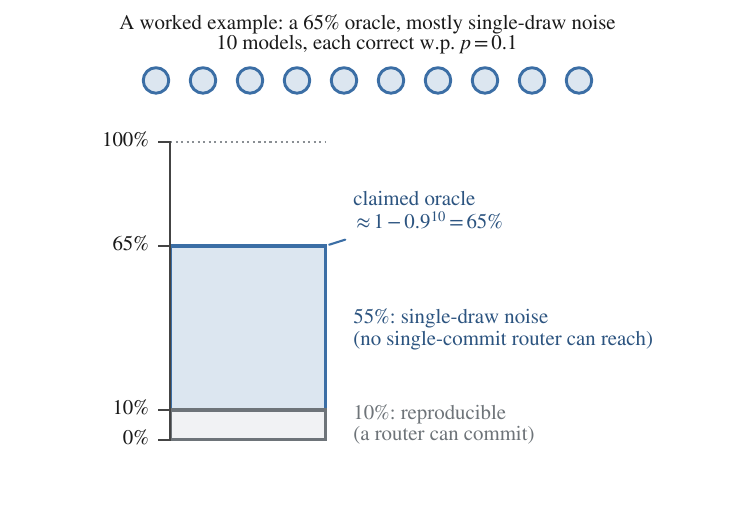}
\caption{A worked example. With $10$ models each correct with probability
$p=0.1$, the single-draw oracle reaches $1-0.9^{10}\approx65\%$, but only the
$10\%$ reproducible mass (grey) is reachable by a router that commits to one
model; the remaining $55\%$ (blue) is single-draw label noise no router can
capture.}
\label{fig:example}
\end{figure}

This paper asks exactly that. We re-estimate per-(query, model) correctness as a
probability from $k\!\ge\!10$ samples, define an expected and a reproducible
oracle, and decompose the router-to-oracle gap into recoverable specialist
advantage and single-draw label noise; Fig.~\ref{fig:framework} previews how the
proofs and the experiments fit together.

\textbf{Contributions.}
\begin{enumerate}
\item \textbf{Bias analysis (proved analytically; magnitude localized empirically).} This work \emph{proves} that the
single-draw per-instance routing oracle is an upward-biased estimator of any
reproducible ceiling under temperature $>0$, porting the max-of-noisy-Bernoulli
result of~\cite{bestofinf} from ensembling to the routing benchmark oracle. This work
further gives an exact closed form for the inflation and two analytic
quantifications of it: a noise share of the oracle $S(K)\!\uparrow\!1\!-\!p$ for
homogeneously hard queries (Cor.~\ref{cor:noiseshare}), and a lower bound showing
a $\Theta(1\!-\!\bar p)$ fraction of \emph{every} router's gap is single-draw
noise not recoverable by single-commit routing---though recoverable by test-time
sampling---whenever no model is reliable (Cor.~\ref{cor:gapfrac},
Thm.~\ref{thm:recoverability}). These
are theorems: they hold by deduction, independent of any experiment.
\item \textbf{Gap decomposition (main).} This work partitions the router-to-oracle
gap into (a) recoverable specialist advantage and (b) single-draw label
noise --- an exact, non-negative decomposition (Thm.~\ref{thm:decomp}) ---
quantifying how much of the gap and of ``model-recall
failure''~\cite{llmrouterbench} survives multi-sample re-estimation. The
experiment only \emph{localizes} the empirical magnitude of (b) on real pools;
that the gap \emph{contains} an analytically bounded noise term
(Cor.~\ref{cor:noiseshare}, \ref{cor:gapfrac}) is established mathematically above.
\item \textbf{Protocol + artifact.} This work defines the expected and reproducible
oracles (with a threshold variant) for routing benchmarks and releases a
corrected oracle and re-estimation code, together with the re-generated open-pool
per-(query,\,model) correctness tensors and the decomposition and robustness
outputs that reproduce the tables below.\footnote{Code, corrected oracles, and
the artifacts that reproduce every table (small correctness tensors, decomposition
and robustness outputs) are released at
\url{https://github.com/luka-krixvon/routing-oracle-experiment} (directory
\texttt{artifacts/}).}
\end{enumerate}
The inflation identified here is orthogonal to the deterministic measurement
artifacts studied by~\cite{unsolvability}, which uses greedy decoding and argues
against a multi-sample oracle; this paper shows why greedy decoding is insufficient under
temperature $>0$.

\textit{The remainder of this paper is organized as follows.} Section~\ref{sec:setup}
formalizes the oracles, the single-draw bias, and which part of it is
recoverable; Section~\ref{sec:methods} gives the
re-estimation and decomposition method; Section~\ref{sec:setup-exp} describes the
experimental setup; Section~\ref{sec:results} reports the recomputed gap and its
decomposition; and Section~\ref{sec:discussion} discusses implications for routing
research and benchmark design.

\section{Related Work}
We organize prior work along two axes: multi-LLM systems that exploit
complementarity, and the evaluation ceilings used to judge them.

\textbf{Multi-LLM systems.} The cost--quality view descends from API-selection and
cascading~\cite{cascaderouting,chen2024frugalgpt}. Preference-trained binary routers~\cite{routellm,ding2024hybridllm},
unified routing--cascading~\cite{cascaderouting}, and sub-task routing~\cite{r2reasoner}
select among models, whereas Mixture-of-Agents~\cite{moa} and compound multi-call systems~\cite{chen2024morellmcalls} fuse several outputs---a
premise contested by Self-MoA~\cite{selfmoa}; see~\cite{routingsurvey} for a survey.
All of these methods improve upon, or compare against, the per-instance oracle, yet
treat that oracle as ground truth.

\textbf{Evaluation ceilings and their bias.} RouterBench~\cite{routerbench},
RouterEval~\cite{routereval}, and the large-scale LLMRouterBench~\cite{llmrouterbench}
define and popularize the per-instance oracle and the ${\sim}20$-point gap
(``model-recall failure''). These ceilings are built from a \emph{single}
correctness draw per (query, model), and that draw is itself stochastic.
LLMRouterBench's oracle is a per-query \emph{union}---it deems a query solvable if
\emph{any} model produces a correct prediction in hindsight (Sec.~3.4
of~\cite{llmrouterbench}; \texttt{aggregators.py})---over predictions sampled at
temperature $0.2$~\cite{llmrouterbench}, i.e.\ a union of stochastic single
draws. RouterBench instead defines a best-single-model (argmax over per-model scores)
oracle; its decoding parameters are undisclosed, so the single-draw stochasticity is
plausible but unverified, and on binary correctness its argmax coincides numerically
with a union. In either construction the reported oracle $O^{\exp}$ is, under stochastic
decoding, an upward-biased estimate of a reproducible solve
probability~\cite{withinmodel,measuringnoises,dontpassk,agenticrandomness,greedynondet,evalvariance,reimers2017reporting}---a special
case of the classical winner's curse for maxima over noisy
estimates~\cite{winnerscurse,winnersinference,smith2006optimizers,efron2011tweedie,berk2013valid,doubleq}, with Best-of-$\infty$~\cite{bestofinf} exhibiting
precisely this for test-time ensembling. The reproducible oracle $O^{\mathrm{repro}}$
therefore sits below $O^{\exp}$, and the difference $G_{\mathrm{noise}}$ is single-draw
label noise rather than recoverable routing headroom. This work carries the statistical
observation to the routing-benchmark oracle; the observation itself is not claimed as novel.

Two recent works also argue the routing ceiling is inflated; they differ from this work as follows.
The Unsolvability Ceiling~\cite{unsolvability} recomputes RouterBench/LLMRouterBench-style
oracles, decomposes the gap, and proposes a corrected protocol---but its mechanism is
\emph{deterministic} (LLM-judge verbosity bias, truncation, format mismatch) under
\emph{greedy} ($T{=}0$) decoding, its decomposition axis is label misalignment vs.\
imbalance, and its fix is dual-judge / exact-match anchoring; it contains no
stochastic-sampling analysis. ``When Routing Collapses''~\cite{routingcollapse} attributes
the gap to a router-side objective--decision mismatch (routers degenerately default to
the most capable model) rather than to the oracle's measurement. This work therefore
frames oracle inflation as having \emph{two orthogonal sources}---deterministic evaluation
artifacts (prior work) and \emph{stochastic single-draw label noise at $T{>}0$} (this paper)---and
contributes the missing axis: a $k\!\ge\!10$ multi-sample re-estimation (per-model
solve-probability technique following~\cite{cansolveeasy}), a
reproducible-specialist-vs-single-draw-noise decomposition, and a multi-sample oracle
protocol (Table~\ref{tab:related}). A separate line uses multi-sample success
probabilities to \emph{build} routers~\cite{expectedreward,encodefailures} rather than to
audit the oracle.

\textbf{Concurrent decomposition work.} Two contemporaneous works decompose the oracle
gap along the same axis and clarify where the contribution of this work lies. Closest is the Capability
Frontier~\cite{capfrontier}, which independently observes that the per-instance oracle
takes a maximum over noisy sample means---inflating it through single-model and
single-run bias---and \emph{de-biases} it by extrapolating over many generations. This work
takes the opposite direction: rather than extrapolate the inflation away, it shows the single-run
component is an \emph{irreducible floor} for any single-commit router
(Thm.~\ref{thm:recoverability}(a)) and characterizes a \emph{recoverability
asymmetry}---the floor is uncapturable by selection yet closed by test-time
sampling~\cite{monkeys,selfconsistency}, and only up to a verifier-gated
ceiling~\cite{inferencelimits,lightman2024verify}---which the debiasing view does not address.
``When Does Combining LMs Help?''~\cite{combinelms} asks the same question as this paper but
through a \emph{deterministic} error-correlation (co-failure) ceiling, an axis
orthogonal to the stochastic single-draw axis studied here. The contribution of this work is thus the
irreducibility framing and the recoverability asymmetry---not the observation that
oracles are inflated, which is now shared.

\begin{table*}[t]
\caption{Where this work sits relative to the closest prior art ($\dagger$: the
Unsolvability Ceiling addresses \emph{deterministic} greedy-decoding artifacts, a
different axis).}
\label{tab:related}
\centering\small
\begin{tabular}{lcccccc}
\toprule
 & Questions & Multi-sample & Routing & Gap & Corrected & Recover. \\
 & oracle? & ($T{>}0$)? & specific? & decomp.? & protocol? & asymmetry? \\
\midrule
RouterBench~\cite{routerbench}         &            &            & \checkmark &            &            &            \\
RouterEval~\cite{routereval}           &            &            & \checkmark &            &            &            \\
LLMRouterBench~\cite{llmrouterbench}   &            &            & \checkmark &            &            &            \\
Best-of-$\infty$~\cite{bestofinf}      & \checkmark & \checkmark &            &            &            &            \\
Unsolvability~\cite{unsolvability}     & \checkmark & ---        & \checkmark & $\dagger$  & $\dagger$  &            \\
Capability Frontier~\cite{capfrontier} & \checkmark & \checkmark &            & \checkmark & \checkmark &            \\
Combining LMs~\cite{combinelms}        & \checkmark & ---        & \checkmark & \checkmark &            &            \\
\textbf{Ours}                          & \checkmark & \checkmark & \checkmark & \checkmark & \checkmark & \checkmark \\
\bottomrule
\end{tabular}
\end{table*}

\section{Problem Setup, Bias Analysis, and Recoverability}\label{sec:setup}
For a query $x_i$ and model $m\in\mathcal{M}=\{1,\dots,K\}$, correctness
under decoding temperature $T>0$ is a random variable
$B_{im}\sim\mathrm{Bernoulli}(p_{im})$, where $p_{im}$ is the
\emph{reproducible} probability that $m$ solves $x_i$. Existing
benchmarks~\cite{routerbench,llmrouterbench} record a single realization
$b_{im}\in\{0,1\}$.

\begin{assumption}\label{ass:all}
We use the following, naming each where it is invoked: \textbf{(A1)} for every
$(i,m)$ the draws are i.i.d.\ $\mathrm{Bernoulli}(p_{im})$; \textbf{(A2)} at every recorded draw (replicate), the draws are
independent across models for a fixed $i$; \textbf{(A3)} the recorded
benchmark label is one such draw; \textbf{(A4)} a per-query router commits to one
model $r(i)$ and is scored by that model's reproducible probability
$q^r_i=p_{i,r(i)}$; \textbf{(A5)} a randomized router's chosen model
$M_i\sim\pi_i$ is independent of the draw on which it is scored (fresh-draw,
conditionally-unbiased scoring); \textbf{(A6)} deliverable recovery of $\Delta_i$
by sampling requires a deploy-time verifier---absent one, only the aggregation
ceiling $O^{\mathrm{agg}}_i$ is reached; \textbf{(A7)} the re-generation records
seed-aligned $K$-tuples (the same draw index $j$ across models), which the
unbiased $\widehat{O^{\exp}}$ estimator requires.
\end{assumption}

\begin{table}[t]
\caption{Notation.}\label{tab:notation}\centering\small
\begin{tabular}{@{}l p{0.60\linewidth}@{}}
\toprule
Symbol & Meaning \\
\midrule
$p_{im}$ & reproducible success probability of model $m$ on query $i$ \\
$b_{im},\ \hat p_{im}$ & one recorded draw; $\hat p_{im}=\frac1k\sum_{j} b_{im}^{(j)}$ \\
$O^{\mathrm{single}}_i,\ O^{\exp}_i$ & single-draw oracle and its expectation \\
$O^{\mathrm{repro}}_i$ & reproducible (committable) oracle $\max_m p_{im}$ \\
$O^{\mathrm{thr}}_i(\tau)$ & threshold oracle $\mathbb{1}\{\max_m p_{im}\ge\tau\}$ \\
$q^r_i,\ G$ & router correctness; gap $\frac1N\sum_i(O^{\exp}_i-q^r_i)$ \\
$\Delta_i$ & per-query inflation $O^{\exp}_i-O^{\mathrm{repro}}_i\ge0$ \\
$G_{\mathrm{rec}},\ G_{\mathrm{noise}}$ & gap components (Thm.~\ref{thm:decomp}): $\frac1N\sum_i(O^{\mathrm{repro}}_i-q^r_i)$ and $\frac1N\sum_i\Delta_i$ \\
$O^{\mathrm{agg}}_i$ & verifier-free aggregation ceiling (majority vote, Lemma~\ref{lem:aggfloor}) \\
$O^{\exp,\perp}_i$ & $O^{\exp}_i$ under cross-model independence (A2), $1-\prod_m(1-p_{im})$ \\
$S(K)$ & noise share of the single-draw oracle, $\Delta_i/O^{\exp}_i$ (Cor.~\ref{cor:noiseshare}) \\
\bottomrule
\end{tabular}
\end{table}

We define the oracles per query:
\begin{align}
O^{\mathrm{single}}_i &= \max_m b_{im}, &\hspace{-9pt} \mathbb{E}[O^{\mathrm{single}}_i] &=: O^{\exp}_i,\\
O^{\mathrm{repro}}_i  &= \max_m p_{im}, &\hspace{-9pt} O^{\mathrm{thr}}_i(\tau) &= \mathbb{1}\!\left\{\max_m p_{im}\ge\tau\right\}.
\end{align}
Here $O^{\exp}_i:=\mathbb{E}[O^{\mathrm{single}}_i]$ is the expected single-draw
oracle under the draws' \emph{actual} (possibly cross-model dependent) coupling;
under independence \textnormal{(A2)} it takes the closed form
$O^{\exp}_i=1-\prod_m(1-p_{im})=:O^{\exp,\perp}_i$ (Prop.~\ref{prop:order}), which
Prop.~\ref{prop:dep} shows is the maximal-inflation case.
$O^{\mathrm{repro}}_i$ is the ceiling a per-query router can \emph{reproduce} by
committing to one model; $O^{\mathrm{thr}}$ is an auxiliary strict-reliability view
(reported across $\tau$). Intuitively, $O^{\exp}_i$ rewards any lucky correct draw
the pool occasionally produces, whereas a router must commit in advance and is
judged on its chosen model's reproducible behaviour; the non-negative gap
$O^{\exp}_i-O^{\mathrm{repro}}_i$ is headroom that exists only on paper and grows
with the pool, even when no model is genuinely reliable
(Fig.~\ref{fig:concept}, Fig.~\ref{fig:inflation}; the selection-versus-sampling
asymmetry is summarized in Fig.~\ref{fig:asymmetry}). We now make these statements
precise under Assumption~\ref{ass:all}; full proofs are in
Appendix~\ref{app:proofs}.

\subsection{The single-draw oracle is an inflated union}
We first fix what the recorded oracle measures and show it overstates any ceiling a
committed router can reproduce. Fix a query $i$ over the pool $\mathcal M=\{1,\dots,K\}$
at decoding temperature $T>0$, and let $b_{im}\in\{0,1\}$ be the single recorded draw of
model $m$, with reproducible success probability $p_{im}=\mathbb E[b_{im}]$ (assumption
(A1)). A router commits to one model $r(i)$ and is scored by that model's reproducible
probability, $q^r_i=p_{i,r(i)}$ (A4). Define the \emph{single-draw} (recorded) oracle
$O^{\mathrm{single}}_i=\max_m b_{im}$, its expectation $O^{\exp}_i=\mathbb E[O^{\mathrm{single}}_i]$,
and the \emph{reproducible ceiling} $O^{\mathrm{repro}}_i=\max_m p_{im}$.

\begin{proposition}[The recorded oracle dominates the reproducible ceiling]\label{prop:order}
For every query $i$ and \emph{any} joint law of the draws $(b_{i1},\dots,b_{iK})$ with the
given Bernoulli marginals $p_{im}$ --- in particular without cross-model independence (A2) ---
\[
O^{\exp}_i=\mathbb E\!\Big[\max_m b_{im}\Big]\ \ge\ \max_m p_{im}=O^{\mathrm{repro}}_i ,
\]
by monotonicity of the maximum. Under independence (A2) the recorded oracle has the closed
form $O^{\exp}_i=1-\prod_m(1-p_{im})$ and the upward bias is exactly
\[
\Delta_i:=O^{\exp}_i-O^{\mathrm{repro}}_i
=(1-p^\star_i)\Big(1-\!\!\prod_{m\neq m^\star}\!\!(1-p_{im})\Big)\ \ge\ 0,
\]
where $p^\star_i=O^{\mathrm{repro}}_i$ is attained at $m^\star$. The bias only grows with the
pool: for a homogeneous pool ($p_{im}\equiv p$) it is $\Delta_i=(1-p)-(1-p)^K\uparrow 1-p$
as $K\to\infty$. (Expectation/product form, sharp Fr\'echet bounds, and the monotone-inflation
limit are deferred to App.~\ref{app:proofs}, Prop.~\ref{prop:exp} and Thm.~\ref{thm:inflation}.)
\end{proposition}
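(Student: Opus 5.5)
The argument has three parts, each elementary: a pointwise domination that needs no independence, a factorisation of the union probability under (A2), and a one-line limit for the homogeneous pool.

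\textbf{Domination for any coupling.} Fix $i$ and let $m^\star\in\arg\max_m p_{im}$, so $p^\star_i=O^{\mathrm{repro}}_i$. Pointwise, on every outcome of the joint draw, $\max_m b_{im}\ge b_{im^\star}$. Taking expectations and using only the marginal $\mathbb E[b_{im^\star}]=p^\star_i$ gives $O^{\exp}_i\ge p^\star_i$. Since we only used monotonicity of expectation and one marginal, this holds for any joint law with the given marginals; (A2) is never invoked. I would also note that $O^{\exp}_i\ge p_{im}$ for every $m$, so the bound is tight exactly when the union event coincides almost surely with $\{b_{im^\star}=1\}$.

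\textbf{Closed form and bias factorisation under (A2).} Since the $b_{im}$ are binary, $\max_m b_{im}=1-\prod_m(1-b_{im})$. Independence factorises the expectation of the product, so
\[
O^{\exp}_i=1-\prod_m(1-p_{im}).
\]
For the bias, I would split off the $m^\star$ factor:
\[
1-O^{\exp}_i=(1-p^\star_i)\,P,\qquad P:=\prod_{m\ne m^\star}(1-p_{im}).
\]
Then
\[
\Delta_i=O^{\exp}_i-p^\star_i=(1-p^\star_i)-(1-p^\star_i)P=(1-p^\star_i)(1-P).
\]
Both factors lie in $[0,1]$, so $\Delta_i\ge0$, which recovers the first part as a special case. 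If the maximiser is not unique, any choice of $m^\star$ gives the same value.

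\textbf{Homogeneous pool.} Setting $p_{im}\equiv p$ gives $P=(1-p)^{K-1}$, so $\Delta_i=(1-p)-(1-p)^K$.
\begin{itemize}
\item If $0<p<1$, then $(1-p)^K$ strictly decreases to $0$ as $K$ grows. Hence $\Delta_i$ increases to $1-p$.
\item If $p\in\{0,1\}$, then $\Delta_i\equiv0$, which is consistent with the stated limit $1-p$ in both cases (the limit is $1$ when $p=0$ and $0$ when $p=1$, and the sequence is constant).
\end{itemize}

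The only genuine point of care is the first step. One must resist using the closed form there and argue pointwise, so that the inequality is visibly free of (A2). The rest is algebra.
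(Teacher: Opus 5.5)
Your argument is correct and matches the paper's proof step for step. The paper obtains $O^{\exp}_i\ge p_{im}$ for each $m$ and then maximises, which is the same as your pointwise domination $\max_m b_{im}\ge b_{im^\star}$ plus monotonicity of expectation. Your equality remark is its Step~2. The product form and the $(1-p^\star_i)(1-P)$ factorisation are its Step~3 and the proof of Thm.~\ref{thm:inflation}. The homogeneous limit is Cor.~\ref{cor:noiseshare}.

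One claim in your final bullet is false. At $p=0$ you correctly get $\Delta_i=(1-0)-(1-0)^K\equiv0$. But a constant zero sequence has limit $0$, not $1-p=1$, so it is not ``consistent with the stated limit.'' At $p=0$ the claim $\Delta_i\uparrow1-p$ genuinely fails: $(1-p)^K=1$ never decays, and an all-zero pool has no inflation for any $K$. The limit statement holds for $p\in(0,1)$, where $(1-p)^K\downarrow0$ strictly, and trivially at $p=1$. You should state that restriction explicitly rather than fold $p=0$ into the claim; the paper's Cor.~\ref{cor:noiseshare} likewise assumes $p\in(0,1)$.
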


\subsection{Exact gap decomposition and the noise share}
The ordering $O^{\exp}_i\ge O^{\mathrm{repro}}_i$ splits the router-to-oracle gap exactly into
a recoverable part and a non-negative noise part.

\begin{theorem}[Exact gap decomposition]\label{thm:decomp}
Fix a finite benchmark $i=1,\dots,N$ over $\mathcal{M}=\{1,\dots,K\}$, temperature $T>0$, and a per-query router $r$ with $q^r_i=p_{i,r(i)}$. With $b_{im}\in\{0,1\}$ the single realized draw,
\[
\begin{gathered}
O^{\mathrm{single}}_i=\max_{m} b_{im},\quad O^{\exp}_i=\mathbb{E}[O^{\mathrm{single}}_i],\\
O^{\mathrm{repro}}_i=\max_{m} p_{im},
\end{gathered}
\]
and $G=\tfrac1N\sum_i (O^{\exp}_i-q^r_i)$.

\textbf{(a) Exact decomposition (algebraic, assumption-free).}
\[
\begin{aligned}
G&=G_{\mathrm{rec}}+G_{\mathrm{noise}},\\
G_{\mathrm{rec}}&=\tfrac1N\!\sum_i(O^{\mathrm{repro}}_i-q^r_i),\\
G_{\mathrm{noise}}&=\tfrac1N\!\sum_i(O^{\exp}_i-O^{\mathrm{repro}}_i).
\end{aligned}
\]

\textbf{(b) Non-negativity of the noise term.} Under (A1) alone, $O^{\exp}_i-O^{\mathrm{repro}}_i\ge0$ for every $i$ (Jensen on the convex map $\max$; \emph{no} independence (A2) needed), hence $G_{\mathrm{noise}}\ge0$.

\emph{Equality (requires (A2)).} Under (A1)+(A2), $O^{\exp}_i=1-\prod_m(1-p_{im})$, and writing $P=\max_m p_{im}$ with $m^\star\in\arg\max_m p_{im}$,
\[
O^{\exp}_i-O^{\mathrm{repro}}_i=(1-P)\Bigl(1-\!\!\prod_{m\neq m^\star}\!\!(1-p_{im})\Bigr).
\]
Thus, under (A2), the per-query noise is $0$ \emph{iff} $P=1$ \emph{or} $p_{im}=0$ for every $m\neq m^\star$ (the empty product when $K=1$ giving noise $0$ for any $p$). Equivalently, $O^{\exp}_i=O^{\mathrm{repro}}_i$ iff $1-\prod_m(1-p_{im})=\max_m p_{im}$---a strictly larger set than $\{\max_m p_{im}\in\{0,1\}\}$ (e.g.\ $p_i=(0.5,0,0)$ gives noise $0$ at $\max_m p_{im}=0.5$). Without (A2), equality can also hold at non-degenerate marginals under positive coupling (e.g.\ comonotone draws), but $\ge0$ always holds.

\textbf{(c) Bound on the noise term (sum-bounds need only (A1)).} For every $i$, using the pointwise union bound $\max_m b_{im}\le\min\{\sum_m b_{im},1\}$ (any joint law) and (A1),
\[
0\le O^{\exp}_i-O^{\mathrm{repro}}_i\le\min\Bigl\{\textstyle\sum_m p_{im},1\Bigr\}-\max_m p_{im}\le1.
\]
These upper bounds do \emph{not} require (A2); only (A1) marginals and additivity of expectation enter. Under (A2) the sharper bound $O^{\exp}_i-O^{\mathrm{repro}}_i=(1-P)\bigl(1-\prod_{m\neq m^\star}(1-p_{im})\bigr)\le 1-P\le1$ also holds. Averaging,
\[
0\le G_{\mathrm{noise}}\le\tfrac1N\!\sum_i\Bigl(\min\{\textstyle\sum_m p_{im},1\}-\max_m p_{im}\Bigr)\le1.
\]
\end{theorem}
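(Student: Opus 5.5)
The plan is to prove the three parts in order, since each is a short deduction from the definitions plus one earlier fact.

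For part (a), I would add and subtract $O^{\mathrm{repro}}_i$ inside each summand of $G$:
\[
O^{\exp}_i-q^r_i=(O^{\mathrm{repro}}_i-q^r_i)+(O^{\exp}_i-O^{\mathrm{repro}}_i).
\]
Averaging over $i$ and using linearity of the finite sum gives $G=G_{\mathrm{rec}}+G_{\mathrm{noise}}$. No probabilistic assumption enters, because each term is a well-defined real number.

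For the inequality in part (b), I would reuse Proposition~\ref{prop:order}. For every $m'$ we have $\max_m b_{im}\ge b_{im'}$ pointwise, so taking expectations under (A1) gives $O^{\exp}_i\ge p_{im'}$. Maximizing over $m'$ gives $O^{\exp}_i\ge O^{\mathrm{repro}}_i$. This is the Jensen step $\mathbb E\max\ge\max\mathbb E$, and it holds for any coupling. Summing then gives $G_{\mathrm{noise}}\ge0$.

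For the equality clause under (A2), independence gives $\Pr[\max_m b_{im}=0]=\prod_m(1-p_{im})$, hence the closed form. Factoring out $(1-P)$ from $1-\prod_m(1-p_{im})-P$ yields
\[
(1-P)-(1-P)\prod_{m\neq m^\star}(1-p_{im}),
\]
which is the stated product. The factor $1-\prod_{m\neq m^\star}(1-p_{im})$ vanishes iff every $p_{im}=0$ for $m\neq m^\star$. This gives the ``iff'' characterization, with the empty product equal to $1$ when $K=1$. The example $(0.5,0,0)$ is checked directly. For the remark without (A2), I would exhibit comonotone draws: take one uniform $U$ and set $b_{im}=\mathbb 1\{U\le p_{im}\}$. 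Then $\max_m b_{im}=\mathbb 1\{U\le P\}$, so $O^{\exp}_i=P$ even with nondegenerate marginals.

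For part (c), I would use the pointwise bound $\max_m b_{im}\le\min\{\sum_m b_{im},1\}$, valid for $\{0,1\}$ values under any joint law. I would take expectations, applying $\mathbb E\min\{X,1\}\le\min\{\mathbb EX,1\}$, which follows because $\min$ is concave or simply because both $\mathbb E\min\{X,1\}\le\mathbb EX$ and $\mathbb E\min\{X,1\}\le1$ hold. Additivity then gives $\mathbb E\sum_m b_{im}=\sum_m p_{im}$. Subtracting $P$ gives the middle bound. The final bound $\le1$ follows from $\min\{\cdot,1\}\le1$ and $P\ge0$. The sharper (A2) bound follows from the product form, since the second factor lies in $[0,1]$. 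Averaging over $i$ finishes the proof.

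I expect the main obstacle to be the precise ``iff'' statement in (b). There I must treat the edge cases carefully: $P=1$, ties in the argmax (any choice of $m^\star$ works because the product form is symmetric once $P$ is fixed), and $K=1$.
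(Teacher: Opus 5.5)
Your proposal is correct and follows essentially the same route as the paper. It uses the add-and-subtract identity for (a), pointwise dominance (equivalently Jensen) for the sign in (b), the $(1-P)$ factoring plus the comonotone coupling for the equality clause, and the pointwise union bound with $\mathbb{E}\min\{X,1\}\le\min\{\mathbb{E}X,1\}$ for (c). Your direct justification of the final $\le 1$, from $\min\{\cdot,1\}\le1$ and $P\ge0$, is actually cleaner than the paper's proof, which in the coupling-free case only writes out the weaker bound $\sum_{m\neq m^\star}p_{im}\le K-1$.
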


Part~(a) is a definitional identity; its substance is the non-negativity in
part~(b)---elementary, from Jensen / monotonicity of $\max$---together with the
attainability just below. The best committable router attains the ceiling: $r^\star(i)=\arg\max_m p_{im}$ gives
$q^{r^\star}_i=O^{\mathrm{repro}}_i$, so $G_{\mathrm{rec}}$ is a genuine, vanishing regret and
$G_{\mathrm{noise}}$ is the residual gap of the \emph{best} committable router itself
(App.~\ref{app:proofs}, Lem.~\ref{lem:attain}). The next two corollaries quantify
$G_{\mathrm{noise}}$ in closed form, with no appeal to any measurement.

\begin{corollary}[Noise share of the single-draw oracle; homogeneous pool]\label{cor:noiseshare}
Under (A1)+(A2), suppose query $i$ is \emph{homogeneously hard}: $p_{im}=p\in(0,1)$ for all $m\in\mathcal{M}$, $|\mathcal{M}|=K$. Then the inflation is in exact closed form
\[
\Delta(K):=O^{\exp}_i-O^{\mathrm{repro}}_i=(1-p)-(1-p)^K,
\]
strictly increasing in $K$ with exact remainder $(1-p)-\Delta(K)=(1-p)^K$, hence $\Delta(K)\uparrow 1-p$. The \emph{noise share} of the single-draw oracle, $S(K):=\Delta(K)/O^{\exp}_i=1-\dfrac{p}{1-(1-p)^K}$, obeys the exact identity
\[
1-S(K)=\frac{p}{1-(1-p)^K}=\frac{1}{\sum_{j=0}^{K-1}(1-p)^j},
\]
is strictly increasing with $S(1)=0$, and is bracketed by $(1-p)\bigl(1-(1-p)^{K-1}\bigr)\le S(K)<1-p$ with $S(K)\uparrow 1-p$. Consequently $S(K)\ge 1-2p$ as soon as $K\ge\bigl\lceil \ln 2/\ln\tfrac1{1-p}\bigr\rceil=\Theta(1/p)$; in the doubly hard limit $K\to\infty$ and $p\to0$ (no single model is reliable), $S(K)\to1$: the single-draw oracle is asymptotically \emph{all} label noise.
\end{corollary}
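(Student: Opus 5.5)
The plan is to reduce everything to Prop.~\ref{prop:order} with $p_{im}\equiv p$ and then handle the ratio $S(K)$ through one geometric-series identity. Write $q:=1-p\in(0,1)$.

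\textbf{Closed form.} Under (A1)+(A2), Prop.~\ref{prop:order} gives $O^{\exp}_i=1-q^K$ and $O^{\mathrm{repro}}_i=p$. Hence
\[
\Delta(K)=1-q^K-p=q-q^K .
\]
The remainder $(1-p)-\Delta(K)=q^K$ is then immediate. Since $0<q<1$, the sequence $q^K$ is strictly decreasing to $0$, so $\Delta(K)$ strictly increases to $q=1-p$.

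\textbf{Noise share and its identity.} We have
\[
S(K)=\frac{\Delta(K)}{1-q^K}=1-\frac{p}{1-q^K}.
\]
Using $1-q^K=(1-q)\sum_{j=0}^{K-1}q^j=p\sum_{j=0}^{K-1}q^j$ gives
\[
1-S(K)=\frac{1}{\sum_{j=0}^{K-1}q^j}.
\]
Every step of the argument follows from this identity:
\begin{itemize}
\item The denominator is strictly increasing in $K$ with positive terms, so $S$ is strictly increasing.
\item At $K=1$ the sum equals $1$, so $S(1)=0$.
\item The sum is strictly below $\sum_{j\ge0}q^j=1/p$, so $1-S(K)>p$, i.e.\ $S(K)<1-p$.
\item The sum converges to $1/p$, so $S(K)\uparrow 1-p$.
\end{itemize}

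\textbf{Lower bracket.} We need $S(K)\ge q(1-q^{K-1})=q-q^K$, i.e.\ $S(K)\ge\Delta(K)$. This holds because $S(K)=\Delta(K)/O^{\exp}_i$ with $0<O^{\exp}_i\le1$ and $\Delta(K)\ge0$. This is the step to watch: it looks like it needs computation, but it is just dividing by a number at most one. Checking directly is also easy: $S(K)=(q-q^K)/(1-q^K)\ge q-q^K$.

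\textbf{Threshold $K$.} From the lower bracket, $S(K)\ge q-q^K$. It therefore suffices to have $q-q^K\ge 1-2p=q-p$, i.e.\ $q^K\le p$.

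The stated threshold $K\ge\ln2/\ln(1/q)$ only yields $q^K\le 1/2$. That suffices for $p\ge1/2$, but for small $p$ it is too weak. I would handle this by using the exact form instead. We have $S(K)\ge1-2p$ iff $p/(1-q^K)\le 2p$, iff $1-q^K\ge 1/2$, iff $q^K\le1/2$, iff $K\ge\ln2/\ln(1/q)$. So the exact identity yields exactly the stated threshold, and the lower bracket is not needed here.

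For the order of growth, $\ln(1/q)=-\ln(1-p)\in[p,\,p/(1-p)]$, so the threshold is $\Theta(1/p)$ as $p\to0$.

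\textbf{Doubly hard limit.} We have $0\le 1-S(K)=p/(1-q^K)$. I would handle the limit by requiring $Kp\to\infty$, or simply by taking the iterated limit $K\to\infty$ first and then $p\to0$:
\[
\lim_{K\to\infty}S(K)=1-p\longrightarrow 1 .
\]
If $K$ and $p$ vary jointly, one needs $q^K\not\to1$, i.e.\ $Kp$ bounded away from $0$, and then $1-S\le p/(1-e^{-Kp})\to0$. I would state the limit in the iterated sense, matching the phrase ``$K\to\infty$ and $p\to0$'' in the statement.
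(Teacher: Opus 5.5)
Your proof is correct and follows the paper's route: the same closed form, the same geometric-series identity $1-S(K)=1/\sum_{j=0}^{K-1}(1-p)^j$ for monotonicity, $S(1)=0$ and the upper bracket, and the same exact-form equivalence $S(K)\ge 1-2p\iff(1-p)^K\le\tfrac12$ for the threshold. You are right that the lower bracket alone would need the stronger condition $(1-p)^K\le p$; the paper does not use it there either.

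The one real difference is the lower bracket. The paper multiplies through by $1-(1-p)^K$ and reduces the claim to $0\le(1-p)^{K+1}\bigl(1-(1-p)^{K-1}\bigr)$. You instead notice that $(1-p)\bigl(1-(1-p)^{K-1}\bigr)=\Delta(K)$. The bracket then reduces to $S(K)=\Delta(K)/O^{\exp}_i\ge\Delta(K)$, because $O^{\exp}_i\le1$. This is shorter, and it shows that the bracket is just the un-normalized inflation.

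Your side remark on the joint limit is wrong, although your iterated-limit argument, which is exactly the paper's, stands. No condition on $Kp$ is needed. Your own identity gives $\sum_{j=0}^{K-1}(1-p)^j=\bigl(1-(1-p)^K\bigr)/p\ge(1-e^{-1})\min\{K,1/p\}$. Hence $1-S(K)\le\frac{e}{e-1}\max\{p,1/K\}\to0$ along every path with $K\to\infty$ and $p\to0$. For example, $p=K^{-2}$ has $(1-p)^K\to1$, yet $1-S(K)\approx1/K\to0$. So you could have stated the stronger joint-limit version for free rather than suggesting it can fail.
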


\begin{corollary}[Noise fraction of the routing gap not recoverable by single-commit routing]\label{cor:gapfrac}
Under (A1)+(A2), suppose (H1) no model is reliable on query $i$: $p_{im}\le\bar p<1$ for all $m$ (so any committed router has $q^r_i\le O^{\mathrm{repro}}_i\le\bar p$); and (H2) at least $L'\ge1$ models other than a fixed maximizer $m^\star$ satisfy $p_{im}\ge\underline p>0$. Then, with $c:=(1-\bar p)\bigl(1-(1-\underline p)^{L'}\bigr)>0$, the per-query inflation obeys $\Delta_i\ge c$, and for every per-query router whose gap $g_i=(O^{\mathrm{repro}}_i-q^r_i)+\Delta_i$ is positive,
\[
\frac{\Delta_i}{g_i}\;\ge\;\frac{c}{\bar p+c}\;\xrightarrow[L'\to\infty]{}\;1-\bar p .
\]
Using $1-(1-\underline p)^{L'}\ge 1-e^{-\underline p L'}$, already $\underline p L'\ge 3$ forces $\Delta_i\ge 0.95\,(1-\bar p)$. Thus when no single model is reliable, a $\Theta(1-\bar p)$ fraction of \emph{every} router's gap to the single-draw oracle is single-draw label noise that no single-commit router can recover (though test-time sampling can; Thm.~\ref{thm:recoverability}), not routing suboptimality.
\end{corollary}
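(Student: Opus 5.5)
The corollary follows from the closed form for the inflation in Prop.~\ref{prop:order} / Thm.~\ref{thm:decomp}(b) under (A1)+(A2), plus a monotone comparison. With $P=\max_m p_{im}$ attained at $m^\star$,
\[
\Delta_i=(1-P)\Bigl(1-\prod_{m\neq m^\star}(1-p_{im})\Bigr).
\]
By (H1), $P\le\bar p$, so $1-P\ge 1-\bar p$. By (H2), at least $L'$ indices $m\neq m^\star$ have $1-p_{im}\le 1-\underline p$. Every other factor is at most $1$. Hence $\prod_{m\neq m^\star}(1-p_{im})\le(1-\underline p)^{L'}$, and multiplying the two nonnegative lower bounds gives $\Delta_i\ge(1-\bar p)\bigl(1-(1-\underline p)^{L'}\bigr)=c$. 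Positivity of $c$ is immediate from $\bar p<1$, $\underline p>0$ and $L'\ge1$.

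For the ratio, I will use the decomposition $g_i=(O^{\mathrm{repro}}_i-q^r_i)+\Delta_i$ from Thm.~\ref{thm:decomp}(a). Since $q^r_i=p_{i,r(i)}\ge0$ and $O^{\mathrm{repro}}_i\le\bar p$, the recoverable part satisfies $0\le O^{\mathrm{repro}}_i-q^r_i\le\bar p$. The map $t\mapsto t/(a+t)$ is nondecreasing in $t\ge0$ for $a\ge0$, so
\[
\frac{\Delta_i}{g_i}\ \ge\ \frac{\Delta_i}{\bar p+\Delta_i}\ \ge\ \frac{c}{\bar p+c}.
\]
The first inequality enlarges the denominator using the bound on the recoverable part; the second applies monotonicity with $\Delta_i\ge c$. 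Because $c>0$, both denominators are positive, so the hypothesis $g_i>0$ holds automatically and is needed only for the ratio to be defined.

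For the limit, $(1-\underline p)^{L'}\to0$ as $L'\to\infty$, so $c\to 1-\bar p$ and $c/(\bar p+c)\to(1-\bar p)/(\bar p+1-\bar p)=1-\bar p$. The quantitative remark follows from $1-\underline p\le e^{-\underline p}$, which gives $1-(1-\underline p)^{L'}\ge 1-e^{-\underline pL'}$. When $\underline pL'\ge3$ this is at least $1-e^{-3}\approx0.950$, so $\Delta_i\ge0.95(1-\bar p)$. The claim ``$\Theta(1-\bar p)$ fraction'' follows because $c/(\bar p+c)\ge c\ge 0.95(1-\bar p)$, using $\bar p+c\le\bar p+(1-\bar p)=1$.

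None of these steps is a real obstacle. The only point needing care is the direction of monotonicity in the ratio bound: we must bound the recoverable part above, by $\bar p$, and $\Delta_i$ below, by $c$, without assuming any particular router. I will also note that (H2) is stated relative to a fixed maximizer $m^\star$. The argument only requires that the $L'$ models are distinct from whichever maximizer appears in the product formula, so ties in the argmax cause no issue: choose $m^\star$ in the closed form to be that fixed maximizer.
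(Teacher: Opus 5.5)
Your proof is correct and matches the paper's argument. The paper likewise lower-bounds $\Delta_i$ from the (A2) closed form by dropping the factors outside the $L'$ models, then minimizes $t/(s+t)$ over $s\in[0,\bar p]$, $t\ge c$ at the corner $(\bar p,c)$, which is the same monotonicity argument as your two-step chain, and it uses $1-x\le e^{-x}$ for the $0.95$ remark; your added step $c/(\bar p+c)\ge c$ (from $\bar p+c\le 1$) is valid and makes the $\Theta(1-\bar p)$ reading explicit.
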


\subsection{What the noise is---and is not: a recoverability asymmetry}
Given that floor, is the noise term recoverable at all? The answer is an asymmetry: it is
uncapturable along the model-\emph{selection} axis---no single-commit router, even
randomized, beats $O^{\mathrm{repro}}_i$---yet recoverable along the \emph{sampling} axis, by
re-drawing a committed model. We score a possibly randomized single-commit router by the
linear extension of (A4): the router draws a model $M_i\sim\pi_i$ from a selection law
$\pi_i$ on $\mathcal M$, commits to it, and is scored $\mathrm{score}_i(\pi)=\sum_m\pi_i(m)\,p_{im}$,
with $M_i$ independent of the draw on which it is scored (fresh-draw scoring, (A5)). Under
(A5) the reported router accuracy equals $\mathrm{score}_i(\pi)$ by the tower rule, so the
reported router-to-oracle gap is exactly the quantity capped at $O^{\mathrm{repro}}_i$
in Theorem~\ref{thm:recoverability}(a). \emph{(Verified on LLMRouterBench: routers train on
the $0.8$ split and select a model from learned features on a held-out test split, scored on
that model's recorded label; the choice does not use the scored draw, so (A5) holds and
reported accuracy $=\mathrm{score}_i(\pi)$.)}

\begin{theorem}[Recoverability asymmetry of the single\hyp draw oracle gap]\label{thm:recoverability}
Fix a query $i$ over the finite pool $\mathcal M=\{1,\dots,K\}$ at decoding temperature $T>0$. Let $(B_{i1},\dots,B_{iK})$ be the single recorded draws, $B_{im}\sim\mathrm{Bernoulli}(p_{im})$ (assumptions \textnormal{(A1)} marginal $+$ \textnormal{(A3)} one\hyp recorded\hyp label), under an \emph{arbitrary} joint law with these marginals \textnormal{(A2) is dropped}. Write $O^{\mathrm{single}}_i=\max_m B_{im}$, $O^{\exp}_i=\mathbb E[O^{\mathrm{single}}_i]$ (Prop.~\ref{prop:order}, general/no\hyp(A2)), $O^{\mathrm{repro}}_i=\max_m p_{im}$, and $\Delta_i=O^{\exp}_i-O^{\mathrm{repro}}_i$, with aggregate $G_{\mathrm{noise}}=\tfrac1N\sum_i\Delta_i$ (Thm.~\ref{thm:decomp}). Adopt the single\hyp commit scoring convention \textnormal{(A4)} extended by linearity \textnormal{(A4\hyp linear)}: a router draws $M_i\sim\pi_i$ on $\mathcal M$, commits to that one model, and is scored $\mathrm{score}_i(\pi)=\sum_m\pi_i(m)\,p_{im}\le\max_m p_{im}=O^{\mathrm{repro}}_i$ (the simplex maximum, attained at the vertex $\delta_{m^\star}$), under \textnormal{(A5)} that the router's choice $M_i$ is independent of the draw on which it is scored (fresh\hyp draw / conditionally\hyp unbiased scoring). Then $G_{\mathrm{noise}}$ obeys an \emph{asymmetry along its two routing axes (selection vs.\ sampling)}:
\begin{enumerate}
\item[\textnormal{(a)}] \textbf{(SELECTION is capped --- any coupling, no (A2).)} For \emph{every} single\hyp commit router $\pi$ (deterministic or randomized, feature/history\hyp dependent) and \emph{every} joint law of the draws,
\[
\mathrm{score}_i(\pi)\ \le\ O^{\mathrm{repro}}_i\ =\ O^{\exp}_i-\Delta_i,\qquad \Delta_i\ge0,
\]
with $\Delta_i\ge0$ by multivariate Jensen / monotonicity of $\max$ \textnormal{((A1)} only\textnormal{)} and equality $\mathrm{score}_i(\pi)=O^{\mathrm{repro}}_i$ iff $\pi_i$ is supported on $\arg\max_m p_{im}$, while $O^{\exp}_i=O^{\mathrm{repro}}_i$ iff every model's success event lies a.s.\ within a maximizer's (which the comonotone / Fr\'echet\hyp upper coupling realizes, but does not uniquely require). Consequently the per\hyp query gap to the single\hyp draw oracle is at least $\Delta_i$, and, aggregating, for every single\hyp commit router and every coupling
\[
\tfrac1N\textstyle\sum_i\bigl(O^{\exp}_i-\mathrm{score}_i(\pi)\bigr)\ \ge\ G_{\mathrm{noise}}.
\]
No selection, mixing, or portfolio re\hyp weighting of the $K$ models recovers any part of $\Delta_i$.
\item[\textnormal{(b)}] \textbf{(SAMPLE lifts, at matched budget --- (A1) only.)} Committing to a maximizer $m^\star$ (with $p^\star:=p_{im^\star}=O^{\mathrm{repro}}_i$) and drawing it $n$ times i.i.d.\ attains the best\hyp of\hyp$n$ ceiling $O^{\mathrm{repro},(n)}_i=1-(1-O^{\mathrm{repro}}_i)^{n}\uparrow1$ whenever $O^{\mathrm{repro}}_i>0$. \emph{At budget $n=K$ matched to the oracle's own $K$\hyp draw cost}, best\hyp of\hyp$K$ on $m^\star$ weakly dominates the independent\hyp pool single\hyp draw oracle $O^{\exp,\perp}_i$, i.e.\ the value of $O^{\exp}_i$ under cross\hyp model independence \textnormal{(A2)}:
\[
\begin{aligned}
O^{\mathrm{repro},(K)}_i\ =\ 1-(1-p^\star)^{K}&\ \ge\ 1-\prod_{m}(1-p_{im})\\
&\ =\ O^{\exp,\perp}_i,
\end{aligned}
\]
because $\min_m(1-p_{im})^{K}\le\prod_m(1-p_{im})$, with equality iff the pool is homogeneous ($p_{im}\equiv p^\star$). The lift uses only \textnormal{(A1)} through the upper bound $\Pr(\text{all }n\text{ fail})\le(1-p)^n$, so positive within\hyp cell dependence only slows it; under \textnormal{(A1)+(A2)} single\hyp pass pool ensembling attains $O^{\exp,\perp}_i=O^{\exp}_i$ exactly.

\emph{Operational reading \textnormal{(A6)}.} The lift is deliverable ``recovery'' of $\Delta_i$ \emph{only on verifier\hyp equipped cells}, where a correct draw is selected post hoc. Without a verifier, draw\hyp grounded aggregation (e.g.\ majority vote) delivers up to $O^{\mathrm{agg}}_i$ (Lemma~\ref{lem:aggfloor}); only $\Delta^{\mathrm{know}}_i=O^{\mathrm{agg}}_i-O^{\mathrm{repro}}_i$ is reclaimed and the residual $\Delta^{\mathrm{guess}}_i=O^{\exp}_i-O^{\mathrm{agg}}_i$ remains (Cor.~\ref{cor:scopefloor}).
\item[\textnormal{(c)}] \textbf{(Orthogonality.)} $\Delta_i$ (hence $G_{\mathrm{noise}}$) is a deterministic functional of $\{p_{im}\}$ on the \textnormal{SELECTION} axis: invariant to which/how models are selected or mixed, and annihilated along the \textnormal{SAMPLE} axis ($n\to\infty$ on the committed model, or ensembling). ``Not recoverable'' is correctly scoped to single\hyp commit model \emph{selection}, modulo the \textnormal{(A6)} guessing floor of part (b).
\end{enumerate}
(The appendix general form, Theorem~\ref{thm:genrecov} $+$ Corollary~\ref{cor:couplingfree}, gives the arbitrary\hyp coupling cap, the coupling\hyp free Fr\'echet bracket $0\le\Delta_i\le\min\{\sum_m p_{im},1\}-\max_m p_{im}$, and the FKG upper\hyp envelope scope floor.)
\end{theorem}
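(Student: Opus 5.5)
The plan is to prove the three parts in order, because each one uses only elementary facts already available from Proposition~\ref{prop:order} and Theorem~\ref{thm:decomp}(b).

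\textbf{Part (a).} First I will show $\mathrm{score}_i(\pi)\le O^{\mathrm{repro}}_i$. By (A4-linear), $\mathrm{score}_i(\pi)=\sum_m\pi_i(m)p_{im}$ is a convex combination of the $p_{im}$, so it is at most $\max_m p_{im}$. Equality holds iff $\pi_i$ puts no mass off $\arg\max_m p_{im}$: any mass on a strictly smaller $p_{im}$ strictly lowers the average.

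To cover feature/history-dependent routers, I use (A5). Conditional on everything the router sees, $M_i$ is independent of the scored draw. The tower rule then gives $\mathbb E[B_{iM_i}]=\mathbb E[\sum_m \pi_i(m)p_{im}]$, with $\pi_i$ possibly random. Applying the pointwise bound inside the expectation gives the same cap.

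Next, $\Delta_i\ge0$ follows from Proposition~\ref{prop:order}, which needs no (A2): for each $m$, $\max_{m'}B_{im'}\ge B_{im}$; take expectations and then the max over $m$.

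For the characterization of $O^{\exp}_i=O^{\mathrm{repro}}_i$, write
\[
O^{\exp}_i-p_{im^\star}=\Pr\bigl(\textstyle\max_m B_{im}=1,\ B_{im^\star}=0\bigr).
\]
This vanishes iff a.s.\ $\{B_{im}=1\}\subseteq\{B_{im^\star}=1\}$ for all $m$, taking $m^\star$ as a suitable maximizer. The comonotone coupling, which thresholds a common uniform variable, realizes this nesting.

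The aggregate inequality then follows by averaging $O^{\exp}_i-\mathrm{score}_i(\pi)\ge O^{\exp}_i-O^{\mathrm{repro}}_i=\Delta_i$ over $i$.

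\textbf{Part (b).} Draw $m^\star$ $n$ times i.i.d.\ under (A1). The probability that all draws fail is $(1-p^\star)^n$, so
\[
O^{\mathrm{repro},(n)}_i=1-(1-p^\star)^n\uparrow1 \quad\text{whenever } p^\star>0.
\]
If the within-cell draws are only positively dependent, I use instead the bound $\Pr(\text{all fail})\le(1-p^\star)^n$ as stated. This direction requires care, so I will phrase it as the stated assumption rather than derive it.

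For matched budget $n=K$, since $1-p^\star\le1-p_{im}$ for every $m$,
\[
(1-p^\star)^K=\prod_m(1-p^\star)\le\prod_m(1-p_{im}),
\]
which gives $O^{\mathrm{repro},(K)}_i\ge O^{\exp,\perp}_i$. Equality holds iff every factor matches, that is, iff $p_{im}=p^\star$ for all $m$. The closed form $O^{\exp,\perp}_i=O^{\exp}_i$ under (A2) is already given by Proposition~\ref{prop:order}.

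The operational reading under (A6) is definitional once Lemma~\ref{lem:aggfloor} supplies $O^{\mathrm{agg}}_i$. The split $\Delta_i=\Delta^{\mathrm{know}}_i+\Delta^{\mathrm{guess}}_i$ is a telescoping identity, so I will only cite it.

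\textbf{Part (c).} $\Delta_i$ is defined through $\{p_{im}\}$ and the coupling, not through $\pi$, so it is invariant under selection or mixing. Its annihilation along the sampling axis is part (b) with $n\to\infty$, since $O^{\mathrm{repro},(n)}_i\to1\ge O^{\exp}_i$.

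\textbf{Main obstacle.} I expect the hardest step to be the rigorous version of (a) for adaptive routers. The point to get right is that the scored label must be a fresh draw independent of $M_i$. If the router could see the scored draw, it could achieve $O^{\exp}_i$ by picking a model whose recorded draw succeeded. I will state explicitly that (A5) is exactly what rules this out, and handle it with a conditional-expectation argument given the router's information $\sigma$-field.
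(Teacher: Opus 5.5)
Your argument follows the paper's proof essentially step for step. The cap in (a) is the convex-combination bound of Lemma~\ref{lem:mixedrouter}, and your (A5)/tower-rule treatment of adaptive routers is what the paper invokes. $\Delta_i\ge0$ and the nesting characterization are Proposition~\ref{prop:order}. The matched-budget comparison is the same factor-by-factor inequality $1-p_{im}\ge1-p^\star$, and (c) follows from (a) and the $n\to\infty$ limit. Your remark that $\Delta_i$ depends on the coupling, not only on $\{p_{im}\}$, is more accurate than the statement's wording.

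One fine-print step does fail, and the paper's proof has the same defect. ``Equality iff every factor matches'' is valid only when the common lower bound $1-p^\star$ is strictly positive. If $p^\star=1$, the $m^\star$ factor vanishes, so $\prod_m(1-p_{im})=0=(1-p^\star)^K$ for every pool, homogeneous or not. The correct equality condition is $p^\star=1$ or $p_{im}\equiv p^\star$.

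The more substantive point is the positive-dependence clause. Your instinct that ``this direction requires care'' is right, but adopting $\Pr(\text{all } n \text{ fail})\le(1-p^\star)^n$ as an assumption is the wrong fix, because positive within-cell dependence forces the \emph{opposite} inequality. For $n=2$, $\Pr(\text{both fail})=(1-p^\star)^2+\mathrm{Cov}(b^{(1)},b^{(2)})\ge(1-p^\star)^2$. In general, positive association passes to the failure indicators, and the same induction as in the proof of Proposition~\ref{prop:dep}(c) gives $\mathbb E\bigl[\prod_t(1-b^{(t)})\bigr]\ge(1-p^\star)^n$. Your assumed bound is therefore compatible with positive association only when it holds with equality.

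This is exactly why the lift ``slows'', and it means the matched-budget dominance is \emph{not} preserved. With cached (identical) draws, best-of-$K$ equals $p^\star$, which is strictly below $O^{\exp,\perp}_i$ as soon as $p^\star<1$ and a second model has $p_{im}>0$. The paper's sentence ``positive within-cell dependence only lowers this, preserving the bound'' has the direction reversed. Instead, prove the dominance under (A1), where the all-fail probability is exactly $(1-p^\star)^n$, and state explicitly that positive within-cell dependence can break it.
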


\medskip\noindent\textbf{Deferred results.}
Four supporting results are deferred to App.~\ref{app:proofs}. \emph{Finite-sample
consistency} (Thm.~\ref{thm:finitek}): the plug-in oracles are $O(1/\sqrt k)$-consistent, and
$k{=}1$ is the degenerate, non-identifiable case ($\widehat{O^{\mathrm{repro}}}_i=\widehat{O^{\exp}}_i$),
which is why we re-generate $k\ge30$ draws rather than audit a released single-draw number.
\emph{Robustness to dependence} (Prop.~\ref{prop:dep}): the sign and the selection cap need no
independence, and positive cross-model correlation only lowers $O^{\exp}_i$, so the independent
product is a conservative upper envelope. \emph{Verifier-free scope} (Lem.~\ref{lem:aggfloor},
Cor.~\ref{cor:scopefloor}): without a deploy-time verifier, sampling reaches only the
aggregation ceiling $O^{\mathrm{repro}}_i\le O^{\mathrm{agg}}_i\le O^{\exp}_i$, splitting $\Delta_i$
into a verifier-free-recoverable part and a residual that needs a verifier.
\emph{Sampling collapse} (Thm.~\ref{thm:bestofr}): letting the oracle's own per-model budget
$r$ grow drives the noise share to $0$, so the single-draw share is its $r{=}1$ maximum.

\begin{figure*}[t]
\centering
\includegraphics[width=0.95\textwidth]{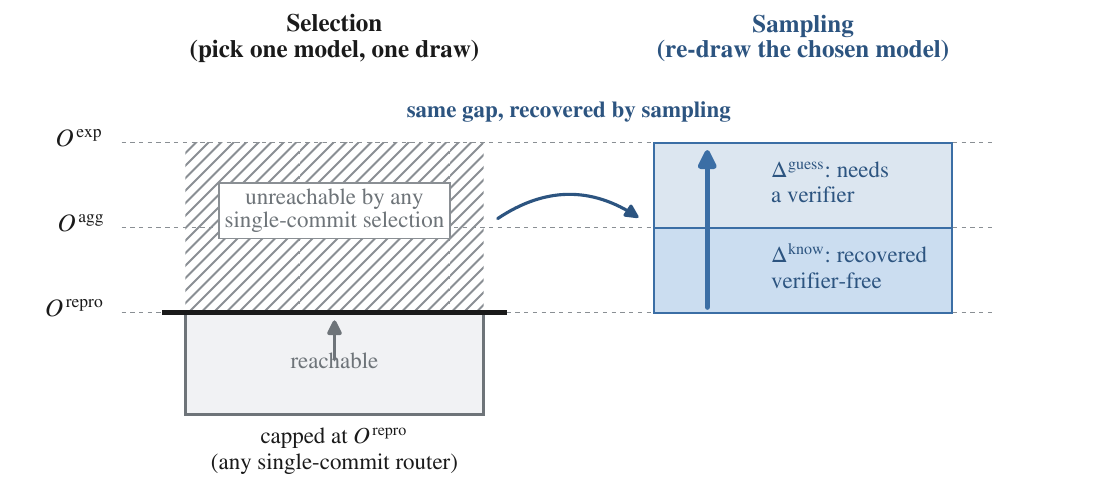}
\caption{\textbf{The recoverability asymmetry (Thm.~\ref{thm:recoverability}), the
paper's main result.} On the \emph{selection} axis every single-commit router---%
deterministic, randomized, or any mixture---is capped at the reproducible ceiling
$O^{\mathrm{repro}}$ (the hatched mass above it is unreachable), and this cap needs
\emph{no} cross-model independence. On the \emph{sampling} axis the \emph{same}
per-query budget, spent as best-of-$K$ on the committed model, provably lifts through
the verifier-free aggregation ceiling $O^{\mathrm{agg}}$ ($\Delta^{\mathrm{know}}$,
Lemma~\ref{lem:aggfloor}) toward $O^{\exp}$ ($\Delta^{\mathrm{guess}}$, verifier-gated).
The single-draw gap is thus recoverable by sampling, not by better selection.}
\label{fig:asymmetry}
\end{figure*}

\begin{figure*}[t]
\centering
\includegraphics[width=0.92\textwidth]{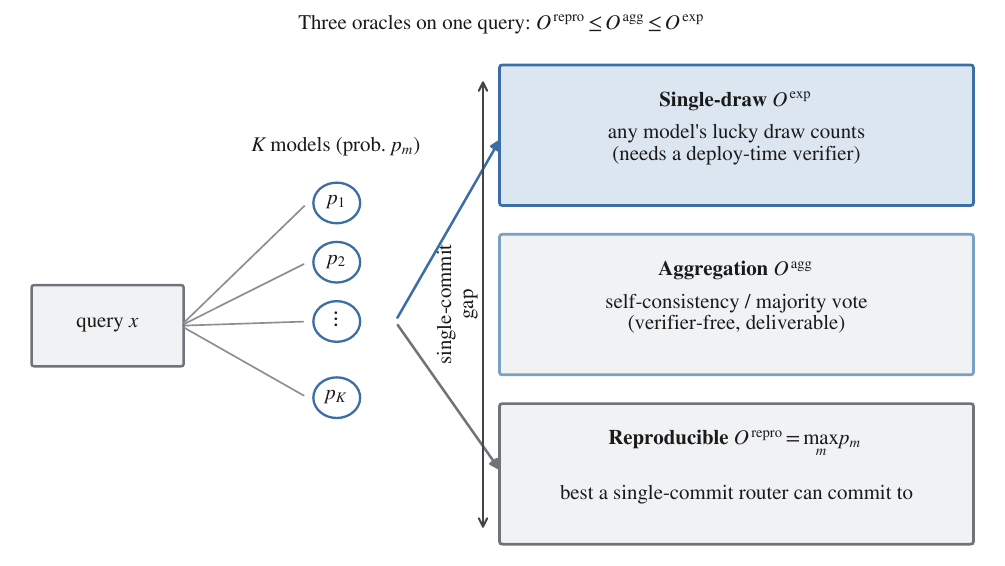}
\caption{Three oracles on a single query, in the proven order
$O^{\mathrm{repro}}\le O^{\mathrm{agg}}\le O^{\exp}$ (Lemma~\ref{lem:aggfloor}).
The expected single-draw oracle $O^{\exp}$ credits \emph{any} model's lucky recorded draw;
the reproducible oracle $O^{\mathrm{repro}}_i=\max_m p_{im}$ is the ceiling a single-commit
router can reach; the verifier-free aggregation oracle $O^{\mathrm{agg}}_i$
(self-consistency / majority vote) is what test-time sampling can deliver
\emph{without} a deploy-time verifier. The lower gap
$O^{\mathrm{agg}}-O^{\mathrm{repro}}$ is sampling-recoverable
($\Delta^{\mathrm{know}}$); the upper gap $O^{\exp}-O^{\mathrm{agg}}$ needs a
verifier ($\Delta^{\mathrm{guess}}$).}
\label{fig:concept}
\end{figure*}

\begin{figure}[t]
\centering
\includegraphics[width=\linewidth]{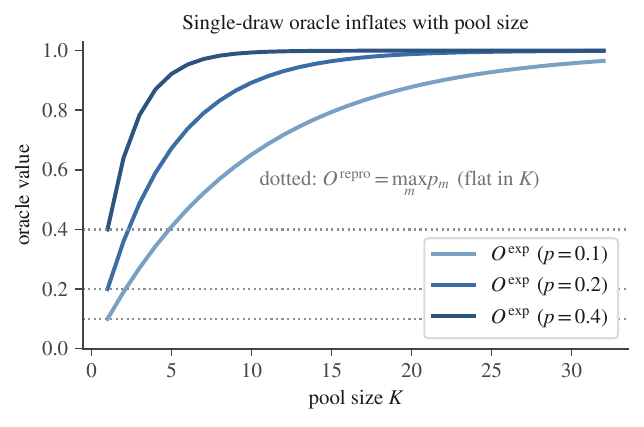}
\caption{With identical per-model success probability $p$, the expected
single-draw oracle $O^{\exp}=1-(1-p)^K$ (under independence, A2) inflates toward $1$
as the pool size $K$ grows, while the reproducible oracle $O^{\mathrm{repro}}_i=p$ (dotted)
stays flat. The widening gap is single-draw noise that no single-commit router can
recover (though test-time sampling can), not routing headroom.}
\label{fig:inflation}
\end{figure}

\section{Methods}\label{sec:methods}
This work introduces no new router; the procedure only resamples and recomputes
(Fig.~\ref{fig:method}).

\begin{figure*}[t]
\centering
\includegraphics[width=0.86\textwidth]{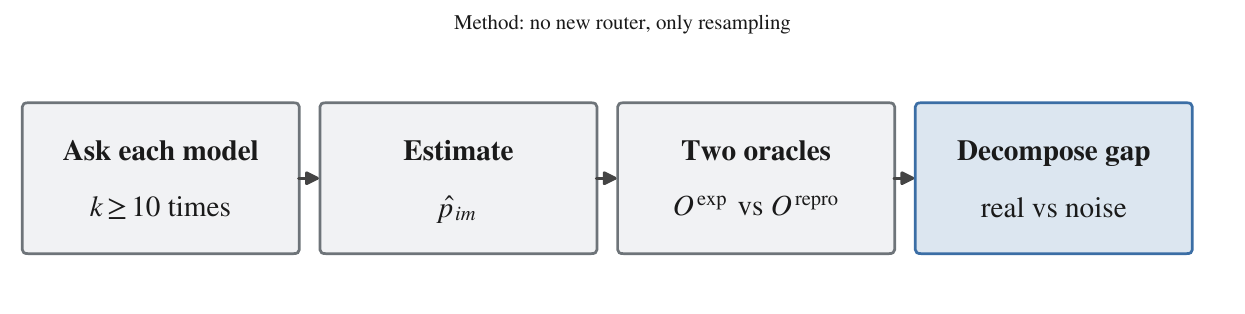}
\caption{The method in four steps: sample each model $k\ge10$ times, estimate
$\hat p_{im}$, recompute the expected vs.\ reproducible oracle, and decompose the
gap into recoverable advantage vs.\ single-draw noise. No router is trained.}
\label{fig:method}
\end{figure*}

Algorithm~\ref{alg:main} states the full procedure; the two subsections below
detail its estimation and decomposition steps. Every quantity it returns is the
plug-in estimator of a population quantity analyzed in
Section~\ref{sec:setup}: the decomposition on line~\ref{alg:line:decomp} is the
exact identity of Theorem~\ref{thm:decomp} (so $\widehat G_{\mathrm{noise}}\ge0$),
and the consistency of these estimators at rate $O(1/\sqrt k)$ is
Theorem~\ref{thm:finitek}.

\begin{algorithm}[t]
\DontPrintSemicolon
\caption{Reproducibility-corrected oracle and gap decomposition}
\label{alg:main}
\KwIn{queries $\{x_i\}_{i=1}^N$; pool $\mathcal M=\{1,\dots,K\}$; samples/cell $k$;
      temperature $T>0$; audited router $r$ (chooses model $r(i)$).}
\KwOut{$\widehat{O^{\exp}},\widehat{O^{\mathrm{repro}}}$; gap $\widehat G$;
       recoverable $\widehat G_{\mathrm{rec}}$; noise $\widehat G_{\mathrm{noise}}$;
       noise share $\widehat s$.}
\BlankLine
\tcp{Stage 1: re-estimate reproducible solve probabilities (replaces the single label)}
\ForEach{query $i$ and model $m\in\mathcal M$}{
  draw $b_{im}^{(1)},\dots,b_{im}^{(k)}$ from $m$ on $x_i$ at temperature $T$, and score each\;
  $\hat p_{im}\leftarrow \dfrac{1}{k}\sum_{j} b_{im}^{(j)}$ \tcp*{raw frequency (matches Thm.~\ref{thm:finitek}); Beta$(1,1)$ posterior used only for CIs}
}
\tcp{Stage 2: the two oracles, per query}
\ForEach{query $i$}{
  $\widehat{O^{\mathrm{repro}}}_i\leftarrow \max_{m}\hat p_{im}$ \tcp*{committable ceiling}
  $\widehat{O^{\exp}}_i\leftarrow \tfrac1k\sum_{j=1}^{k}\max_m b_{im}^{(j)}$
     \tcp*{seed-aligned estimator (Sec.~\ref{sec:setup-exp}); each summand i.i.d.\ Bernoulli$(O^{\exp}_i)$, exactly unbiased under (A7)}
  $q^r_i\leftarrow \hat p_{i,r(i)}$ \tcp*{router scored on its chosen model (A4)}
}
\tcp{Stage 3: decompose the router-to-oracle gap (Thm.~\ref{thm:decomp})}
$\widehat G\leftarrow \tfrac1N\sum_i(\widehat{O^{\exp}}_i-q^r_i)$\;
$\widehat G_{\mathrm{rec}}\leftarrow \tfrac1N\sum_i(\widehat{O^{\mathrm{repro}}}_i-q^r_i)$\;
$\widehat G_{\mathrm{noise}}\leftarrow \tfrac1N\sum_i(\widehat{O^{\exp}}_i-\widehat{O^{\mathrm{repro}}}_i)$
   \label{alg:line:decomp} \tcp*{$\widehat G=\widehat G_{\mathrm{rec}}+\widehat G_{\mathrm{noise}}$, \ $\widehat G_{\mathrm{noise}}\ge0$}
$\widehat s\leftarrow \widehat G_{\mathrm{noise}}/\widehat G$ \tcp*{fraction of the gap that is single-draw noise}
\Return $\widehat{O^{\exp}},\widehat{O^{\mathrm{repro}}},\widehat G,\widehat G_{\mathrm{rec}},\widehat G_{\mathrm{noise}},\widehat s$\;
\end{algorithm}

\subsection{Multi-sample correctness estimation}
From $k$ independent samples ($T>0$) we estimate
$\hat p_{im}=\frac{1}{k}\sum_{j=1}^{k} b_{im}^{(j)}$, using the raw frequency $\hat p_{im}$ as the point estimate, with a
Beta--Bernoulli posterior $\mathrm{Beta}(1+\!\sum b,\,1+k-\!\sum b)$ for
confidence intervals (Bayes@$N$ near-converges at
$k\!\approx\!10$~\cite{dontpassk}); we also record one greedy $T{=}0$ pass.

\subsection{Gap decomposition}
For a router $r$ with per-query correctness $q^r_i$, using the reproducible
ceiling $O^{\mathrm{repro}}_i=\max_m p_{im}$, the gap $G$ decomposes as follows;
in the display we write $G^{\mathrm{single}}$ for $G$ to record that the oracle
it is measured against is the single-draw one:
\begin{equation}
\begin{split}
\underbrace{\tfrac1N\!\sum_i (O^{\exp}_i - q^r_i)}_{G^{\mathrm{single}}}
=\ &\underbrace{\tfrac1N\!\sum_i (O^{\mathrm{repro}}_i-q^r_i)}_{\text{(a) recoverable advantage}}\\
&+\underbrace{\tfrac1N\!\sum_i (O^{\exp}_i-O^{\mathrm{repro}}_i)}_{\text{(b) single-draw noise}\,\ge 0}.
\end{split}
\end{equation}
Term (b) is non-negative by the inequality above. The
decomposition splits the router-to-oracle gap into two parts with distinct
meanings: term~(a) is the \emph{recoverable specialist advantage}---the genuine
quality a perfect router could still capture by sending each query to the model
with the highest reproducible success probability. This is the \emph{reproducible
specialist advantage} of the title: grounded in the reproducible probabilities
$p_{im}$ rather than in lucky draws, and \emph{recoverable} because a better
single-commit router can still capture it. Term~(b) is the
\emph{single-draw label noise}---the portion of the gap that exists only because
the single-draw oracle was credited for lucky samples no model can reproduce.
Only term~(a) is real headroom a router can recover; term~(b) is unattainable by
any single-commit router and can be subtracted before judging routing progress.
We report (b)$/G^{\mathrm{single}}$, the noise share of the gap, with nested
confidence intervals, and apply the same decomposition to the best-single
baseline and to the ``rare-correct'' stratum that defines model-recall
failure.

The ``model-recall failure'' diagnosis is precisely the thin-support regime in
which the proven single-draw inflation is maximal. On LLMRouterBench's $410$
queries ($11.9\%$ of the test set) where $\le\!3$ experts are scored correct, top
routers reach only $\approx\!24.6\%$/$23.2\%$, defining the diagnosed
deficiency~\cite{llmrouterbench}. Yet by construction each of those few correct
labels is a single stochastic draw at $T{=}0.2$, so it is exactly here that the
union oracle $O^{\exp}$ most overcounts solvability relative to its reproducible
counterpart $O^{\mathrm{repro}}$: the per-query gap
$O^{\exp}_i-O^{\mathrm{repro}}_i$ is largest under thin support, where lucky
single draws dominate $\max_m p_{im}$. The decomposition therefore predicts that
$G_{\mathrm{noise}}$ concentrates in this stratum: the apparent ``failure'' would then be
substantially an empirical localization of label noise rather than recoverable
routing headroom.

\section{Experimental Setup}\label{sec:setup-exp}
\textbf{Pool \& data.} Our primary pool is LLMRouterBench~\cite{llmrouterbench}
($33$ models, $391{,}645$ instances), whose oracle is a per-query
\emph{union}---``any model correct''---confirmed both in the construct
(Sec.~3.4) and in the released aggregator code (\texttt{aggregators.py}), and
whose decoding is stochastic at temperature $0.2$ (top-$p$ $1.0$). The released
matrix stores exactly \emph{one} generation per (query, model) cell, with the
raw output retained: each cell is a single Bernoulli draw $b_{im}$, so the
released oracle is precisely the single-draw oracle $O^{\mathrm{single}}_i$ of
Sec.~\ref{sec:setup}. At this scale the reported ceiling is Oracle
$\mathrm{AvgAcc}=91.64$ against the best-performing routers at $\approx\!71.2$--$71.9$,
i.e.\ $\mathrm{Gap@}O\approx 19.7$ points. Because each cell holds a single
$T{=}0.2$ draw, the released data fix $O^{\mathrm{single}}$ but carry no
information about the reproducible ceiling $O^{\mathrm{repro}}_i=\max_m p_{im}$;
recovering $O^{\exp}$ and $O^{\mathrm{repro}}$---and hence isolating
$G_{\mathrm{noise}}$---requires \emph{new} per-cell generation
($k\!\ge\!10$ draws at $T{=}0.2$). As a secondary, illustrative pool we use
RouterBench~\cite{routerbench} ($11$ models, $36{,}497$ queries over MMLU, BBH,
HellaSwag, Winogrande, ARC, MBPP, GSM8K, MT-Bench), whose oracle is the
\emph{best-single-model} construct (argmax over per-model scores) rather than a
union, and whose decoding parameters are not disclosed in any primary source.
On its binary evaluations the best-single argmax coincides numerically with the
union, so the inflation argument transfers, but we attribute no decoding setting
to it. Provenance for both pools---oracle construct, decoding, draws per
cell---is summarized in Table~\ref{tab:provenance}. Throughout, exact-match
datasets are analyzed separately from LLM-judge / continuous-graded ones
(MT-Bench, GSM8K, $\dots$), since the latter mix sampling noise with judge
noise.

\begin{table*}[t]
\caption{Decoding provenance of the audited oracles.}
\label{tab:provenance}
\centering\footnotesize
\begin{tabular}{@{}>{\raggedright\arraybackslash}p{3.6cm} >{\centering\arraybackslash}p{0.95cm} >{\centering\arraybackslash}p{1.3cm} >{\raggedright\arraybackslash}p{3.9cm} >{\raggedright\arraybackslash}p{4.4cm}@{}}
\toprule
Subset (metric type) & Temp. & Draws/ cell & Oracle rule & Role in our audit \\
\midrule
\multicolumn{5}{@{}l}{\emph{LLMRouterBench}~\cite{llmrouterbench} --- primary audited benchmark}\\
\addlinespace[2pt]
\hspace{1em}binary (acc / pass@1) & $0.2$ & $1$ & union: any model correct ($O^{\exp}$) & \textbf{Primary target}: single-draw union at $T{>}0$ \\
\addlinespace[3pt]
\hspace{1em}LLM-judge (HLE, SimpleQA, ArenaHard) & $0.2$ & $1$ & union: any model correct ($O^{\exp}$) & Single-draw; judge-graded (reported separately) \\
\midrule
\multicolumn{5}{@{}l}{\emph{RouterBench}~\cite{routerbench} --- secondary corroboration (decoding undisclosed)}\\
\addlinespace[2pt]
\hspace{1em}MC / exact-match (MMLU, HellaSwag, ARC, Winogrande, MBPP) & --- & --- & best single model ($=$ union on binary) & Secondary: argmax $=$ union on 0/1 labels \\
\addlinespace[3pt]
\hspace{1em}generative (GSM8K) & --- & --- & best single model & Suggestive of multi-draw avg.\ (unproven) \\
\addlinespace[3pt]
\hspace{1em}LLM-judge (MT-Bench) & --- & --- & best single model & Out of scope (graded) \\
\bottomrule
\end{tabular}

\vspace{3pt}
\begin{minipage}{\textwidth}
\footnotesize
\emph{How to read.} Each row is one benchmark subset; the columns give its decoding \emph{temperature}, the number of generations recorded per (query,\,model) cell (\emph{draws/cell}), the \emph{rule} its per-instance oracle uses, and how we \emph{use} it. Note that \emph{LLMRouterBench} builds its oracle from a \emph{single} $T{=}0.2$ draw per cell and marks a query solved if \emph{any} model is correct---so its reported ${\sim}20$-point router-to-oracle gap is, by construction, a union of single random draws ($O^{\exp}$), exactly the quantity our decomposition targets. \emph{RouterBench} is corroboration only: its decoding is undisclosed (its generative subset is reported as quarter-step scores and its judge subset out of $10$---hinting at, but not proving, multi-draw averaging, so its draws/cell is unknown, ``---''); its ``best single model'' rule coincides numerically with the union on binary ($0/1$) metrics, and its graded / LLM-judge subsets fall outside our exact-match scope.
\end{minipage}
\end{table*}

\textbf{Reproduced baseline.} On the secondary RouterBench pool we reproduce the
single-draw (argmax) oracle at $0.964$ versus the best single model ($0.854$,
GPT-4-1106), a gap of $\approx\!11$ points overall (up to $\approx\!26$ on
\texttt{mmlu-professional-law}); $20.7\%$ of queries are ``rare-correct''
($\le\!3$ of $11$ models correct). This reproduction confirms the released
matrix realizes $O^{\mathrm{single}}$; our re-estimation targets exactly the
$O^{\exp}\!-\!O^{\mathrm{repro}}$ component of this gap.
\textbf{Estimand before estimator.} The magnitude study is a \emph{prospective,
conservative} localization of $G_{\mathrm{noise}}$ restricted to exact-match
$T{>}0$ cells; it is \emph{not} an audit of the published $k{=}1$ number, which is
non-identifiable ($\widehat{O^{\mathrm{repro}}}{=}\widehat{O^{\exp}}{=}O^{\mathrm{single}}$
at $k{=}1$, Thm.~\ref{thm:finitek}(d)). We report an interval whose \emph{lower}
endpoint is tested against $0$.
\textbf{Sampling ($T$-matched).} We re-draw at the benchmark's own temperature
$T{=}0.2$ (a $T$-sweep is secondary robustness only): re-drawing hotter would
estimate a different solve-probability and introduce noise the audited
oracle never had. Draw budgets, collected in one place: $k\ge10$ already
near-converges the per-cell estimate~\cite{dontpassk}, the protocol prescribes
$k\ge20$ overall and $k\ge30$ in the thin-support stratum, and the realized runs
use $k{=}30$ throughout. We stratify by number-of-correct-models and oversample the
thin-support stratum ($N{=}500$ per benchmark, ${\approx}40\%$ rare), taking $k{=}30$
draws per cell, with raw-frequency point estimates $\hat p_{im}$ and
Beta$(1,1)$--Bernoulli posteriors for confidence intervals, to
avoid the $\hat p{=}0$ collapse. Two pre-checks \emph{gate} the analysis: a
known-$p$ simulation matching the observed \#-correct-models distribution, and a
per-draw independence test (runs / over-dispersion vs.\ $p(1{-}p)/k$) against
provider caching ((A1)); a failed gate means the magnitude study reports nothing.
\textbf{Estimating $O^{\exp}$ (dependence-aware).} The product
$1-\prod_m(1-\hat p_{im})$ estimates the \emph{independent}-coupling oracle
$O^{\exp,\perp}_i$, which Prop.~\ref{prop:dep} shows is the maximal-inflation
case; we report it only as an upper envelope. To estimate the actual,
dependence-preserving $O^{\exp}_i$ we use the \emph{seed-aligned} estimator
$\widehat{O^{\exp}}_i=\tfrac1k\sum_{j=1}^{k}\max_m b_{im}^{(j)}$ --- the empirical
max over the \emph{same} draw index $j$ across models. Each summand is i.i.d.\
$\mathrm{Bernoulli}(O^{\exp}_i)$, so this estimator is exactly unbiased (no (A2),
no $O(1/k)$ bias) with a $2e^{-2kt^2}$ tail. \textbf{(A7, draw alignment):} this
requires seed-aligned $K$-tuples per query; \emph{absent} (A7) --- the typical
released matrix --- $O^{\exp}_i$ is estimable only within the assumption-free
Fr\'echet bracket $[\max_m\hat p_{im},\,\min\{\sum_m\hat p_{im},1\}]$, the product
serving as the FKG upper envelope. An independent per-model resample estimates
$O^{\exp,\perp}_i$, \emph{not} $O^{\exp}_i$; a comonotone positive control (true
$\Delta{=}0$) guards against that artifact.

\textbf{Prospective re-generation pool (open-weight, text-only).} The
re-generation protocol ($k\!\ge\!20$ overall, $k\!\ge\!30$ in the rare stratum)
draws from a pool restricted, by design, to open-weight \emph{text-only}
instruction models served identically under vLLM at $T{=}0.2$, the audited
benchmark's own temperature. This restriction follows from what the audited
oracles measure: $O^{\exp}$, $O^{\mathrm{repro}}$, and the single-draw
$O^{\mathrm{single}}$ are all functionals of the scalar per-(query,model)
correctness label $b_{im}\in\{0,1\}$ on text-answer tasks (GSM8K exact-match,
MMLU multiple-choice), and the within-cell variance the decomposition isolates as
$G_{\mathrm{noise}}$ is decoding stochasticity alone---formalized as i.i.d.\ draws
(A1) under seed-aligned $K$-tuples (A7). Open weights are required because
recovering $O^{\exp}$ and $O^{\mathrm{repro}}$ demands new seed-pinned per-cell
generation at a fixed temperature, which closed endpoints neither guarantee nor
expose. Text-only is required because admitting vision-language models on the same
text benchmarks would make input modality, chat templates, and answer extraction
heterogeneous across the pool and would risk injecting vision/preprocessing
variance outside the seed-aligned decoding model (A1, A7); measured cross-model
disagreement would then conflate genuine solve-probability differences with
modality artifacts rather than reflecting $\max_m b_{im}$ under matched stochastic
decoding. We therefore exclude image-text-to-text candidates as a design
decision: several candidate repositories were screened out on this basis.
Restricting to text decoders keeps disagreement attributable to solve probability
and preserves the seed-aligned $K$-tuples that make the unbiased
$\widehat{O^{\exp}}_i$ well-defined. The pool spans eight distinct pretraining
lineages---Mistral, Qwen2.5, Phi, Gemma-2, Llama-3.1, OLMo-2, Yi-1.5, and
Granite; we deliberately added the last three
(\texttt{allenai/OLMo-2-1124-7B-Instruct}, \texttt{01-ai/Yi-1.5-9B-Chat},
\texttt{ibm-granite/granite-3.3-8b-instruct}) so that no single lineage dominates.
Only the Qwen lineage contributes more than one model---the three
\texttt{Qwen2.5} sizes (7B/14B/32B) and the Qwen-distilled
\texttt{DeepSeek-R1-Distill-Qwen-7B} (a \texttt{qwen2} architecture on a Qwen2.5
base)---so the nominal model count overstates independence; we control for this
residual intra-lineage correlation in the next paragraph. Two lineages (Gemma-2,
Llama-3.1) are gated: the analysis is pre-registered to hold on the open-only
subpool, with the gated models as a robustness extension. The qualitative
asymmetry isolated by the decomposition---selection is capped at
$O^{\mathrm{repro}}$ (Thm.~\ref{thm:recoverability}(a)); $G_{\mathrm{noise}}\ge0$
for any coupling (Prop.~\ref{prop:dep}(b)); the guessing residual needs a verifier
(Cor.~\ref{cor:scopefloor})---is proven and pool-independent; only the
\emph{magnitude} of $G_{\mathrm{noise}}$ is pool-conditional.

\textbf{Controlling for intra-lineage error correlation.} Intra-lineage
redundancy could enlarge $G_{\mathrm{noise}}$ as a same-family artifact: four pool
models share a Qwen pretraining lineage (the Qwen2.5 7B/14B/32B sizes and
DeepSeek-R1-Distill-Qwen-7B), so the effective
number of independent models is below the nominal count. We do \emph{not} claim to remove this
correlation---one cannot make served models independent---but we \emph{explicitly
control for it} and assess whether it drives the result. Prop.~\ref{prop:dep}(c)
fixes the bias \emph{direction}: positive cross-model association pulls
$O^{\exp}_i$ \emph{below} the independent envelope $O^{\exp,\perp}_i$ (it barely
lifts $O^{\mathrm{repro}}_i=\max_m p_{im}$ when near-substitute models cluster)
while leaving the within-cell draw variance untouched, so intra-lineage redundancy
\emph{depresses} the recoverable specialist component and \emph{inflates} the
noise share of the gap, $G_{\mathrm{noise}}/G$ (cf.\ the oracle noise share $S(K)$
of Cor.~\ref{cor:noiseshare}). Accordingly we report the noise share under
both a lineage-deduplicated pool (the conservative, lower primary estimate) and the full
pool (an upper bound). We report four controls. (i) We recompute
the noise share under three pool definitions---\emph{full}, \emph{one-per-lineage} (grouped
by \emph{pretraining} lineage rather than vendor brand, so the Qwen-distilled
DeepSeek model is counted as Qwen), and a within-lineage model-scale sweep over
Qwen2.5 7B/14B/32B, reported \emph{separately} as a capability-axis control
isolating scale from family. (ii) We report a lineage-clustered pairwise
error-correlation matrix on the correctness tensor $\{b_{im}\}$ together with an
effective-pool-size diagnostic~\cite{kuncheva2003measures,krogh1994neural} (the participation ratio of the correlation
eigenvalues), so the $\mathrm{Cov}\ge0$ premise behind the upper envelope
(Prop.~\ref{prop:dep}(c)) is visible block-by-block. (iii) We run a
leave-one-family-out jackknife on the noise share to test whether any single lineage drives
the estimate. (iv) We report a pool-cardinality sweep, tracing the \emph{oracle}
noise share $S(K)=\Delta_i/O^{\exp}_i$ of Cor.~\ref{cor:noiseshare}, which rises
monotonically in $K$; the \emph{gap} share $G_{\mathrm{noise}}/G$ instead falls as
larger heterogeneous pools raise $O^{\mathrm{repro}}$, so the two are reported
separately and not conflated. Across these controls the \emph{qualitative} asymmetry---the
single-commit selection cap (Thm.~\ref{thm:recoverability}(a)) and
$G_{\mathrm{noise}}\ge0$ for any coupling (Prop.~\ref{prop:dep}(b),
Thm.~\ref{thm:decomp}(b))---is proven and pool-independent, whereas the
\emph{magnitude} of the noise share is pool-conditional and is reported across the
deduplicated and full pools, with stability assessed by the jackknife and
$K$-sweep above. These controls bound the effect of intra-lineage correlation on
the magnitude of the noise share but do not characterize how cross-model error correlation
shifts routing optimality in general, which remains future work
(Sec.~\ref{sec:conclusion}).

\textbf{Estimators \& tests.} Per query, we compute $\widehat{O^{\exp}}$ by the seed-aligned
estimator above and $\widehat{O^{\mathrm{repro}}}=\max_m\hat p_{im}$. In low-$p$
strata, to remain conservative where the claimed effect concentrates, we use the one-sided
$\widehat\Delta_i=\widehat{O^{\exp}}_{\mathrm{lower}}-\widehat{O^{\mathrm{repro}}}_{\mathrm{upper}}$
where $\widehat{O^{\mathrm{repro}}}_{\mathrm{upper}}=\max_m\hat p_{im}+R_k$
\emph{adds} the winner's-curse radius (since $\max_m\hat p_{im}$ is already
\emph{upward}-biased for $O^{\mathrm{repro}}$, Thm.~\ref{thm:finitek}(b)) and
$\widehat{O^{\exp}}_{\mathrm{lower}}=\widehat{O^{\exp}}-R'_k$, with $R'_k$ the
one-sided radius of the seed-aligned Bernoulli mean's $2e^{-2kt^2}$ concentration
tail (Sec.~\ref{sec:setup-exp}) under a
Bonferroni-$2\delta$ split; the \emph{same} one-sided form is used in every
stratum so the aggregate lower bound is valid termwise. A per-stratum
$\mathrm{Cov}\ge0$ check gates the upper-envelope claim (under negative
association the Fr\'echet sum is the only safe ceiling). We also report the
\emph{verifier-free aggregation split}: with $\widehat{O^{\mathrm{agg}}}_i$ the
empirical majority-vote (self-consistency) accuracy,
$\widehat\Delta^{\mathrm{know}}_i=(\widehat{O^{\mathrm{agg}}}_i-\widehat{O^{\mathrm{repro}}}_i)_+$
(sampling-recoverable) and
$\widehat\Delta^{\mathrm{guess}}_i=(\widehat{O^{\exp}}_i-\widehat{O^{\mathrm{agg}}}_i)_+$,
the union-minus-best-deliverable gap recoverable only with a deploy-time verifier
(Cor.~\ref{cor:scopefloor}); the multiple-choice chance ceiling $1-(1-1/A)^{K}$ (with $A$ the number of answer choices) is
reported separately as a construct-validity diagnostic (e.g.\ Winogrande). Confidence intervals are obtained by a nested bootstrap (queries) $+$ Beta posterior
(samples); a McNemar test compares rare-correct recall before/after correction; sensitivity
is assessed over $k\in\{1,2,5,10,20,30\}$ and $\tau\in\{0.5,0.9\}$, and robustness
to pool cardinality and pretraining lineage is reported in
Sec.~\ref{sec:results} (Table~\ref{tab:robust}).

\section{Results}\label{sec:results}
Sections~\ref{sec:setup}--\ref{sec:methods} establish, as theorems, that the
single-draw oracle is upward-biased (Prop.~\ref{prop:order}), that the
router-to-oracle gap splits exactly into a recoverable term and a non-negative
single-draw-noise term (Thm.~\ref{thm:decomp}), and that this noise is
analytically lower-bounded and grows in the hard regime --- a noise share
$S(K)\uparrow 1-p$ of the oracle (Cor.~\ref{cor:noiseshare}) and a
$\Theta(1-\bar p)$ fraction of every router's gap where no model is reliable (Cor.~\ref{cor:gapfrac}). The
experiments below therefore do not \emph{argue} that the gap is partly
noise---that is proven; they only \emph{localize} where existing benchmarks fall
on this proven structure, i.e.\ how large the noise term is on real model pools.

\subsection{Re-generation setup}
To localize $G_{\mathrm{noise}}$ under full decoding control we \emph{re-generate}
rather than audit: eleven open-weight text-only instruction models
(Mistral-7B, DeepSeek-R1-Distill-Qwen-7B, Qwen2.5-\{7,14,32\}B-AWQ, phi-4,
OLMo-2-7B, Yi-1.5-9B, granite-3.3-8B, gemma-2-9B, Llama-3.1-8B --- eight distinct
pretraining lineages) served identically under vLLM at $T{=}0.2$, top-$p$ $1.0$,
with $k{=}30$ seed-aligned draws per (query, model) cell. Three exact-match
benchmarks span the difficulty axis: \textbf{GSM8K} (grade-school arithmetic;
strong for this pool) and \textbf{MATH-500} (competition mathematics; unsaturated),
$N{=}500$ stratified queries each (thin-support oversampled), plus the
\emph{non-mathematical} \textbf{GPQA-Diamond}~\cite{gpqa} (all $198$ graduate-level
science multiple-choice questions; hardest for this pool---the best single model
reaches only $0.51$). All three pass both
assumption gates (known-$p$ TV $\le0.01\!<\!0.06$;
per-draw over-dispersion $1.0$, ${\ge}94\%$ of cells within the independence
band), so the magnitude study is licensed to report (Sec.~\ref{sec:setup-exp}).

\textbf{System configuration.} All generation ran under vLLM with fixed per-draw
seeds realizing the seed-aligned $K$-tuples of (A7); Table~\ref{tab:sysconfig}
lists the full hardware/software stack, which a detection script captures
automatically and releases with the code.

\begin{table}[t]
\caption{System configuration (auto-captured by a detection script; the full
record is released as \texttt{artifacts/environment\_report.json}).}
\label{tab:sysconfig}
\centering\footnotesize
\begin{tabular}{@{}ll@{}}
\toprule
Component & Specification \\
\midrule
GPU & $2\times$ NVIDIA RTX~4090, $24$\,GB each (cc~$8.9$) \\
NVIDIA driver & $580.159.03$ \\
CUDA runtime / NCCL & $13.0$ / $2.28.9$ \\
PyTorch & $2.11.0$ (CUDA~$13.0$) \\
vLLM / Transformers & $0.23.0$ / $5.12.1$ \\
Python & $3.12.3$ \\
CPU / RAM & AMD~EPYC~7J13 ($64$-core) / $64$\,GB \\
OS & Ubuntu~$24.04$ \\
Decoding & $T{=}0.2$, top-$p$ $1.0$, $k{=}30$ seed-aligned \\
\bottomrule
\end{tabular}
\end{table}

\begin{table}[t]
\caption{Recomputed oracle and gap decomposition (Thm.~\ref{thm:decomp}); noise
share $=G_{\mathrm{noise}}/G$ with a nested bootstrap $95\%$ CI ($k{=}30$, $T{=}0.2$;
$N{=}500$ for GSM8K and MATH-500, $N{=}198$ for GPQA).}
\label{tab:results}
\centering\footnotesize
\resizebox{\columnwidth}{!}{%
\begin{tabular}{@{}lccc@{}}
\toprule
& GSM8K (sat.) & MATH-500 (unsat.) & GPQA (non-math) \\
\midrule
Expected oracle $O^{\exp}$                 & $0.993$ & $0.873$ & $0.934$ \\
Reproducible ceiling $O^{\mathrm{repro}}$  & $0.989$ & $0.837$ & $0.878$ \\
Best-single router                         & $0.960$ & $0.773$ & $0.506$ \\
\addlinespace[2pt]
Gap $G$ (router${\to}O^{\exp}$)            & $0.033$ & $0.100$ & $0.428$ \\
\ \ recoverable $G_{\mathrm{rec}}$         & $0.029$ & $0.064$ & $0.372$ \\
\ \ noise $G_{\mathrm{noise}}$             & $0.004$ & $0.036$ & $0.056$ \\
\addlinespace[2pt]
\textbf{Noise share} $G_{\mathrm{noise}}/G$ & $\mathbf{12\%}\,[6,19]$ & $\mathbf{36\%}\,[31,42]$ & $\mathbf{13\%}\,[10,17]$ \\
Thin-support ($\le3/11$): fraction         & $3\%$   & $28\%$ & $52\%$ \\
\ \ its noise share                        & $17\%$  & $43\%$ & $18\%$ \\
Effective pool size (of $11$)              & $3.5$   & $2.1$  & $7.2$ \\
\bottomrule
\end{tabular}}
\end{table}

\subsection{The recomputed oracle and its decomposition}
\begin{figure}[t]
\centering
\includegraphics[width=\columnwidth]{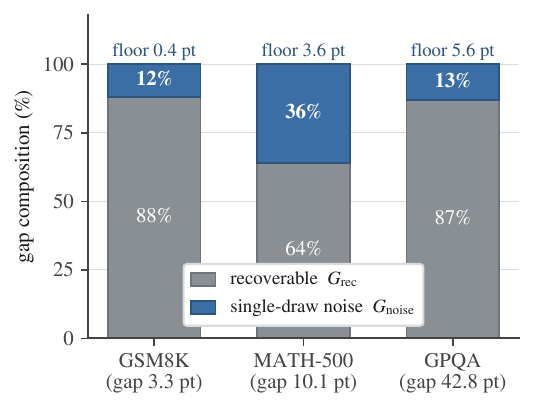}
\caption{Gap composition (Thm.~\ref{thm:decomp}) on an identical $11$-model open
pool, each bar normalized to $100\%$: recoverable specialist advantage
$G_{\mathrm{rec}}$ (grey) vs single-draw label noise $G_{\mathrm{noise}}$ (blue).
Noise is the \emph{minority} slice on all three benchmarks---$12\%/36\%/13\%$---while
its \emph{absolute} floor (annotated) grows $0.4\!\to\!3.6\!\to\!5.6$ points even as
the total gap ranges from $3.3$ to $42.8$ points.}
\label{fig:results-decomp}
\end{figure}
Table~\ref{tab:results} reports the decomposition of Thm.~\ref{thm:decomp} on both
benchmarks. On \textbf{GSM8K} the pool is near-saturated: the reproducible ceiling
$O^{\mathrm{repro}}{=}0.989$ almost reaches the expected oracle $O^{\exp}{=}0.993$,
the router-to-oracle gap is only $3.3$ points, and single-draw noise is
$G_{\mathrm{noise}}/G{=}12\%$ ($[6,19]\%$). On \textbf{MATH-500} the same pool is
unsaturated: no single model exceeds $0.78$ accuracy. The gap widens to
$10.1$ points and the noise share rises to $36\%$ ($[31,42]\%$). The noise term is
statistically real on all three (bootstrap lower bound $>0$), and its \emph{absolute}
floor---the router-independent quantity $O^{\exp}\!-\!O^{\mathrm{repro}}$---grows
from $0.4$ to $3.6$ to $5.6$ points (GSM8K, MATH-500, GPQA). This growth is reasonable because the unsaturated
benchmark leaves more queries on which no model is reliable, the regime in
which the noise term is predicted to be largest; this concentration is
localized below.

Two qualifications follow directly. First, \emph{in
aggregate the gap is majority-recoverable} (Fig.~\ref{fig:results-decomp}): $64\%$ of the MATH-500 gap ($88\%$ on
GSM8K) is recoverable specialist advantage $G_{\mathrm{rec}}$ that a better
single-commit router \emph{can} close---single-draw noise is a large
\emph{minority}, not the bulk. Second, the noise \emph{share} depends on the router
baseline: we use the strongest single-commit baseline (best single model), which
\emph{maximizes} the share; against a weaker realistic router the same
$G_{\mathrm{noise}}$ floor would be a smaller fraction of a larger gap. A learned
per-query router bears this out empirically: a TF-IDF $k$-NN router (five-fold
out-of-fold, Table~\ref{tab:robust}) reaches $0.754$ on MATH-500---below the
hindsight best single model ($0.773$)---so its noise share is $31\%$ rather than
$36\%$, of a larger gap, with the \emph{same} invariant floor
$G_{\mathrm{noise}}=0.036$. What is
invariant is the floor itself and that it is closed by no router
(Thm.~\ref{thm:recoverability}(a)).

\textbf{The non-math benchmark (GPQA-Diamond).} GPQA is the hardest pool setting---%
the best single model reaches only $0.51$ and $52\%$ of queries are thin-support---so
it carries the \emph{largest absolute} single-draw floor, $G_{\mathrm{noise}}=5.6$
points, yet the smallest noise \emph{share}, $13\%$ ($[10,17]\%$; $19\%$ on the
one-per-lineage pool). This is exactly the baseline dependence just described:
because no single model is strong, the gap is dominated by recoverable specialist
advantage ($87\%$ of it), so the router-independent floor, though absolutely the
biggest, is a small fraction of a large gap. A learned $k$-NN router
(accuracy $0.42$, below best-single $0.51$) again leaves the floor unchanged at
$0.056$ and gives a $11\%$ share. That the effect reproduces on a
\emph{non-mathematical}, knowledge-intensive benchmark shows it is not an artifact
of arithmetic answer formatting.

\subsection{Where the noise concentrates: thin support}
\begin{figure}[t]
\centering
\includegraphics[width=\columnwidth]{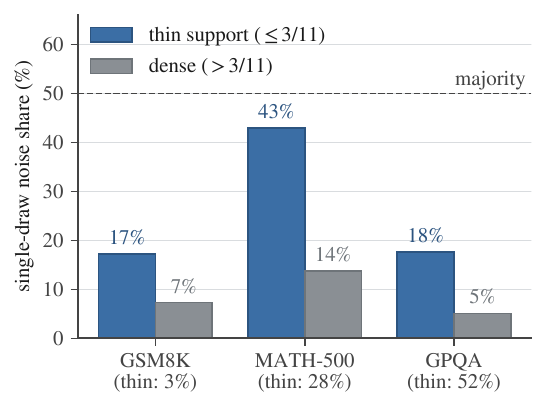}
\caption{Single-draw noise share by support stratum (Cor.~\ref{cor:gapfrac}), all
three benchmarks. Where no model is reliable (thin support, $\le3$ of $11$ correct)
the share is highest ($43\%$ on MATH-500); a benchmark is noise-heavier chiefly by
placing more queries there ($3\%/28\%/52\%$ of queries on GSM8K/MATH-500/GPQA).
Even in the worst stratum the share stays below one half.}
\label{fig:results-thin}
\end{figure}
Cor.~\ref{cor:gapfrac} predicts the noise share is largest exactly where no model
is reliable, and the data instantiate this prediction (Fig.~\ref{fig:results-thin}). Partitioning queries by the
number of models correct on a single draw, the \emph{thin-support} stratum
($\le3$ of $11$ correct) carries a noise share of $43\%$ on MATH-500 versus $14\%$
on the dense stratum; and MATH-500 places $28\%$ of its queries in that stratum
against only $3\%$ for GSM8K. The harder benchmark carries more noise \emph{not}
uniformly but through a ${\sim}9\times$ larger no-model-reliable population---the
$\Theta(1{-}\bar p)$ mechanism of Cor.~\ref{cor:gapfrac}. Even here the share
stays below one half, consistent with the theory's bound rather than exceeding it.

\subsection{Pool composition and cross-model correlation}
\begin{figure}[t]
\centering
\includegraphics[width=\columnwidth]{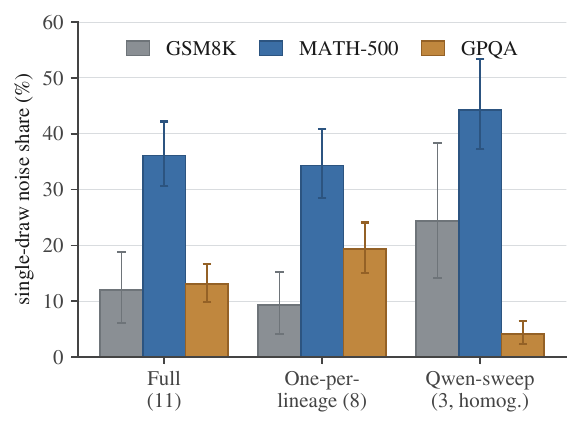}
\caption{Single-draw noise share by pool composition ($95\%$ CIs). \emph{On the
mathematical benchmarks} the homogeneous Qwen size-sweep has the highest share and
the decorrelated one-per-lineage pool the lowest---the direction anticipated in
Sec.~\ref{sec:setup-exp} (redundant near-substitutes barely lift
$O^{\mathrm{repro}}$; Prop.~\ref{prop:dep}(c)). On GPQA the direction reverses
because the Qwen sub-pool is uniformly weak there (see text).}
\label{fig:results-pool}
\end{figure}
Redundancy depresses the recoverable specialist component while leaving the
within-cell draw variance untouched (Prop.~\ref{prop:dep}(c);
Sec.~\ref{sec:setup-exp}), so the noise share of the gap is predicted to be
highest for the most homogeneous pool and lowest for the most decorrelated one.
Table~\ref{tab:pool} and Fig.~\ref{fig:results-pool} confirm both directions \emph{on the two mathematical benchmarks}: the homogeneous
three-model Qwen size-sweep has the highest noise share ($24\%$ GSM8K, $44\%$
MATH-500), the eight-model one-per-lineage pool---the most decorrelated, our
conservative primary estimate---the lowest ($9\%$, $34\%$), with the full pool between.
Cross-model error correlation is high and within-lineage exceeds cross-lineage
($0.76$ vs $0.61$ on MATH-500; $0.54$ vs $0.45$ on GSM8K), so the effective
(participation-ratio) pool size is only $2.1$ of $11$ on MATH-500 and $3.5$ on
GSM8K: eleven models behave like two to three independent ones. \emph{On GPQA this
direction reverses}---the Qwen size-sweep is the \emph{lowest} ($4\%$) and
deduplication \emph{raises} the share ($13\%$ full to $19\%$ one-per-lineage)---%
because there the Qwen sub-pool is uniformly weak
($O^{\exp}\!\approx\!O^{\mathrm{repro}}\!\approx\!0.60$), so it carries little
inflation to begin with. The composition effect thus requires differentiated
solve-probabilities and is benchmark-dependent (GPQA's error correlation is lower,
effective pool $7.2$ of $11$). Lineage structure
(a Qwen-distilled model counts as Qwen) remains a material confounder that the
one-per-lineage pool \emph{controls for}---not eliminates.

\begin{table}[t]
\caption{Single-draw noise share $G_{\mathrm{noise}}/G$ by pool composition
($95\%$ CIs).}
\label{tab:pool}
\centering\footnotesize
\begin{tabular}{@{}lcccc@{}}
\toprule
Pool & $K$ & GSM8K & MATH-500 & GPQA \\
\midrule
Full                          & $11$ & $12\%$ & $36\%$ & $13\%$ \\
One-per-lineage (diverse)     & $8$  & $9\%$  & $34\%$ & $19\%$ \\
Qwen size-sweep (homogeneous) & $3$  & $24\%$ & $44\%$ & $4\%$ \\
\bottomrule
\end{tabular}
\end{table}

\subsection{Robustness of the noise-share estimate}
Table~\ref{tab:robust} collects four robustness controls on MATH-500, all
recomputable on CPU from the released correctness tensor (\texttt{artifacts/}).
\textbf{(i)~Sample budget:} sweeping the number of draws used, the estimate rises
from $0$ at $k{=}1$ (the non-identifiable case, Thm.~\ref{thm:finitek}(d), where
$\widehat{O^{\mathrm{repro}}}{=}\widehat{O^{\exp}}$) and settles by
$k{\approx}20$--$30$ ($0.32\!\to\!0.34\!\to\!0.36$), consistent with the assumed
near-convergence at $k{\ge}10$. \textbf{(ii)~Pool cardinality:} the oracle noise
share $S(K)=\Delta_i/O^{\exp}_i$ rises monotonically in $K$
($0.022\!\to\!0.042$), matching Cor.~\ref{cor:noiseshare}. \textbf{(iii)~Lineage
jackknife:} leaving out each pretraining lineage in turn moves the gap share only
within $[0.35,0.47]$---removing the four-model Qwen cluster raises it to $0.47$
(fewer near-substitutes, higher share, as Prop.~\ref{prop:dep}(c) predicts)---so no
single lineage drives the result. \textbf{(iv)~Learned router:} a TF-IDF $k$-NN
per-query router gives a $31\%$ share against a larger gap, with the identical
floor $G_{\mathrm{noise}}{=}0.036$, confirming empirically that the share is
baseline-dependent while the floor is router-invariant. The same controls on
GPQA-Diamond echo this: the $k$-sweep converges ($0.06\!\to\!0.13$ by $k{=}30$),
the learned $k$-NN router (accuracy $0.42$, below best-single $0.51$) leaves the
floor unchanged and gives an $11\%$ share, and the lineage jackknife stays in
$[0.12,0.23]$. Its pool-cardinality $S(K)$, however, is \emph{non-monotone} (it
peaks at $K{=}4$)---the monotone $S(K)\!\uparrow$ of Cor.~\ref{cor:noiseshare} is a
homogeneous-pool statement, whereas this random-subset average over a small,
uniformly weak Qwen-heavy pool need not be.

\begin{table}[t]
\caption{Robustness of the MATH-500 noise-share estimate (all recomputable from the
released tensor; see \texttt{artifacts/}).}
\label{tab:robust}
\centering\footnotesize
\begin{tabular}{@{}llc@{}}
\toprule
Control & Setting & Result \\
\midrule
Sample budget $k$        & $1/2/5/10/20/30$        & $.00/.12/.23/.32/.34/.36$ \\
Oracle share $S(K)$      & $K=2/4/8/11$            & $.022/.031/.038/.042\ \uparrow$ \\
Lineage jackknife        & leave-one-lineage-out   & $[0.35,\,0.47]$ \\
\ \ drop Qwen cluster    & $7$ models left         & $0.47$ \\
Learned $k$-NN router    & 5-fold OOF, acc $.754$  & share $0.31$ \\
Best-single (reference)  & acc $.773$              & share $0.36$ \\
Reliability $\tau$       & $0.5/0.9$               & $0.85/0.76$ \\
\bottomrule
\end{tabular}
\end{table}

\subsection{Recoverability check and verifier-free scope}
The single falsifiable prediction---that test-time \emph{sampling} recovers what
selection cannot (Thm.~\ref{thm:recoverability}(b))---passes on all three benchmarks:
best-of-$K$ on the per-query committed best model reaches $0.946$ on MATH-500,
exceeding the matched-budget independent-pool oracle $O^{\exp,\perp}{=}0.874$
($1.0$ vs $0.934$ on GPQA, and analogously on GSM8K), as the bound requires. Recovery \emph{scope}, however, is
verifier-gated (Cor.~\ref{cor:scopefloor}). A single-round verifier-free majority
vote reaches only $0.824$ on MATH-500---\emph{below} the
reproducible ceiling $0.837$, i.e.\ self-consistency does not beat committing to
the best model here---so the sampling-recoverable mass is dominated by the
guessing residual $\widehat\Delta^{\mathrm{guess}}$, reclaimable only with a
deploy-time ground-truth verifier, not by aggregation alone. On multiple-choice
sets without a deploy verifier this residual is a hard floor, matching the scope
caveat of Sec.~\ref{sec:setup}.

\section{Discussion}\label{sec:discussion}
The answer this work reaches is two-sided.
\emph{Most} of the router-to-oracle gap is real, recoverable specialist
advantage: $64$--$88\%$ of it is $G_{\mathrm{rec}}$ that a better single-commit
router can close. But a substantial, previously unquantified minority is not:
single-draw label noise is $12\%$ of the gap on a saturated benchmark, $36\%$ on
an unsaturated one, $43\%$ on the hardest, no-model-reliable queries, and $13\%$ on
a non-mathematical benchmark whose \emph{absolute} floor is nonetheless the largest;
\emph{none} of it is recoverable by any router, only by resampling (with a
verifier for the guessing residual). Oracle-gap headroom is therefore
systematically overstated, and most so exactly where no model is reliable
(thin support); the ``model-recall failure''
diagnosis~\cite{llmrouterbench} conflates a genuine selection deficit with this
irreducible single-draw inflation.

Two design consequences follow. For \emph{benchmark designers}: reporting a
multi-sample oracle---the reproducible ceiling $O^{\mathrm{repro}}$ and the
expected oracle $O^{\exp}$ with its noise share $S(K)$---would avoid the single-draw
union, which our results show can attribute a third of the gap (nearly half on
hard queries) to unreachable noise. For \emph{routing research}: correlated pools
inflate the apparent gap (eleven models here act like two to three independent
ones on the math benchmarks, seven on GPQA), and the recoverable part is smaller than reported numbers imply, so
progress is more likely to come from better ex-ante quality estimation and from
decorrelating the pool than from pursuing an inflated ceiling. The inflation we quantify is orthogonal
to the deterministic artifacts of~\cite{unsolvability}: theirs is greedy-decoding
measurement error, ours is stochastic single-draw variance; a faithful oracle must
control both.

\section{Conclusion}\label{sec:conclusion}
This paper revisited the per-instance oracle that anchors LLM-routing
benchmarks. It showed that, under stochastic decoding, the single-draw oracle
is an upward-biased estimate of any reproducible ceiling, and decomposed the
router-to-oracle gap into recoverable specialist advantage and single-draw
label noise through cheap multi-sample re-estimation that needs no new router.
On a controlled open-pool re-generation, single-draw label noise accounts for a
substantial, previously unquantified \emph{minority} of both the gap and the
``model-recall failure'' diagnosis---larger on unsaturated benchmarks and approaching
half on the hardest, no-model-reliable queries---rather than recoverable routing
headroom (Section~\ref{sec:results}); the majority remains genuine specialist advantage.
For benchmark designers, the practical recommendation is to report a
\emph{multi-sample} oracle (expected and reproducible variants) rather than a
single-draw one; for routing research, the recoverable headroom is smaller and
harder than reported numbers imply, so progress is more likely to come from
better quality estimation than from closing an inflated gap. The contributions of this work are the
single-draw bias analysis, the gap decomposition, and a released corrected-oracle
evaluation protocol.

\emph{Future work.} Our estimates use $k$ samples at a single temperature on an
open-model pool; extending to larger $k$, multiple temperatures, further task
domains beyond the three benchmarks reported here, and live
frontier pools would sharpen the estimate and test whether the bias grows with
pool strength. More fundamentally, the finding motivates a program: (i) building
cheap, calibrated ex-ante quality estimators and mapping calibration to realized
routing gain; (ii) characterizing how cross-model estimator-error correlation
affects routing optimality; and (iii) re-evaluating cost--quality claims once
end-to-end latency is priced on live frontier pools---so that routing progress is
measured against a calibrated oracle rather than an inflated one.

\appendix
\section{Proofs}\label{app:proofs}

\subsection{Single-draw inflation: expectation form and monotone growth}
\begin{proposition}[Expected single-draw oracle]\label{prop:exp}
Fix a query $i$ and let $b_{i1},\dots,b_{iK}$ be the recorded single draws with $b_{im}\sim\mathrm{Bernoulli}(p_{im})$, $p_{im}\in[0,1]$, so that $\mathbb{E}[b_{im}]=p_{im}$ (this is (A1)+(A3)). Define the single-draw oracle $O^{\mathrm{single}}_i=\max_{m\in\mathcal{M}}b_{im}$ and $O^{\exp}_i:=\mathbb{E}[O^{\mathrm{single}}_i]$.

\medskip
\noindent\textbf{(a) Exact form under independence across models.} If, additionally, the draws $b_{i1},\dots,b_{iK}$ are mutually independent across models for the fixed $i$ (A2), then
\[
O^{\exp}_i \;=\; \mathbb{E}\!\left[\max_{m}b_{im}\right] \;=\; 1-\prod_{m=1}^{K}(1-p_{im}).
\]

\medskip
\noindent\textbf{(b) Assumption-free bounds when (A2) may fail.} Without any assumption on the joint dependence of $(b_{i1},\dots,b_{iK})$ --- only the marginals (A1)+(A3) --- the following Fréchet/union bounds hold and are sharp:
\[
\max_{m}p_{im}\;\le\; O^{\exp}_i\;\le\;\min\!\Big(1,\;\sum_{m=1}^{K}p_{im}\Big).
\]
\end{proposition}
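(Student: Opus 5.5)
The plan is to work throughout with the indicator identity $\max_m b_{im}=\mathbb{1}\{\bigcup_m\{b_{im}=1\}\}$. Because each $b_{im}$ takes values in $\{0,1\}$, the maximum is $1$ exactly when at least one model's draw succeeds. Taking expectations then gives $O^{\exp}_i=\Pr\bigl(\bigcup_m A_m\bigr)$ with $A_m:=\{b_{im}=1\}$ and $\Pr(A_m)=p_{im}$ by (A1)+(A3). This reduces both parts to statements about the probability of a union of events with prescribed marginal probabilities.

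For (a), I would pass to the complement. Write $\Pr\bigl(\bigcup_m A_m\bigr)=1-\Pr\bigl(\bigcap_m A_m^c\bigr)$. Mutual independence across models (A2) carries over to the complements, so $\Pr\bigl(\bigcap_m A_m^c\bigr)=\prod_m(1-p_{im})$, and the closed form follows immediately.

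For the two inequalities in (b), I will use only pointwise bounds together with linearity of expectation. The lower bound comes from $\max_m b_{im}\ge b_{im'}$ for every $m'$: taking expectations gives $O^{\exp}_i\ge p_{im'}$, and maximizing over $m'$ yields $\max_m p_{im}$. This is the monotonicity argument already used in Prop.~\ref{prop:order}. The upper bound comes from the pointwise inequality $\max_m b_{im}\le\min\{1,\sum_m b_{im}\}$, which holds for binary variables. Taking expectations and using concavity of $t\mapsto\min\{1,t\}$ (Jensen), or more simply $\mathbb{E}[\min\{1,X\}]\le\min\{1,\mathbb{E}X\}$, gives $O^{\exp}_i\le\min\{1,\sum_m p_{im}\}$. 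Neither argument uses the joint law.

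The main obstacle is \emph{sharpness}: I must exhibit couplings with the given marginals that attain each endpoint. I would realize all draws on a single uniform $U\sim\mathrm{Unif}[0,1]$.

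For the lower endpoint I take the comonotone coupling $b_{im}=\mathbb{1}\{U\le p_{im}\}$. The events are then nested, so the union equals the largest of them, and $O^{\exp}_i=\max_m p_{im}$.

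For the upper endpoint I take the countermonotone, disjoint-interval coupling. Order the models arbitrarily, set $s_m=\sum_{m'<m}p_{im'}$, and define $b_{im}=\mathbb{1}\{U\in[s_m,s_m+p_{im})\}$. When $\sum_m p_{im}\le1$ these intervals are disjoint subsets of $[0,1]$, each $b_{im}$ has marginal $p_{im}$, and the union has probability $\sum_m p_{im}$. When $\sum_m p_{im}>1$ I wrap the intervals around the circle $\mathbb{R}/\mathbb{Z}$, taking each interval modulo $1$. Each interval still has length $p_{im}\le1$, so its marginal is preserved, and the consecutive intervals jointly cover the whole circle, so the union has probability $1$. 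Checking this wrap-around case carefully, in particular that no marginal is distorted, is the one step needing care. With both constructions in hand, both endpoints of $[\max_m p_{im},\,\min\{1,\sum_m p_{im}\}]$ are attained, which establishes sharpness.
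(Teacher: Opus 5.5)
Your proposal is correct and follows essentially the same route as the paper's proof: the union-indicator identity, complement factorization under (A2) for (a), and for (b) monotonicity plus the union bound (your pointwise inequality $\max_m b_{im}\le\min\{1,\sum_m b_{im}\}$ is Boole's inequality with the trivial cap, taken in expectation), with sharpness shown by the comonotone and disjoint/space-covering couplings. Your wrap-around arc construction on $\mathbb{R}/\mathbb{Z}$ is valid. It spells out the case $\sum_m p_{im}>1$ that the paper only asserts, saying that \emph{one can arrange the events to cover the whole space}.
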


\begin{theorem}[Monotone single-draw inflation]\label{thm:inflation}
Under (A1)+(A2), the reproducible oracle is non-decreasing in $\mathcal{S}$ and for each finite $\mathcal{S}$ equals the success probability of one fixed model, $O^{\mathrm{repro}}_i(\mathcal{S})=\max_{m\in\mathcal{S}}p_m\le\sup_{m\in\mathcal M}p_m\le1$; it gains nothing from independence and cannot be inflated by adding weak models. Let $p^\star=\max_{m\in\mathcal{S}}p_m$, attained at some (possibly non-unique) $m^\star\in\mathcal{S}$, and \emph{fix one such} $m^\star$. Then
\[
\begin{aligned}
&\Delta_i(\mathcal{S}):=O^{\exp}_i(\mathcal{S})-O^{\mathrm{repro}}_i(\mathcal{S})\\
&=(1-p^\star)\Bigl(1-\!\!\prod_{m\in\mathcal{S},\,m\neq m^\star}\!\!(1-p_m)\Bigr)\;\ge0,
\end{aligned}
\]
where the product runs over \emph{all} models other than the chosen $m^\star$ (in particular any other model with $p_m=p^\star$ remains in the product). Equivalently and symmetrically, with $r:=\#\{m\in\mathcal{S}:p_m=p^\star\}$ the multiplicity of the maximum,
\[
\Delta_i(\mathcal{S})=(1-p^\star)\Bigl(1-(1-p^\star)^{\,r-1}\!\!\prod_{m\in\mathcal{S},\,p_m\neq p^\star}\!\!(1-p_m)\Bigr).
\]
If $\mathcal{S}$ has $K$ models with $p_m\in[\varepsilon,\bar p]$, $0<\varepsilon\le\bar p<1$, then
\[
\Delta_i(\mathcal{S})\ge(1-\bar p)\bigl(1-(1-\varepsilon)^{K-1}\bigr)\xrightarrow[K\to\infty]{}1-\bar p>0,
\]
and if $\bar p\to0$ as the pool grows while $\sum_m p_m=\infty$ (e.g. all $p_m=\varepsilon$ fixed, $K\to\infty$, then $\varepsilon\downarrow0$), then $O^{\mathrm{repro}}_i\to0$ while $O^{\exp}_i\to1$, so $\Delta_i\to1$. Writing the pool-growth decay constant $\Lambda_i=\sum_{m\in\mathcal{S}}-\ln(1-p_m)\in[0,\infty]$, the all-fail probability is $1-O^{\exp}_i(\mathcal{S})=\prod_{m\in\mathcal{S}}(1-p_m)=e^{-\Lambda_i}$, so $O^{\exp}_i\uparrow1$ geometrically at rate $\Lambda_i$ whenever some $p_m>0$.
\end{theorem}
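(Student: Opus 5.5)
The plan is to reduce everything to the closed form of Prop.~\ref{prop:exp}(a): under (A1)+(A2), $O^{\exp}_i(\mathcal S)=1-\prod_{m\in\mathcal S}(1-p_m)$. Monotonicity of $O^{\mathrm{repro}}_i(\mathcal S)=\max_{m\in\mathcal S}p_m$ in $\mathcal S$ is immediate, since a maximum over a larger set is no smaller. This quantity is a single value $p_{m^\star}$ attained at one fixed model, so it involves no joint law at all. That is the sense in which it ``gains nothing from independence''. Adding a model with $p_m\le p^\star$ leaves it unchanged, so weak models cannot inflate it.

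For the exact form I fix one maximizer $m^\star$ and factor the all-fail product as $(1-p^\star)\,Q$, with $Q:=\prod_{m\neq m^\star}(1-p_m)$. Then
\[
\Delta_i=1-(1-p^\star)Q-p^\star=(1-p^\star)-(1-p^\star)Q=(1-p^\star)(1-Q),
\]
which is nonnegative because $Q\in[0,1]$. The symmetric form follows by splitting $Q$ into the $r-1$ remaining maximizers, which contribute $(1-p^\star)^{r-1}$, and the models with $p_m\neq p^\star$. This shows the choice of $m^\star$ is immaterial.

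For the lower bound, I use $p_m\le\bar p$ to get $1-p^\star\ge 1-\bar p$. I use $p_m\ge\varepsilon$ for the $K-1$ factors in $Q$ to get $Q\le(1-\varepsilon)^{K-1}$, so $1-Q\ge 1-(1-\varepsilon)^{K-1}$. Letting $K\to\infty$ gives the limit $1-\bar p$, because $(1-\varepsilon)^{K-1}\to0$ for fixed $\varepsilon>0$.

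The doubly-hard limit needs the most care, because it is an iterated limit. For fixed $\varepsilon$ and $K\to\infty$, $O^{\exp}_i=1-(1-\varepsilon)^K\to1$ and $O^{\mathrm{repro}}_i=\varepsilon$, so $\Delta_i\to1-\varepsilon$. Then $\varepsilon\downarrow0$ gives $\Delta_i\to1$. I will state this as the iterated limit. For the general case where $\bar p\to0$ and $\sum_m p_m=\infty$, I observe that $\prod_m(1-p_m)\le \exp(-\sum_m p_m)\to0$, using $1-x\le e^{-x}$. This gives $O^{\exp}_i\to1$ while $O^{\mathrm{repro}}_i\le\bar p\to0$.

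Finally, taking logarithms gives $\prod_m(1-p_m)=\exp\bigl(\sum_m\ln(1-p_m)\bigr)=e^{-\Lambda_i}$, with the convention $\ln0=-\infty$ so that $\Lambda_i=\infty$ if some $p_m=1$. $\Lambda_i$ is nondecreasing as models are added, and each additional model with $p_m>0$ contributes a strictly positive term. So $O^{\exp}_i=1-e^{-\Lambda_i}$ increases toward $1$, with the all-fail probability decaying geometrically in the accumulated $\Lambda_i$. In the homogeneous case $\Lambda_i=-K\ln(1-p)$, which gives exactly the rate $(1-p)^K$ of Cor.~\ref{cor:noiseshare}.

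The main obstacle is bookkeeping rather than depth. Two points need care: handling ties in the maximizer correctly, and stating the iterated limit precisely so that ``$\bar p\to0$ while $\sum p_m=\infty$'' is well defined.
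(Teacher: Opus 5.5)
Your proposal is correct and follows essentially the paper's own proof: factor the chosen maximizer's failure term $(1-p^\star)$ out of the all-fail product, bound the remaining $K-1$ factors by $1-\varepsilon$, and take the homogeneous iterated limit ($K\to\infty$, then $\varepsilon\downarrow0$). Your $1-x\le e^{-x}$ bound for the general case $\bar p\to0$, $\sum_m p_m\to\infty$ (necessarily a triangular array of pools, since along nested pools $\max_m p_m$ is nondecreasing) and your explicit $e^{-\Lambda_i}$ identity fill in steps the paper only gestures at; do state explicitly that $O^{\exp}_i\to1$ along a growing pool requires $\Lambda_i\to\infty$, which fails for instance when $p_m=2^{-m}$.
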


\subsection{Finite-sample estimation}
\begin{theorem}[Finite-sample consistency]\label{thm:finitek}
Fix a query $i$ and a model pool $\mathcal{M}=\{1,\dots,K\}$. For each $m$, let $b_{im}^{(1)},\dots,b_{im}^{(k)}$ be the $k$ recorded draws and $\hat p_{im}=\frac1k\sum_{j=1}^k b_{im}^{(j)}$ the empirical success frequency, with reproducible probability $p_{im}\in[0,1]$. Recall the per-query oracles
\[
O^{\mathrm{repro}}_i=\max_{m} p_{im},\qquad
O^{\exp}_i=1-\prod_{m}(1-p_{im}),
\]
and their plug-in estimators
\[
\widehat{O^{\mathrm{repro}}}_i=\max_{m}\hat p_{im},\qquad
\widehat{O^{\exp}}_i=1-\prod_{m}(1-\hat p_{im}).
\]
Then:

\textbf{(a) Unbiasedness and concentration of $\hat p_{im}$.} Under (A1), $\mathbb{E}[\hat p_{im}]=p_{im}$, $\mathrm{Var}(\hat p_{im})=\frac{p_{im}(1-p_{im})}{k}\le\frac1{4k}$, and for every $t>0$,
\[
\Pr\big(|\hat p_{im}-p_{im}|\ge t\big)\le 2e^{-2kt^2}.
\]
Equivalently, with probability $\ge 1-\delta$, $|\hat p_{im}-p_{im}|\le\sqrt{\frac{\ln(2/\delta)}{2k}}$.

\textbf{(b) Upward bias of $\widehat{O^{\mathrm{repro}}}_i$ (winner's curse).} Under (A1)~\cite{winnerscurse,winnersinference},
\[
\mathbb{E}\Big[\max_{m}\hat p_{im}\Big]\ \ge\ \max_{m}p_{im}=O^{\mathrm{repro}}_i,
\]
i.e. the plug-in is upward-biased, with the bias controlled by
\[
0\ \le\ \mathbb{E}\Big[\max_{m}\hat p_{im}\Big]-\max_{m}p_{im}\ \le\ \sqrt{\frac{\ln K}{2k}}=O\!\Big(\frac1{\sqrt k}\Big).
\]

\textbf{(c) Consistency of $\widehat{O^{\exp}}_i$.} Under (A1), $\widehat{O^{\exp}}_i\to O^{\exp}_i$ in $L^2$ (hence in probability) as $k\to\infty$, with the deterministic Lipschitz bound
\[
\begin{aligned}
\big|\widehat{O^{\exp}}_i-O^{\exp}_i\big|&\le\sum_{m}|\hat p_{im}-p_{im}|,\\
\mathbb{E}\big[(\widehat{O^{\exp}}_i-O^{\exp}_i)^2\big]&\le\frac{K^2}{4k},
\end{aligned}
\]
\[
\Pr\big(|\widehat{O^{\exp}}_i-O^{\exp}_i|\ge t\big)\le 2K\,e^{-2kt^2/K^2}.
\]
The plug-in is exactly UNBIASED under (A2) ($\mathbb{E}[\widehat{O^{\exp}}_i]=O^{\exp}_i$); without (A2) its bias is $O(1/k)$ (the pairwise-covariance leading term is $\le\binom{K}{2}\frac1{4k}$).

\textbf{(d) Consistency of the decomposition / aggregate gap estimate.} Let $G=\frac1N\sum_i(O^{\exp}_i-q^r_i)$ with $q^r_i=p_{i,r(i)}$, and $\widehat G=\frac1N\sum_i(\widehat{O^{\exp}}_i-\hat p_{i,r(i)})$. Under (A1),(A4), $\widehat G$ is consistent for $G$ as $k\to\infty$ at rate $O(1/\sqrt k)$:
\[
\big|\mathbb{E}[\widehat G]-G\big|=O(1/k)\quad(=0\text{ under (A2)}),
\]
\[
\Pr\big(|\widehat G-G|\ge t\big)\le 2N(K+1)\exp\!\Big(\!-\frac{2k t^2}{(K+1)^2}\Big).
\]
If the reproducible oracle is targeted, $\widehat{O^{\mathrm{repro}}}_i$ injects an $O(1/\sqrt k)$ upward bias by (b). The case $k=1$ is the degenerate worst case: $\hat p_{im}=b_{im}\in\{0,1\}$, all Hoeffding radii and the winner's-curse bound are maximal, and $\widehat{O^{\mathrm{repro}}}_i=\widehat{O^{\exp}}_i=\max_m b_{im}=O^{\mathrm{single}}_i$, so no averaging-based variance reduction occurs.
\end{theorem}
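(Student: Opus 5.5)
The plan is to prove the four parts in order. Each later part reuses the concentration facts of part (a), so Hoeffding's inequality plus union bounds carry the whole argument.

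\textbf{Part (a).} Under (A1), $\hat p_{im}$ is the mean of $k$ i.i.d.\ Bernoulli$(p_{im})$ variables. Unbiasedness and $\mathrm{Var}=p(1-p)/k\le 1/(4k)$ are immediate. The tail bound is Hoeffding's inequality for $[0,1]$-valued summands, and inverting $2e^{-2kt^2}=\delta$ gives the stated radius.

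\textbf{Part (b).} The lower bound is Jensen applied to the convex map $\max$: $\mathbb E[\max_m\hat p_{im}]\ge\max_m\mathbb E[\hat p_{im}]$. For the upper bound I will write
\[
\max_m\hat p_{im}-\max_m p_{im}\le\max_m(\hat p_{im}-p_{im}).
\]
Each centred term $\hat p_{im}-p_{im}$ is sub-Gaussian with variance proxy $\sigma^2=1/(4k)$, by Hoeffding's lemma. No independence across $m$ is needed here. The standard maximal inequality, obtained from $\exp(\lambda\,\mathbb E\max)\le\sum_m\mathbb E e^{\lambda X_m}\le Ke^{\lambda^2\sigma^2/2}$ and optimising over $\lambda$, then gives $\sqrt{2\sigma^2\ln K}=\sqrt{\ln K/(2k)}$.

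\textbf{Part (c).} This rests on the telescoping inequality $|\prod_m a_m-\prod_m b_m|\le\sum_m|a_m-b_m|$ for $a_m,b_m\in[0,1]$. Applying it with $a_m=1-\hat p_{im}$ and $b_m=1-p_{im}$ gives the deterministic Lipschitz bound. For the mean-squared error, Cauchy--Schwarz gives $(\sum_m|\cdot|)^2\le K\sum_m|\cdot|^2$, and with (a) this yields $K\cdot K/(4k)$. For the tail, the event $\sum_m|\hat p_{im}-p_{im}|\ge t$ forces some term to be at least $t/K$. A union bound over the $K$ models with (a) then gives $2Ke^{-2kt^2/K^2}$.

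For the bias statement, under (A2) the $\hat p_{im}$ are independent across $m$. Hence $\mathbb E\prod_m(1-\hat p_{im})=\prod_m(1-p_{im})$, and the plug-in is exactly unbiased. Without (A2), I will expand $\prod_m(1-p_{im}-\varepsilon_m)$ with $\varepsilon_m=\hat p_{im}-p_{im}$. The first-order terms vanish in expectation. The leading surviving terms are $\sum_{m<m'}\mathbb E[\varepsilon_m\varepsilon_{m'}]\cdot(\text{products bounded by }1)$, and each satisfies $|\mathrm{Cov}(\hat p_{im},\hat p_{im'})|\le\sqrt{\mathrm{Var}\,\mathrm{Var}}\le 1/(4k)$. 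Higher-order moments of the $\varepsilon$'s are $O(k^{-3/2})$ or smaller, so the bias is $O(1/k)$ with leading term at most $\binom K2\frac1{4k}$.

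\textbf{Part (d).} I will write
\[
\widehat G-G=\tfrac1N\sum_i\bigl[(\widehat{O^{\exp}}_i-O^{\exp}_i)-(\hat p_{i,r(i)}-p_{i,r(i)})\bigr].
\]
By linearity of expectation and (a), the router term is unbiased, so the bias of $\widehat G$ is the average of the biases from (c). That average is $O(1/k)$, and it is $0$ under (A2).

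For the tail, the Lipschitz bound gives $|\widehat G-G|\le\max_i\bigl(\sum_m|\hat p_{im}-p_{im}|+|\hat p_{i,r(i)}-p_{i,r(i)}|\bigr)$, which is a sum of $K+1$ Hoeffding-controlled terms for each $i$. A deviation of size $t$ requires one of these $N(K+1)$ terms to exceed $t/(K+1)$, and a union bound gives the stated tail.

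The remarks about $\widehat{O^{\mathrm{repro}}}_i$ follow directly from (b). The $k=1$ case is a direct check: $\hat p_{im}=b_{im}\in\{0,1\}$, and for binary values $1-\prod_m(1-b_{im})=\max_m b_{im}$, so both plug-ins collapse to $O^{\mathrm{single}}_i$.

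\textbf{Main obstacle.} I expect the non-(A2) bias claim in (c) to be the delicate step. It needs a careful expansion, either a multilinear (inclusion--exclusion) expansion of the product or a second-order Taylor bound with remainder control, to show that the higher-order joint moments genuinely contribute only $o(1/k)$. A backup is to bound the bias crudely by $\mathbb E|\widehat{O^{\exp}}_i-O^{\exp}_i|$. That backup only gives $O(1/\sqrt k)$, so the cumulant expansion is what actually delivers the claimed $O(1/k)$ rate.
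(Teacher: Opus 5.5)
Your proposal is correct and follows the paper's proof step for step. The paper also uses Hoeffding for (a) and Jensen plus the sub-Gaussian maximal inequality with proxy $1/(4k)$ for (b). For (c) it likewise uses the telescoping Lipschitz bound, Cauchy--Schwarz and a union bound, and handles the non-(A2) bias by the same multilinear expansion with $|\mathrm{Cov}(\hat p_{im},\hat p_{im'})|\le 1/(4k)$. For (d) it takes the same union bound over $N(K+1)$ events at level $t/(K+1)$.

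The one divergence is in the terms with three or more centred factors. You bound these by $O(k^{-3/2})$, which follows from H\"older on the marginal moments. The paper asserts $O(k^{-2})$, which holds when the model-vectors are i.i.d.\ across replicates. Your rate is valid for an arbitrary cross-model coupling and already suffices for the $O(1/k)$ bias.
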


\subsection{Robustness to cross-model dependence}
\begin{proposition}[Robustness to cross-model dependence]\label{prop:dep}
Fix a query $i$ and let $(B_{i1},\dots,B_{iK})$ be a vector of Bernoulli indicators with arbitrary joint law $\mu$ on $\{0,1\}^K$ having prescribed marginals $\Pr(B_{im}=1)=p_{im}$, $m\in\mathcal M=\{1,\dots,K\}$ (assumptions (A1) and (A3) hold; (A2) is \emph{dropped}). Recall $O^{\mathrm{single}}_i=\max_m B_{im}=\mathbb 1\{\exists m:B_{im}=1\}$, $O^{\exp}_i=\mathbb E_\mu[O^{\mathrm{single}}_i]=\Pr_\mu(\exists m:B_{im}=1)$, and $O^{\mathrm{repro}}_i=\max_m p_{im}$.

\begin{enumerate}
\item[\textnormal{(a)}] \textbf{(Sharp Fr\'echet bounds.)} For every joint law $\mu$ with the given marginals,
\[
\max_m p_{im}\;\le\;O^{\exp}_i\;\le\;\min\Bigl\{1,\ \textstyle\sum_{m=1}^{K}p_{im}\Bigr\},
\]
and both bounds are attainable, hence sharp: the lower bound is the Fr\'echet upper bound (maximal positive dependence / comonotone coupling) and the upper bound is the Fr\'echet lower bound (disjointness when $\sum_m p_{im}\le 1$).

\item[\textnormal{(b)}] \textbf{(Ordering holds without (A2).)} For \emph{every} joint law with the given marginals,
\[
O^{\exp}_i\;\ge\;O^{\mathrm{repro}}_i=\max_m p_{im},
\]
with equality iff $\mu$ attains the Fr\'echet upper bound, i.e. iff the event $\{B_{im}=1\}$ for the (an) argmax model contains, up to a $\mu$-null set, all the events $\{B_{im'}=1\}$.

\item[\textnormal{(c)}] \textbf{(Independence is the maximal-inflation case under positive dependence.)} Write $O^{\exp,\perp}_i=1-\prod_{m=1}^K(1-p_{im})$ for the value under the independent (product) coupling of (A2). If $(B_{i1},\dots,B_{iK})$ are \emph{positively associated} (e.g. positively correlated in the strong sense of association: $\mathbb E[f g]\ge\mathbb E[f]\,\mathbb E[g]$ for all coordinatewise-nondecreasing bounded $f,g$; the FKG/association property), then
\[
O^{\exp}_i\;\le\;O^{\exp,\perp}_i .
\]
Thus positive cross-model correlation can only \emph{reduce} the expected single-draw oracle relative to independence; independence is the maximal-inflation case among positively associated couplings. Symmetrically, negative association gives $O^{\exp}_i\ge O^{\exp,\perp}_i$. Equality holds iff the survival events are uncorrelated in the relevant product sense (in particular under independence).
\end{enumerate}
\end{proposition}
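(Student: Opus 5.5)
The statement to prove is Proposition~\ref{prop:dep}. I would work throughout with the complementary "all-fail" event $F=\{B_{im}=0\ \forall m\}$, so that $O^{\exp}_i=1-\mu(F)$ and each part becomes a statement about $\mu(F)$. Write $A_m=\{B_{im}=1\}$, so $\mu(A_m)=p_{im}$.

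\emph{Part (a).} The lower bound follows from $\bigcup_m A_m\supseteq A_{m'}$ for every $m'$, giving $O^{\exp}_i\ge\max_m p_{im}$. The upper bound is Boole's inequality together with the trivial bound $1$.

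For sharpness I would give explicit constructions on $U\sim\mathrm{Unif}[0,1]$:
\begin{itemize}
\item[] \textbf{Comonotone coupling:} $B_{im}=\mathbb 1\{U\le p_{im}\}$. The events are nested, so the union is the largest set and has mass $\max_m p_{im}$.
\item[] \textbf{Disjoint coupling:} place the $A_m$ on consecutive disjoint subintervals of lengths $p_{im}$. If $\sum_m p_{im}\le1$, the union has mass $\sum_m p_{im}$. Otherwise, wrap the intervals around the circle $[0,1)$; the union then covers everything, so its mass is $1$.
\end{itemize}
In each case the marginals are exactly $p_{im}$.

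\emph{Part (b).} The inequality is immediate from (a). For the equality case, let $m^\star$ be an argmax. Then
\[
O^{\exp}_i-p_{im^\star}=\mu\Bigl(\bigcup_m A_m\setminus A_{m^\star}\Bigr).
\]
This is zero iff $\mu(A_{m'}\setminus A_{m^\star})=0$ for every $m'$, i.e. iff every $A_{m'}$ lies in $A_{m^\star}$ up to a null set. That is the stated condition.

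\emph{Part (c).} This is the main step. I would apply positive association to the \emph{nonincreasing} indicators $\mathbb 1\{B_{im}=0\}$. Products of nonincreasing nonnegative functions are again nonincreasing, and association applies to pairs of nonincreasing functions as well, since $fg=(-f)(-g)$.

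Induction on $K$ then gives
\[
\mu(F)=\mathbb E\Bigl[\prod_m \mathbb 1\{B_{im}=0\}\Bigr]\ \ge\ \prod_m(1-p_{im}).
\]
At each step I split off one factor and apply the association inequality to the remaining product versus the single indicator. This yields $O^{\exp}_i\le O^{\exp,\perp}_i$.

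For negative association I would use the analogous inequality for functions of disjoint coordinate sets, which gives the reverse direction.

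For the equality clause I would not claim a sharp characterization. Equality holds exactly when every inequality in the induction chain is tight. That is the "uncorrelated in the product sense" condition, which I would state as such, noting that it includes independence.

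The expected obstacle is being careful that association, stated for nondecreasing functions, transfers to the survival indicators. For negative association, the inequality must be restricted to functions on disjoint coordinate blocks, which is why the induction splits one coordinate off at a time.
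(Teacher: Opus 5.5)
Your proposal is correct and follows the paper's proof essentially step for step. For (a) you use Boole/monotonicity, with the comonotone coupling and the disjoint coupling (wrap-around when $\sum_m p_{im}\ge 1$) for sharpness; for (b) you use the null-set containment argument; and for (c) you run an induction on $K$, applying association to the running product of failure indicators against one more failure indicator, with the disjoint-block version for negative association. The paper only packages your $fg=(-f)(-g)$ observation as a lemma that $\mathbf 1-B$ is itself associated, and your equality clause, phrased as tightness of every step of the unrolled chain, is if anything more accurate than the paper's ``every covariance vanishes'' reading, which misses cases where a vanishing factor $1-p_{im}=0$ makes the chain tight.
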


\subsection{Achievable routers: attainability and the single-commit cap}
\begin{lemma}[Attainability of the reproducible oracle; $G_{\mathrm{rec}}$ as regret]\label{lem:attain}
Define the \emph{oracle-weight committable router} $r^\star(i):=\arg\max_{m\in\mathcal M}p_{im}$ (ties broken arbitrarily), scored under (A4) by $q^{r^\star}_i=p_{r^\star(i),i}$. Then, with no probabilistic assumption beyond the definition of the per-query maximum:
\begin{enumerate}
\item[\textnormal{(a)}] \textbf{(Attainment.)} For every $i$, $q^{r^\star}_i=\max_m p_{im}=O^{\mathrm{repro}}_i$, since the maximum over the finite pool $\mathcal M$ is attained at $r^\star(i)$; hence the recoverable gap of $r^\star$ vanishes, $G_{\mathrm{rec}}(r^\star)=\tfrac1N\sum_i\bigl(O^{\mathrm{repro}}_i-q^{r^\star}_i\bigr)=0$.
\item[\textnormal{(b)}] \textbf{(Pointwise optimality among committable routers.)} For \emph{any} committable router $r$ and every $i$, $q^r_i=p_{i,r(i)}\le\max_m p_{im}=q^{r^\star}_i$; thus $r^\star$ maximizes the reproducible score query-by-query, and hence over the benchmark.
\item[\textnormal{(c)}] \textbf{(Regret reading and router-independent floor.)} Substituting $q^{r^\star}_i=O^{\mathrm{repro}}_i$ into Theorem~\ref{thm:decomp}(a), for any committable router $r$
\[
G_{\mathrm{rec}}(r)=\tfrac1N\sum_i\bigl(q^{r^\star}_i-q^r_i\bigr)\ \ge\ 0
\]
is the average \emph{regret} of $r$ against the realizable optimum $r^\star$, while $G_{\mathrm{noise}}=\tfrac1N\sum_i\bigl(O^{\exp}_i-O^{\mathrm{repro}}_i\bigr)=G(r^\star)$ is the residual gap of the best committable router itself. Since $G_{\mathrm{noise}}\ge0$ (Theorem~\ref{thm:decomp}(b), needing only (A1)) and $G_{\mathrm{rec}}(r)\ge0$ by (b), every committable router obeys
\[
G(r)=G_{\mathrm{rec}}(r)+G_{\mathrm{noise}}\ \ge\ G_{\mathrm{noise}}.
\]
No committable router can close more than $G_{\mathrm{rec}}$ of the gap, and $G_{\mathrm{noise}}$ is a router-independent lower bound on the distance to $O^{\exp}$.
\end{enumerate}
\textbf{Remark (admissibility).} $r^\star$ is the \emph{oracle-weight} router: it uses the true $p_{im}$, mirroring the paper's treatment of $O^{\mathrm{repro}}$ as an attainable ceiling. A finite-sample router using the plug-in $\arg\max_m\hat p_{im}$ is only nonnegative-regret in expectation up to the winner's-curse bias $\le\sqrt{\ln K/2k}$ of Theorem~\ref{thm:finitek}(b); the pointwise statements (a)--(b) are for the true-$p$ argmax. Proposition~\ref{prop:bayesregret} quantifies the shortfall of a realistic feature-based router to $r^\star$.
\end{lemma}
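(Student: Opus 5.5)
The plan is to prove the three parts of Lemma~\ref{lem:attain} in order, each resting only on the finiteness of $\mathcal M$ and the identity of Theorem~\ref{thm:decomp}(a).

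\textbf{Part (a).} Since $\mathcal M=\{1,\dots,K\}$ is finite, the set $\arg\max_m p_{im}$ is nonempty. Choose $r^\star(i)$ in it, with any tie-break. Then $q^{r^\star}_i=p_{i,r^\star(i)}=\max_m p_{im}=O^{\mathrm{repro}}_i$ by definition. Summing the zero differences $O^{\mathrm{repro}}_i-q^{r^\star}_i$ over $i$ gives $G_{\mathrm{rec}}(r^\star)=0$. This step holds for every $i$ and every coupling, because only the marginals $p_{im}$ enter.

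\textbf{Part (b).} For any committable $r$, the score $p_{i,r(i)}$ is one element of the finite set $\{p_{im}\}_m$. It is therefore at most the maximum of that set, which by (a) equals $q^{r^\star}_i$. Averaging over $i$ preserves the inequality, so $r^\star$ is optimal both query by query and over the whole benchmark.

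\textbf{Part (c).} Start from the decomposition $G(r)=G_{\mathrm{rec}}(r)+G_{\mathrm{noise}}$ of Theorem~\ref{thm:decomp}(a).
\begin{itemize}
\item Substituting $O^{\mathrm{repro}}_i=q^{r^\star}_i$ from (a) into $G_{\mathrm{rec}}(r)$ gives the regret form $\tfrac1N\sum_i(q^{r^\star}_i-q^r_i)$. This quantity is nonnegative by (b).
\item Taking $r=r^\star$ and using (a), $G(r^\star)=G_{\mathrm{noise}}$.
\item By Theorem~\ref{thm:decomp}(b), which needs only (A1) via Jensen, $G_{\mathrm{noise}}\ge0$.
\end{itemize}
Combining these gives $G(r)\ge G_{\mathrm{noise}}$ for every committable $r$. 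The term $G_{\mathrm{noise}}$ is router-independent because it is a functional of $\{p_{im}\}$ and the joint law of the draws alone.

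\textbf{Remark.} For the admissibility remark, I would apply Theorem~\ref{thm:finitek}(b) to the plug-in argmax. Its expected shortfall relative to $\max_m\hat p_{im}$ is bounded by the winner's-curse bias $\sqrt{\ln K/2k}$.

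\textbf{Main obstacle.} There is no real technical obstacle here; the argument is bookkeeping. The one point needing care is notational: $q^{r^\star}_i$ must be read as $p_{i,r^\star(i)}$, consistently with (A4) (the statement's $p_{r^\star(i),i}$ has the indices transposed). The second point is ensuring that tie-breaking does not affect the value attained, which holds because every maximizer has the same $p_{im}$.
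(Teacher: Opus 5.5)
Your proof of (a)--(c) is correct and essentially matches the paper's own argument. It uses attainment of the maximum over the finite pool (tie-breaking is irrelevant), a termwise comparison for (b), and, for (c), substitution into Theorem~\ref{thm:decomp}(a) together with part~(b) of that theorem. Your remark about the transposed index in $p_{r^\star(i),i}$ is also right.

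One correction concerns the remark. Let $\hat m=\arg\max_m\hat p_{im}$. The shortfall you bound splits as
\[
\mathbb E\bigl[\max_m\hat p_{im}-p_{i\hat m}\bigr]=\Bigl(\mathbb E\bigl[\max_m\hat p_{im}\bigr]-\max_m p_{im}\Bigr)+\Bigl(\max_m p_{im}-\mathbb E\bigl[p_{i\hat m}\bigr]\Bigr).
\]
So it is the winner's-curse bias \emph{plus} the plug-in router's expected regret. It is therefore at least that bias, not bounded by it, and it does not follow from the statement of Theorem~\ref{thm:finitek}(b). The paper cites that statement only to bound the optimism of the estimated ceiling.

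Your shortfall is nonetheless at most $\sqrt{\ln K/(2k)}$. This follows because $\hat p_{i\hat m}-p_{i\hat m}\le\max_m(\hat p_{im}-p_{im})$, and the maximal inequality inside that theorem's proof gives $\mathbb E\max_m(\hat p_{im}-p_{im})\le\sqrt{\ln K/(2k)}$. As a bonus, this bounds the plug-in router's expected regret by the same constant.
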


\begin{lemma}[Randomization does not relax the committable ceiling]\label{lem:mixedrouter}
Extend (A4) by linearity: a (possibly randomized, possibly feature/history-dependent) \emph{single-commit} router on query $i$ selects a model $M_i\sim\pi_i$ from a distribution $\pi_i$ on $\mathcal M=\{1,\dots,K\}$ (where $\pi_i$ may depend on $i$ and on any observed covariates/history $\mathcal F_i$), commits to that one model, and is scored by the expected reproducible success probability of the committed model,
\[
\mathrm{score}_i(\pi)\ :=\ \mathbb E_{M_i\sim\pi_i}\!\bigl[p_{M_i,i}\bigr]\ =\ \sum_{m\in\mathcal M}\pi_i(m)\,p_{im}.
\]
Then for \emph{every} such router and \emph{every} joint law of the draws (no (A2) needed),
\[
\mathrm{score}_i(\pi)\ =\ \sum_m\pi_i(m)\,p_{im}\ \le\ \max_m p_{im}\ =\ O^{\mathrm{repro}}_i,
\]
with equality iff $\pi_i$ is supported on $\arg\max_m p_{im}$ ($\pi_i$-a.s.\ all mass on maximizers). Deterministic committable routers are the special case of point masses, so the supremum of the score over the \emph{entire} class of single-commit routers equals $O^{\mathrm{repro}}_i$ exactly (attained by committing to any maximizer, i.e.\ by $r^\star$ of Lemma~\ref{lem:attain}). Consequently the noise term $G_{\mathrm{noise}}=\tfrac1N\sum_i\bigl(O^{\exp}_i-O^{\mathrm{repro}}_i\bigr)\ge0$ of Theorem~\ref{thm:decomp} is a hard floor for the full mixed-strategy class, not merely for deterministic routers.

\textbf{Remark (source of the floor).} The floor is \emph{not} created by ``reproducible vs.\ lucky-draw'' scoring of a single model: a router committing to one model $m$ and scored on a fresh single draw $b_{im}$ still has expected score $p_{im}\le O^{\mathrm{repro}}_i$. The irreducible gap $O^{\exp}_i-O^{\mathrm{repro}}_i$ arises specifically because the single-draw oracle takes the \emph{maximum over all $K$ independent draws} $\max_m b_{im}$ --- harvesting whichever model is momentarily lucky --- whereas a committable router (deterministic or randomized) is confined to a single committed model and cannot collect this pool-wide maximum. Routers that may query or ensemble several models leave the single-commit class and can reach up to $O^{\exp}_i$; the floor is asserted only within the single-commit class.
\end{lemma}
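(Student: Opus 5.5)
The statement to prove is Lemma~\ref{lem:mixedrouter}. The argument is a convexity argument on the simplex, and it never touches the joint law of the draws. That is why (A2) is not needed.

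\textbf{Step 1: the scoring identity.} Fix $i$ and condition on the covariates/history $\mathcal F_i$, so that $\pi_i$ is a fixed distribution on $\mathcal M$. Under (A5) the committed index $M_i$ is independent of the draw on which it is scored. By the tower rule the expected score of the committed model is
\[
\mathbb E\bigl[\mathbb E[B_{M_i,i}\mid M_i]\bigr]=\mathbb E\bigl[p_{M_i,i}\bigr]=\sum_m\pi_i(m)\,p_{im}.
\]
This identity uses only the marginals (A1). The joint coupling of $(B_{i1},\dots,B_{iK})$ never enters, so the bound holds for every joint law.

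\textbf{Step 2: the inequality and its equality case.} Write $P=\max_m p_{im}$. Since $\pi_i(m)\ge0$ and $\sum_m\pi_i(m)=1$,
\[
P-\mathrm{score}_i(\pi)=\sum_m\pi_i(m)\,(P-p_{im}).
\]
This is a sum of nonnegative terms, so $\mathrm{score}_i(\pi)\le P=O^{\mathrm{repro}}_i$. The sum is zero iff every term is zero, i.e.\ iff $\pi_i(m)=0$ whenever $p_{im}<P$. That is exactly the condition that $\pi_i$ is supported on $\arg\max_m p_{im}$. If $\pi_i$ is itself random because it depends on $\mathcal F_i$, take the expectation over $\mathcal F_i$ of the conditional bound; the inequality is preserved, and equality then holds $\pi_i$-a.s.

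\textbf{Step 3: supremum and attainment.} Point masses $\delta_m$ recover deterministic routers. The bound is attained at $\delta_{m^\star}$, which is $r^\star$ of Lemma~\ref{lem:attain}(a). Hence the supremum over the whole single-commit class equals $O^{\mathrm{repro}}_i$ exactly.

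\textbf{Step 4: the floor consequence.} Combine the per-query bound with Theorem~\ref{thm:decomp}(b), which gives $O^{\exp}_i-O^{\mathrm{repro}}_i\ge0$ under (A1) alone. Then
\[
O^{\exp}_i-\mathrm{score}_i(\pi)\ge O^{\exp}_i-O^{\mathrm{repro}}_i\ge0 .
\]
Averaging over $i$ gives
\[
\tfrac1N\textstyle\sum_i\bigl(O^{\exp}_i-\mathrm{score}_i(\pi)\bigr)\ge G_{\mathrm{noise}},
\]
so the floor holds for the full mixed-strategy class.

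\textbf{Main obstacle.} The only delicate point is Step~1: the router's score must not be allowed to correlate with the particular realized draw. If $M_i$ could depend on $B_{\cdot,i}$, picking the lucky model would reach $O^{\exp}_i$. The plan therefore states explicitly that (A5) makes $M_i$ independent of the scoring draw and that $\pi_i$ is $\mathcal F_i$-measurable. With that in place, the lemma's Remark (source of the floor) follows from the same identity applied to a fresh single draw $b_{im}$.
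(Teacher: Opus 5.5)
Your proposal is correct and takes essentially the same route as the paper. The paper also writes $P-\mathrm{score}_i(\pi)=\sum_m\pi_i(m)(P-p_{im})$ as a sum of nonnegative terms to get both the cap and the equality case, attains the bound with $\delta_{m^\star}$, and aggregates with Theorem~\ref{thm:decomp}(b) to obtain the floor. Your Step~1 tower-rule justification via (A5) is not in this proof; the paper places it in the main text before Theorem~\ref{thm:recoverability} and in Theorem~\ref{thm:genrecov}(a), and it does not change the argument.
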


\begin{lemma}[Two axes: selection is capped at $O^{\mathrm{repro}}$, sampling lifts the per-model ceiling]\label{lem:twoaxes}
Fix a query $i$ with reproducible per-model success probabilities $p_{im}\in[0,1]$, $m\in\mathcal M=\{1,\dots,K\}$, and let $O^{\mathrm{repro}}_i=\max_m p_{im}$, attained at a fixed $m^\star$.
\par\smallskip\noindent\emph{(SELECT) Model selection is capped.} Under the linear extension of \textnormal{(A4)} of Lemma~\ref{lem:mixedrouter}, every single\hyp commit router---deterministic or randomized, feature/history\hyp dependent---selecting one model $M_i\sim\pi_i$ on $\mathcal M$ scores $\mathrm{score}_i(\pi)=\sum_m \pi_i(m)\,p_{im}\le O^{\mathrm{repro}}_i$, with equality iff $\pi_i$ is supported on $\arg\max_m p_{im}$. No amount of \emph{argmax-over-models} (selecting, mixing, or portfolio\hyp weighting among the $K$ reproducible probabilities) exceeds $O^{\mathrm{repro}}_i$.
\par\smallskip\noindent\emph{(SAMPLE) Test-time sampling lifts the per-model ceiling.} Under \textnormal{(A1)} (within\hyp$(i,m)$ i.i.d.\ draws), drawing a single \emph{fixed} model $m$ $n$ times and accepting the query as solved if any draw succeeds yields the best\hyp of\hyp$n$ success probability
\[
p^{(n)}_{im}\;=\;1-(1-p_{im})^{n}\;\ge\;p_{im}=p^{(1)}_{im},
\]
nondecreasing in $n$ and strictly increasing iff $p_{im}\in(0,1)$, with $p^{(n)}_{im}\uparrow 1$ iff $p_{im}>0$. In particular, sampling the single committed best model $m^\star$ gives $\,O^{\mathrm{repro},(n)}_i:=1-(1-O^{\mathrm{repro}}_i)^{n}\uparrow 1$ whenever $O^{\mathrm{repro}}_i>0$.
The two operations are orthogonal: \textnormal{(SELECT)} chooses \emph{which} reproducible probability to commit to (and cannot exceed their maximum), whereas \textnormal{(SAMPLE)} raises the ceiling of \emph{whichever} model is committed to. Combined, a router that commits to $m^\star$ and samples it $n$ times attains $O^{\mathrm{repro},(n)}_i$, which exceeds $O^{\mathrm{repro}}_i$ for $n\ge2$ (when $O^{\mathrm{repro}}_i\in(0,1)$).
\end{lemma}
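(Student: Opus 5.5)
The statement to prove is Lemma~\ref{lem:twoaxes}, and it splits cleanly into its two parts; I would prove each by an elementary argument and then assemble the combined claim.

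For \textbf{(SELECT)}, the plan is to reuse Lemma~\ref{lem:mixedrouter} directly. The score $\sum_m\pi_i(m)p_{im}$ is a convex combination of the $p_{im}$, so it is bounded by their maximum $O^{\mathrm{repro}}_i$. For the equality case, write $O^{\mathrm{repro}}_i-\mathrm{score}_i(\pi)=\sum_m\pi_i(m)(O^{\mathrm{repro}}_i-p_{im})$. This is a sum of nonnegative terms, so it vanishes iff $\pi_i(m)=0$ whenever $p_{im}<O^{\mathrm{repro}}_i$, i.e.\ iff $\pi_i$ is supported on $\arg\max_m p_{im}$. Dependence on covariates or history $\mathcal F_i$ only changes which $\pi_i$ is used, not the bound. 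No joint-law assumption enters, since only the marginals $p_{im}$ appear.

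For \textbf{(SAMPLE)}, under (A1) the $n$ draws of a fixed $(i,m)$ cell are i.i.d.\ Bernoulli$(p_{im})$, so $\Pr(\text{all fail})=(1-p_{im})^n$ and $p^{(n)}_{im}=1-(1-p_{im})^n$. The remaining properties follow from the map $n\mapsto(1-p)^n$:
\begin{itemize}
\item it is nonincreasing for $p\in[0,1]$, so $p^{(n)}_{im}\ge p^{(1)}_{im}=p_{im}$;
\item it is strictly decreasing iff $1-p\in(0,1)$, i.e.\ iff $p\in(0,1)$; at $p=0$ or $p=1$ it is constant;
\item $(1-p)^n\to0$ iff $1-p<1$, i.e.\ iff $p>0$, which gives $p^{(n)}_{im}\uparrow1$ iff $p_{im}>0$.
\end{itemize}
Specializing to $m=m^\star$ gives $O^{\mathrm{repro},(n)}_i$.

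For the combined claim, a router committing to $m^\star$ and sampling it $n$ times achieves $1-(1-O^{\mathrm{repro}}_i)^n$. By strict monotonicity, for $n\ge2$ and $O^{\mathrm{repro}}_i\in(0,1)$ this is strictly greater than the $n=1$ value $O^{\mathrm{repro}}_i$.

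The only mildly delicate point is stating the equality and strictness conditions precisely, including the degenerate endpoints $p\in\{0,1\}$. There is no real obstacle: "orthogonality" is interpretive and simply amounts to observing that (SELECT) fixes $n=1$ and varies $\pi$, while (SAMPLE) fixes the model and varies $n$.
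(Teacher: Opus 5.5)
Your proposal is correct and matches the paper's proof essentially step for step. (SELECT) is handled via the convex-combination bound with the nonnegative-sum equality characterization, i.e., Lemma~\ref{lem:mixedrouter}. (SAMPLE) is handled via the all-fail factorization $(1-p_{im})^n$ under (A1), with the same case split $p_{im}=0$, $p_{im}\in(0,1)$, $p_{im}=1$. The combined strict-improvement claim and the reading of ``orthogonality'' as acting on the model index $m$ versus the draw budget $n$ are also exactly as in the paper.
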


\subsection{Verifier-free aggregation ceiling}
\begin{lemma}[Verifier\hyp free aggregation ceiling]\label{lem:aggfloor}
Fix a query $i$ with finite answer set $\mathcal Y$, gold label $y^\star_i$, and per\hyp model draw labels $Y_{im}\in\mathcal Y$ (so $B_{im}=\mathbb 1\{Y_{im}=y^\star_i\}$). Call $g:\mathcal Y^K\to\mathcal Y$ a \emph{draw\hyp grounded verifier\hyp free aggregator} if it uses only the draw labels and \emph{outputs one of them}, $g(Y_{i\cdot})\in\{Y_{i1},\dots,Y_{iK}\}$ (e.g.\ plurality / self\hyp consistency vote; committing to one model is the degenerate case). With $O^{\mathrm{agg}}_i:=\sup_g\Pr(g(Y_{i\cdot})=y^\star_i)$ over this class,
\[
O^{\mathrm{repro}}_i\ \le\ O^{\mathrm{agg}}_i\ \le\ O^{\exp}_i,
\]
each inequality strict in general (e.g.\ binary cells $A{=}2$ with every model correct with probability $p>\tfrac12$ and odd $K\ge3$: majority vote gives $O^{\mathrm{repro}}_i=p<O^{\mathrm{agg}}_i<O^{\exp}_i$, by Condorcet's jury theorem). On a \emph{pure\hyp chance} cell $p_{im}\equiv1/A$, by contrast, symmetry of the $A$ candidates forces $O^{\mathrm{agg}}_i=1/A=O^{\mathrm{repro}}_i$: verifier\hyp free aggregation cannot beat chance.
\end{lemma}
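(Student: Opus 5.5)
The plan is to prove the two inequalities by exhibiting one member of the aggregator class and by a pointwise event inclusion. After that, the two boundary examples are handled separately.

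\emph{Lower bound.} Fix a maximizer $m^\star\in\arg\max_m p_{im}$ and take the degenerate aggregator $g_{m^\star}(Y_{i\cdot}):=Y_{im^\star}$. It uses only draw labels and outputs one of them, so it lies in the class. Its success probability is $\Pr(Y_{im^\star}=y^\star_i)=\Pr(B_{im^\star}=1)=p_{im^\star}=O^{\mathrm{repro}}_i$ by (A1). Taking the supremum gives $O^{\mathrm{repro}}_i\le O^{\mathrm{agg}}_i$, and this requires no assumption on the coupling.

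\emph{Upper bound.} For any draw-grounded $g$, the output satisfies $g(Y_{i\cdot})=Y_{im}$ for some $m$. On the event $\{g(Y_{i\cdot})=y^\star_i\}$ we therefore have $B_{im}=1$ for that $m$, so $\{g=y^\star_i\}\subseteq\{\max_m B_{im}=1\}$ pointwise. Hence $\Pr(g=y^\star_i)\le\mathbb E[O^{\mathrm{single}}_i]=O^{\exp}_i$ under the actual joint law. Taking the supremum gives $O^{\mathrm{agg}}_i\le O^{\exp}_i$, again coupling-free, in the same spirit as the monotonicity argument of Prop.~\ref{prop:order}.

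\emph{Strictness example.} Take $A=2$, odd $K\ge3$, draws independent across models (A2), and each model correct with probability $p\in(\tfrac12,1)$. Majority vote is draw-grounded.
\begin{itemize}
\item The strict inequality $\Pr(\mathrm{Bin}(K,p)>K/2)>p$ is the classical Condorcet statement. If I need to prove it, I would induct from $K$ to $K+2$: the increment equals $\Pr(\text{tie at }K{+}1\text{ among first }K{+}1)\cdot(\ldots)$, which gives the standard increment $\binom{K}{(K-1)/2}p^{(K+1)/2}(1-p)^{(K+1)/2}(2p-1)>0$.
\item For the upper strictness, the event ``exactly one model correct'' has probability $Kp(1-p)^{K-1}>0$. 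On this event the union oracle scores $1$ while the majority vote (with $K\ge3$) is wrong. Hence $O^{\mathrm{agg}}_i<O^{\exp}_i$ for majority vote.
\item The statement's example refers to majority vote, so I will present the strictness claim for that aggregator. For the supremum itself, I would add that in the symmetric binary case majority is optimal among label-only rules (it is the MAP rule), so the strictness transfers to $O^{\mathrm{agg}}_i$.
\end{itemize}

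\emph{Pure-chance cell.} This is the step I expect to be the main obstacle. Read literally, $\sup_g$ could pick a $g$ tailored to the fixed gold label, such as ``output $y^\star_i$ if any draw shows it''. That rule attains $O^{\exp}_i$ and breaks the claim. So I must make precise that ``verifier-free'' means $g$ cannot depend on $y^\star_i$. Concretely, I would formalize this as evaluating $g$ under a gold label that is exchangeable with the other candidates. That is, $y^\star_i$ is uniform on $\mathcal Y$ and independent of the draw labels, which is exactly what $p_{im}\equiv 1/A$ with uniform, independent guessing means. Then for any fixed $g$,
\[
\Pr(g(Y_{i\cdot})=y^\star_i)=\mathbb E\Bigl[\textstyle\sum_{y\in\mathcal Y}\tfrac1A\mathbb 1\{g(Y_{i\cdot})=y\}\Bigr]=\tfrac1A ,
\]
and therefore $O^{\mathrm{agg}}_i=1/A=O^{\mathrm{repro}}_i$. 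An equivalent route is to restrict to relabelling-equivariant $g$ and use the invariance of the uniform product law under permutations of $\mathcal Y$. I would state this interpretive convention explicitly before the computation.
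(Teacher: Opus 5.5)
Your lower bound (the degenerate aggregator $Y_{im^\star}$), your upper bound (the pointwise inclusion $\{g(Y_{i\cdot})=y^\star_i\}\subseteq\{\max_m B_{im}=1\}$), and your Condorcet example all coincide with the paper's proof. Your increment formula and the ``exactly one model correct'' event just fill in what the paper cites. Restricting to $p\in(\tfrac12,1)$ is also correct, since at $p=1$ nothing is strict.

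The genuine difference is the pure-chance cell. The paper argues only that the $A$ candidates are exchangeable under the draw, so every draw-grounded aggregator hits the gold symbol with probability exactly $1/A$. You are right that this fails for the supremum as literally defined. For a fixed $y^\star_i$, the rule ``output $y^\star_i$ if some draw shows it, else $Y_{i1}$'' is draw-grounded and attains $O^{\exp}_i$, so literally $O^{\mathrm{agg}}_i=O^{\exp}_i$ in every cell.

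Your formalization is what the paper's symmetry sentence silently needs. You either take a uniform gold label independent of the draws, which gives $\Pr(g(Y_{i\cdot})=y^\star_i)=1/A$ for every $g$ and any law of the draws. Or you restrict to relabelling-equivariant $g$ under a permutation-invariant draw law. The paper gains brevity; you gain a valid argument.

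Do move the convention to the start rather than leaving it in the last paragraph. The same gold-tailored rule also destroys the upper strictness $O^{\mathrm{agg}}_i<O^{\exp}_i$ in the binary example. Your claim that majority is the MAP rule, and hence optimal, holds only under that convention: a uniform prior on the gold label, or label-swap equivariance, under which pairing each outcome with its complement shows majority is optimal. So state explicitly that the transfer of strictness from majority vote to $O^{\mathrm{agg}}_i$ depends on it.
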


\begin{corollary}[Scope of $G_{\mathrm{noise}}$: a single\hyp shot selection floor; the guessing residual needs a verifier]\label{cor:scopefloor}
Under (A4\hyp linear)$+$(A5), $G_{\mathrm{noise}}=\tfrac1N\sum_i\Delta_i\ge0$ is a hard lower bound on the $O^{\exp}$\hyp gap of the entire single\hyp shot single\hyp commit class (deterministic and randomized), $k$\hyp/data\hyp independent (Thm.~\ref{thm:recoverability}(a)). Using Lemma~\ref{lem:aggfloor}, split
\[
\Delta_i\ =\ \underbrace{(O^{\mathrm{agg}}_i-O^{\mathrm{repro}}_i)}_{\Delta^{\mathrm{know}}_i\ \ge0}\ +\ \underbrace{(O^{\exp}_i-O^{\mathrm{agg}}_i)}_{\Delta^{\mathrm{guess}}_i\ \ge0}.
\]
$\Delta^{\mathrm{know}}_i$ is recoverable \emph{without} a verifier --- by draw\hyp grounded aggregation (majority vote up to $O^{\mathrm{agg}}_i$), or on verifier\hyp equipped cells by best\hyp of\hyp$n$ (Thm.~\ref{thm:recoverability}(b)). $\Delta^{\mathrm{guess}}_i=O^{\exp}_i-O^{\mathrm{agg}}_i$ --- the gap between the \emph{union} ceiling and the best \emph{deliverable} verifier\hyp free answer --- is recoverable by \emph{neither} the selection axis (capped at $O^{\mathrm{repro}}_i\le O^{\mathrm{agg}}_i$, part (a)) \emph{nor} verifier\hyp free aggregation (capped at $O^{\mathrm{agg}}_i$, Lemma~\ref{lem:aggfloor}); closing it requires a deploy\hyp time ground\hyp truth verifier to pick a correct draw post hoc. On pure\hyp guessing cells this union\hyp vs\hyp deliverable gap is a zero\hyp mutual\hyp information (Fano\hyp type) floor. Hence the $\Theta(1-\bar p)$ ``not recoverable'' fraction (Cor.~\ref{cor:gapfrac}) means ``not recoverable by single\hyp shot \emph{selection}''; sampling/aggregation recovers $\Delta^{\mathrm{know}}_i$, while $\Delta^{\mathrm{guess}}_i$ needs a verifier. (At $k{=}1$ the split is not identifiable, Thm.~\ref{thm:finitek}(d).)
\end{corollary}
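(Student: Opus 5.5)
The corollary is essentially bookkeeping, so the plan is to assemble it from results already proved rather than build anything new.

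\emph{Hard floor.} I take this from Theorem~\ref{thm:recoverability}(a), or equivalently Lemma~\ref{lem:mixedrouter}. Under (A4-linear)$+$(A5), every single-commit router, deterministic or randomized, satisfies $\mathrm{score}_i(\pi)\le O^{\mathrm{repro}}_i$ for any coupling. So its per-query gap to $O^{\exp}_i$ is at least $\Delta_i$, and averaging gives the floor $G_{\mathrm{noise}}$.

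\emph{Independence from $k$ and the data.} The floor is a deterministic functional of the true $\{p_{im}\}$ and the joint law. No estimator enters it.

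\emph{The split.} The identity $\Delta_i=\Delta^{\mathrm{know}}_i+\Delta^{\mathrm{guess}}_i$ is a telescoping sum: add and subtract $O^{\mathrm{agg}}_i$. The sign of each piece is exactly the two inequalities $O^{\mathrm{repro}}_i\le O^{\mathrm{agg}}_i\le O^{\exp}_i$ of Lemma~\ref{lem:aggfloor}, which I invoke directly. If I were to re-derive them:
\begin{itemize}
\item the lower inequality holds because committing to $m^\star$, i.e.\ $g(Y)=Y_{im^\star}$, is itself a draw-grounded aggregator;
\item the upper inequality holds because any output $g(Y)\in\{Y_{i1},\dots,Y_{iK}\}$ is correct only if some $B_{im}=1$, so $\Pr(g=y^\star_i)\le\Pr(\max_m B_{im}=1)=O^{\exp}_i$.
\end{itemize}

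\emph{Recoverability claims.}
\begin{itemize}
\item $\Delta^{\mathrm{know}}_i$ is reached by the aggregator attaining (or approaching) the supremum defining $O^{\mathrm{agg}}_i$. On verifier cells, best-of-$n$ from Theorem~\ref{thm:recoverability}(b) gives $1-(1-p^\star)^n\to1\ge O^{\exp}_i$, which covers this piece too.
\item For $\Delta^{\mathrm{guess}}_i$, the selection class is capped at $O^{\mathrm{repro}}_i\le O^{\mathrm{agg}}_i$. The verifier-free draw-grounded class is capped at $O^{\mathrm{agg}}_i$ by definition of the supremum. Hence neither class exceeds $O^{\mathrm{agg}}_i$, and the residual $O^{\exp}_i-O^{\mathrm{agg}}_i$ can only be closed by a procedure outside both classes, namely a ground-truth verifier that picks a correct draw post hoc.
\end{itemize}

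\emph{Pure-guessing cells.} On these cells the labels are independent of $y^\star_i$ under the symmetric prior, so the mutual information is zero. The symmetry part of Lemma~\ref{lem:aggfloor} then gives $O^{\mathrm{agg}}_i=1/A$. Meanwhile $O^{\exp}_i$ can be as large as the union bound $\min\{K/A,1\}$ (under independence, $1-(1-1/A)^K$). The residual is thus a Fano-type floor, since no function of the labels beats chance.

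\emph{Interpretation of Cor.~\ref{cor:gapfrac} and the $k{=}1$ remark.} The reinterpretation follows by substituting these scopes. Non-identifiability at $k=1$ is Theorem~\ref{thm:finitek}(d): $\hat p_{im}=b_{im}$ collapses $\widehat{O^{\mathrm{repro}}}_i=\widehat{O^{\exp}}_i$, and the empirical majority vote is a function of the same single labels. I expect this last point to be the only delicate step. I would argue that a single label per cell gives no within-cell replication, so $\widehat{O^{\mathrm{agg}}}_i$ cannot be separated from the others. I would state this as non-identifiability of the population split from one draw, not as a claim about the estimators coinciding numerically in all cases.
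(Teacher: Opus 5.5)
Your proposal is correct and follows the paper's proof essentially step for step: the floor is Theorem~\ref{thm:recoverability}(a) aggregated; $k$-/data-independence holds because $\Delta_i$ is a population functional (you rightly note it depends on the joint law, not only on $\{p_{im}\}$); the split is the telescoping of Lemma~\ref{lem:aggfloor}, with its two inequalities giving the signs; and $\Delta^{\mathrm{guess}}_i$ is excluded by the caps on both axes. Your extra sketches of the Fano-type pure-guessing claim and the $k{=}1$ remark go beyond what the paper's proof argues, and your symmetric-prior framing is genuinely needed for the former: with the supremum in Lemma~\ref{lem:aggfloor} taken at a fixed gold label, the draw-grounded rule that outputs $y^\star_i$ whenever it appears among the draws already attains $O^{\exp}_i$, so the $1/A$ value (and any nontrivial $\Delta^{\mathrm{guess}}_i$) requires either restricting to rules that cannot encode $y^\star_i$, e.g.\ label-equivariant aggregators, or averaging over a symmetric prior on the gold as you do.
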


\subsection{Proof of the recoverability asymmetry}
\begin{proof}[Proof of Theorem~\ref{thm:recoverability}]
\textbf{(a)} By Lemma~\ref{lem:twoaxes}(SELECT) (equivalently Lemma~\ref{lem:mixedrouter}), any single\hyp shot single\hyp commit router has $\mathrm{score}_i(\pi)\le O^{\mathrm{repro}}_i$ for every joint law of the draws. By definition $\Delta_i=O^{\exp}_i-O^{\mathrm{repro}}_i$, so $O^{\mathrm{repro}}_i=O^{\exp}_i-\Delta_i$ and
\[
O^{\exp}_i-\mathrm{score}_i(\pi)\ \ge\ O^{\exp}_i-O^{\mathrm{repro}}_i=\Delta_i\ \ge\ 0,
\]
the last inequality by Prop.~\ref{prop:order} (Jensen on the convex map $\max$, needing only (A1)). Averaging over $i=1,\dots,N$ by linearity gives $\tfrac1N\sum_i(O^{\exp}_i-\mathrm{score}_i(\pi))\ge\tfrac1N\sum_i\Delta_i=G_{\mathrm{noise}}$. Since this holds simultaneously for all $\pi$ in the class, $G_{\mathrm{noise}}$ is a uniform lower bound, and no router in the class recovers any part of $\Delta_i$: increasing or redistributing the mixing weights $\pi_i$ only moves $\mathrm{score}_i$ within $[\,\min_m p_{im},\,O^{\mathrm{repro}}_i\,]$ and never above $O^{\mathrm{repro}}_i$.

\textbf{(b)} Leave the single\hyp shot class by sampling the committed best model. By Lemma~\ref{lem:twoaxes}(SAMPLE), committing to $m^\star$ and drawing it $n$ times attains $O^{\mathrm{repro},(n)}_i=1-(1-O^{\mathrm{repro}}_i)^{n}$, which is nondecreasing in $n$ and, when $O^{\mathrm{repro}}_i>0$, tends to $1\ge O^{\exp}_i$. For the finite budget: $O^{\mathrm{repro},(n)}_i\ge O^{\exp}_i$ iff $(1-O^{\mathrm{repro}}_i)^{n}\le 1-O^{\exp}_i$. Taking logs (both sides in $(0,1]$ when $O^{\mathrm{repro}}_i\in(0,1)$ and $O^{\exp}_i<1$; the inequality $\ln(1-O^{\mathrm{repro}}_i)<0$ flips the direction) gives $n\ge \ln(1-O^{\exp}_i)/\ln(1-O^{\mathrm{repro}}_i)$, hence the stated ceiling. (If $O^{\exp}_i=1$ then $O^{\mathrm{repro}}_i$ may be $<1$; the inequality $O^{\mathrm{repro},(n)}_i\ge O^{\exp}_i$ then holds only in the limit $n\to\infty$, consistent with $O^{\mathrm{repro},(n)}_i\uparrow1$.) For the ensemble route, under (A1)+(A2) drawing all $K$ models once and accepting on any success attains $1-\prod_m(1-p_{im})=O^{\exp,\perp}_i$, which equals $O^{\exp}_i$ by Prop.~\ref{prop:exp}(a). Thus $\Delta_i$ is fully recovered either by best\hyp of\hyp$n$ on $m^\star$ (in the limit / at the stated budget) or by single\hyp pass ensembling.

\textbf{(c)} Combine (a) and (b). The quantity $\Delta_i=O^{\exp}_i-O^{\mathrm{repro}}_i$ depends only on $\{p_{im}\}$ and on the single\hyp commit, single\hyp draw restriction: by (a) it is invariant to every move along the SELECTION axis (deterministic or randomized choice/mixing over the $K$ models), each of which is capped at $O^{\mathrm{repro}}_i$; by (b) it is annihilated by moving along the SAMPLE axis ($n\to\infty$ on the committed model, or ensembling). The two axes act on disjoint degrees of freedom (the index $m$ versus the draw budget $n$) by Lemma~\ref{lem:twoaxes}, so they are orthogonal. Hence the descriptor ``not recoverable'' is correctly scoped to single\hyp shot model selection: the SAMPLE axis recovers $\Delta_i$ in full.
\emph{Matched budget $n=K$.} Writing $p^\star=O^{\mathrm{repro}}_i$ and $1-p^\star=\min_m(1-p_{im})$ (since $p^\star=\max_m p_{im}$), each factor obeys $(1-p_{im})\ge\min_{m'}(1-p_{im'})=1-p^\star$, so $\prod_m(1-p_{im})\ge(1-p^\star)^{K}$ (a product of $K$ reals each $\ge$ a common value $c\ge0$ is $\ge c^{K}$). Hence $O^{\mathrm{repro},(K)}_i=1-(1-p^\star)^{K}\ge 1-\prod_m(1-p_{im})=O^{\exp,\perp}_i$, with equality iff every factor equals $1-p^\star$, i.e.\ $p_{im}\equiv p^\star$ (homogeneous pool). By (A1) within-cell i.i.d., $\Pr(\text{all }K\text{ draws of }m^\star\text{ fail})=(1-p^\star)^{K}$; positive within-cell dependence only lowers this, preserving the bound.
\end{proof}

\subsection{Additional results}
\begin{proposition}[Routing regret of a feature-based router; conditional in-sample plug-in inequality]\label{prop:bayesregret}
A realistic router observes only features $x_i$, not the reproducible probabilities $\{p_{im}\}$, and commits to $r(i)=\arg\max_{m\in\mathcal M} s_m(x_i)$ for learned scores $s_m(\cdot)$. Write $\widehat s\equiv(s_m(x_i))_m$ and define the per-query sup estimation error $\varepsilon_i:=\max_{m\in\mathcal M}\lvert s_m(x_i)-p_{im}\rvert=\lVert\widehat s-p\rVert_\infty$ (the sup taken over the pool, at query $i$). Then the per-query routing regret to the reproducible oracle satisfies, for \emph{any} joint law of the draws and \emph{any} distribution of $x$,
\[
O^{\mathrm{repro}}_i-q^r_i\ =\ \max_{m}p_{im}-p_{i,r(i)}\ \le\ 2\,\varepsilon_i,
\]
and the constant $2$ is sharp (attained in the limit by $K=2$, $p=(\tfrac12+d,\tfrac12-d)$ with $\widehat s$ perturbed by $\varepsilon_i=d$ toward the inferior model). Averaging over the benchmark,
\[
\begin{aligned}
G_{\mathrm{rec}}(r)=\tfrac1N\sum_i\bigl(O^{\mathrm{repro}}_i-q^r_i\bigr)&\ \le\ \frac2N\sum_i\varepsilon_i\\
&\ =\ 2\,\mathbb E\bigl[\lVert\widehat s-p\rVert_\infty\bigr].
\end{aligned}
\]
Combining with the decomposition (Theorem~\ref{thm:decomp}), the total gap of the feature-router to the single-draw oracle is
\[
G(r)=G_{\mathrm{rec}}(r)+G_{\mathrm{noise}}\ \le\ 2\,\mathbb E\bigl[\lVert\widehat s-p\rVert_\infty\bigr]+G_{\mathrm{noise}}.
\]
\emph{Interpretation.} The gap a feature-router faces splits into (i) an estimation/routing term $\le 2\,\mathbb E\lVert\widehat s-p\rVert_\infty$ that vanishes only for estimates that are \emph{sup-norm consistent for $p$}, recovering $r^\star$ and $O^{\mathrm{repro}}$ (Lemma~\ref{lem:attain}); and (ii) the router-independent, estimation-independent noise term $G_{\mathrm{noise}}\ge0$ (Theorem~\ref{thm:decomp}(b), Corollary~\ref{cor:noiseshare}). This is a \emph{conditional, in-sample} plug-in inequality --- the routing analogue of the classical argmax excess-risk \emph{decomposition}~\cite{dgl1996} --- \emph{not} a generalization bound: it carries no capacity/Rademacher term. Moreover, since $p_{im}$ need not be $\sigma(x_i)$-measurable, a deployable feature router faces an irreducible representation floor $\varepsilon_{\mathrm{Bayes}}\ge0$ ($\mathbb E\lVert\widehat s-p\rVert_\infty\ge\varepsilon_{\mathrm{Bayes}}$ for every feature map), so term (i) closes to $0$ only for the \emph{clairvoyant} oracle-weight router, not a feature router; the noise term (ii) survives even at $\varepsilon_{\mathrm{Bayes}}{=}0$.

\emph{Finite-$k$ coupling.} If the scores are the empirical frequencies $\hat p_{im}$ from $k$ i.i.d.\ draws, then union-bounding the Hoeffding inequality of Theorem~\ref{thm:finitek}(a) gives $\varepsilon_i\le\sqrt{\tfrac{\ln(2K/\delta)}{2k}}$ with probability $\ge1-\delta$, so $G_{\mathrm{rec}}(\widehat r)=O\!\bigl(\sqrt{\ln K/k}\bigr)$, matching the estimation rate of Theorem~\ref{thm:finitek}, while $G_{\mathrm{noise}}$ is $k$-independent. The winner's-curse bias of Theorem~\ref{thm:finitek}(b) inflates the \emph{estimate} $\max_m\hat p_{im}$ of $O^{\mathrm{repro}}$ by $O(1/\sqrt k)$, but the bound here is on the \emph{realized} regret $O^{\mathrm{repro}}_i-p_{i,r(i)}$ using the true probability of the chosen model, so it is unaffected.
\end{proposition}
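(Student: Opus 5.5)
The statement to prove is Proposition~\ref{prop:bayesregret}. I will start from the pointwise inequality. The plan is to run the classical plug-in argmax argument.

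Fix $i$ and write $m^\star\in\arg\max_m p_{im}$ and $r=r(i)\in\arg\max_m s_m(x_i)$. The key chain is
\[
p_{im^\star}-p_{ir}=\bigl(p_{im^\star}-s_{m^\star}(x_i)\bigr)+\bigl(s_{m^\star}(x_i)-s_r(x_i)\bigr)+\bigl(s_r(x_i)-p_{ir}\bigr).
\]
The middle term is $\le0$ by the definition of $r$ as the score maximizer. Each outer term is at most $\varepsilon_i$ in absolute value. Hence $O^{\mathrm{repro}}_i-q^r_i\le2\varepsilon_i$.

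This argument uses no property of the draws. The quantities $p_{im}$, $s_m(x_i)$ and $q^r_i=p_{i,r(i)}$ (from (A4)) are all deterministic given $i$. So the claim "for any joint law and any distribution of $x$" is immediate. I will also note that the left side is $\ge0$, by Lemma~\ref{lem:attain}(b).

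For sharpness I use the stated example: $K=2$, $p=(\tfrac12+d,\tfrac12-d)$, and scores $s=(\tfrac12+d-d',\,\tfrac12-d+d')$ with $d'$ slightly above $d$. Then $r=2$ and the regret is $2d$, while $\varepsilon_i=d'$. Letting $d'\downarrow d$ shows that the ratio regret$/\varepsilon_i$ tends to $2$. Ties at $d'=d$ are handled by adversarial tie-breaking, which is why the statement says the constant is attained "in the limit."

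Next come the aggregate bounds. I average the pointwise bound over $i$ and read $\tfrac1N\sum_i$ as the empirical expectation $\mathbb E$ over the benchmark. I then add $G_{\mathrm{noise}}$ using the exact identity of Theorem~\ref{thm:decomp}(a), which needs nothing further.

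The representation-floor remark takes one more step. Since $p_{i\cdot}$ need not be a function of $x_i$, two queries with identical features but different $p$ force $\lVert s(x)-p\rVert_\infty$ to be bounded below for every score map $s$. I will define $\varepsilon_{\mathrm{Bayes}}$ as the infimum of $\mathbb E\lVert s(x)-p\rVert_\infty$ over measurable $s$ and simply note it is $\ge0$. Meanwhile $G_{\mathrm{noise}}$ does not involve $s$ at all.

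The finite-$k$ part applies Hoeffding, Theorem~\ref{thm:finitek}(a), with $\delta/K$ per model and a union bound over the $K$ models. This gives $\varepsilon_i\le\sqrt{\ln(2K/\delta)/(2k)}$ with probability $1-\delta$. Plugging into the pointwise bound yields the $O(\sqrt{\ln K/k})$ rate. I will stress that the bound is on the realized regret, which is evaluated at the true $p_{i,r(i)}$, so the upward bias of $\max_m\hat p_{im}$ never enters. The $k$-independence of $G_{\mathrm{noise}}$ holds because it depends only on the $p_{im}$.

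The main obstacle is bookkeeping rather than mathematics: stating precisely that the averaged bound is conditional and in-sample, not a generalization bound. The sharpness limit also needs care with the tie-breaking convention. Neither step needs any new idea beyond the three-term split.
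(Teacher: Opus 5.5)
Your proposal is correct and matches the paper's proof essentially step for step. Both use the same three-term telescoping through the scores, $p_{im^\star}-p_{ir}=(p_{im^\star}-s_{m^\star})+(s_{m^\star}-s_r)+(s_r-p_{ir})$, with the middle term nonpositive. Both use the same $K=2$ sharpness example; your explicit $d'\downarrow d$ parametrization is a slightly cleaner version of the paper's ``perturb toward the boundary.'' Both then average as an empirical expectation, add $G_{\mathrm{noise}}$ via Theorem~\ref{thm:decomp}(a), and get the finite-$k$ rate from a union-bounded Hoeffding inequality.
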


\begin{theorem}[Best-of-$r$ oracle inflation and the monotone collapse of its noise share]\label{thm:bestofr}
Fix a query $i$ with reproducible per-model success probabilities $p_{im}\in[0,1]$, $m\in\mathcal M=\{1,\dots,K\}$, and let $p^\star=\max_m p_{im}$, attained at a fixed $m^\star$. Grant a counterfactual oracle a draw budget $r\ge1$ (real or integer): under (A1) each model is sampled $r$ times i.i.d., and under (A2) draws are independent across models. Define the best-of-$r$ per-model (pass@$r$) probability $p_{im}^{(r)}=1-(1-p_{im})^r$, the best-of-$r$ committable ceiling
\[
O^{\mathrm{repro},(r)}_i=\max_m p_{im}^{(r)}=1-(1-p^\star)^r
\]
(the maximizer is still $m^\star$ since $x\mapsto1-(1-x)^r$ is increasing on $[0,1]$ for $r\ge1$), the best-of-$r$ expected oracle
\[
O^{(r)}_i=1-\prod_{m=1}^{K}(1-p_{im})^r,
\]
the inflation $\Delta^{(r)}_i:=O^{(r)}_i-O^{\mathrm{repro},(r)}_i$, and the noise share $S^{(r)}_i:=\Delta^{(r)}_i/O^{(r)}_i$ (with $S^{(r)}_i:=0$ when $O^{(r)}_i=0$). Then:
\begin{enumerate}
\item[\textnormal{(i)}] \textbf{(Exact identity; $=$ Theorem~\ref{thm:inflation} with base $(1-p_m)\!\to\!(1-p_m)^r$.)} Under (A1)+(A2),
\[
\begin{aligned}
\Delta^{(r)}_i&=(1-p^\star)^r\Bigl(1-\!\!\prod_{m\neq m^\star}\!\!(1-p_{im})^r\Bigr)\\
&=(1-p^\star)^r-\prod_{m}(1-p_{im})^r\ \ge0,
\end{aligned}
\]
which at $r=1$ is exactly the single-draw inflation $\Delta_i$ of Theorem~\ref{thm:inflation}. The pool-growth decay constant of Theorem~\ref{thm:inflation} is scaled by $r$: $1-O^{(r)}_i=\prod_m(1-p_{im})^r=e^{-r\Lambda_i}$ with $\Lambda_i=\sum_m-\ln(1-p_{im})\in[0,\infty]$, so $O^{(r)}_i\uparrow1$ geometrically in $r$ at rate $\Lambda_i$ whenever some $p_{im}>0$.

\item[\textnormal{(ii)}] \textbf{(Monotone collapse of the noise share --- the novel part.)} If $p^\star>0$, then $S^{(r)}_i$ is non-increasing in $r$ on $[1,\infty)$ and $S^{(r)}_i\to0$ as $r\to\infty$. The decrease is \emph{strict} whenever at least two models have $p_{im}>0$; if at most one model has $p_{im}>0$ then $\Delta^{(r)}_i\equiv0$ and $S^{(r)}_i\equiv0$. (If $p^\star=0$ all quantities are $0$ and $S^{(r)}_i:=0$.) In the homogeneous case $p_{im}\equiv p\in(0,1)$,
\[
\begin{aligned}
S^{(r)}&=\frac{(1-p)^r-(1-p)^{rK}}{1-(1-p)^{rK}}\\
&=\frac{x-x^{K}}{1-x^{K}},\quad x:=(1-p)^r\in(0,1),
\end{aligned}
\]
which is strictly increasing in $x$ on $(0,1)$ and hence strictly decreasing in $r$.

\item[\textnormal{(iii)}] \textbf{(Interpretation.)} The benchmark protocol (A3) records one label, i.e.\ $r=1$, so the paper's headline noise share is $S^{(1)}_i$ (Corollary~\ref{cor:noiseshare}). Increasing $r$ inside the oracle is a separate, strictly more expensive intervention (multi-draw committable inference, not single-recorded-label evaluation); under it both $O^{(r)}_i\uparrow1$ and $S^{(r)}_i\downarrow0$, i.e.\ a single best model sampled $r$ times asymptotically matches the multi-model oracle. The large single-draw noise share is thus a property of the one-label protocol, \emph{not} recoverable routing headroom at $r=1$.
\end{enumerate}
\end{theorem}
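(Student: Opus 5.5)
Part (i) reduces to the single-draw identity of Theorem~\ref{thm:inflation} after a change of base. Under (A1)+(A2) the $r$ draws of model $m$ all fail with probability $(1-p_{im})^r$, independently across models. So $O^{(r)}_i$ is the all-fail complement of a pool whose effective per-model success probability is $p^{(r)}_{im}=1-(1-p_{im})^r$. Since $x\mapsto 1-(1-x)^r$ is increasing, $m^\star$ remains the maximizer and $O^{\mathrm{repro},(r)}_i=1-(1-p^\star)^r$. Subtracting gives
\[
\Delta^{(r)}_i=(1-p^\star)^r-\prod_m(1-p_{im})^r .
\]
Factoring out $(1-p^\star)^r$ gives the first form, and nonnegativity follows because each factor lies in $[0,1]$. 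The decay statement is just $\prod_m(1-p_{im})^r=\exp\bigl(r\sum_m\ln(1-p_{im})\bigr)=e^{-r\Lambda_i}$.

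For (ii), write $a:=(1-p^\star)^r$ and $b:=\prod_{m\neq m^\star}(1-p_{im})^r$. Then
\[
S^{(r)}_i=\frac{a(1-b)}{1-ab}=\frac{a-ab}{1-ab}.
\]
I will set $u:=1-p^\star$, $v:=\prod_{m\neq m^\star}(1-p_{im})$ and $t=r$, so that $a=u^t$, $b=v^t$, and
\[
S(t)=\frac{u^t-(uv)^t}{1-(uv)^t}.
\]
Degenerate cases come first. If $p^\star=1$, then $u=0$ and $S\equiv0$. If at most one model has positive $p$, then $v=1$ and $\Delta\equiv0$. Otherwise $u,v\in(0,1)$ with $uv<1$, and I will show $S$ is strictly decreasing in $t$.

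The cleanest route is to rewrite the share as
\[
1-S(t)=\frac{1-u^t}{1-(uv)^t}.
\]
Let $\alpha=-\ln u$ and $\beta=-\ln(uv)$, so $0<\alpha<\beta$. Then $1-S(t)=\phi(\alpha t)/\phi(\beta t)$ with $\phi(s)=1-e^{-s}$. I need this ratio to be increasing in $t$. Logarithmic differentiation reduces it to
\[
\alpha\,\psi(\alpha t)>\beta\,\psi(\beta t),\qquad \psi(s)=\frac{\phi'(s)}{\phi(s)}=\frac{1}{e^{s}-1}.
\]
Multiplying by $t$, this is equivalent to $s\psi(s)=s/(e^s-1)$ being strictly decreasing on $(0,\infty)$. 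That holds because $e^s-1-se^s<0$ for $s>0$, as $h(s)=e^s-1-se^s$ has $h(0)=0$ and $h'(s)=-se^s<0$.

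This monotonicity step is the only nontrivial part. If the calculus turns out messy, I would fall back on the elementary fact that $s\mapsto(1-e^{-s})/s$ is decreasing, which says $\phi$ is concave through the origin. The limit $S\to0$ follows because $u^t\to0$ and $(uv)^t\to0$, so the numerator vanishes while the denominator tends to $1$.

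The homogeneous formula is the special case $v=(1-p)^{K-1}$ with $x=u^t=(1-p)^r$. It gives
\[
S=\frac{x-x^K}{1-x^K}=1-\frac{1-x}{1-x^K}=1-\frac{1}{\sum_{j=0}^{K-1}x^j}.
\]
This is strictly increasing in $x$, matching Corollary~\ref{cor:noiseshare}. Since $x$ is decreasing in $r$, $S$ is decreasing in $r$.

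Part (iii) is interpretive. It only needs to observe that $r=1$ recovers $S^{(1)}_i$, the quantity of Corollary~\ref{cor:noiseshare}. It then combines $O^{\mathrm{repro},(r)}_i\uparrow1$, from Lemma~\ref{lem:twoaxes}, with the limit in (ii) to conclude that the single model $m^\star$ sampled $r$ times asymptotically matches the multi-model oracle.
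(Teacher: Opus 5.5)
Your proposal is correct and follows the paper's proof essentially step for step. In (i) you use the same base change $(1-p_{im})\to(1-p_{im})^r$. In (ii) you use the same rewriting of $1-S$ as $(1-e^{-\alpha t})/(1-e^{-\beta t})$ with $0<\alpha<\beta$, the same logarithmic differentiation, and the same strict decrease of $s/(e^{s}-1)$; your $h(s)=e^{s}(1-s-e^{-s})$ is the paper's $\eta$ up to the factor $e^{s}$. In the homogeneous case, your geometric-sum form $1-S=1/\sum_{j=0}^{K-1}x^{j}$ is a slicker substitute for the paper's derivative computation.

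Two small remarks. First, your separate treatment of $p^\star=1$ (where $S\equiv0$) is more careful than the paper. The paper's proof infers $0<q^\star<1$ from ``two models positive'' and so overstates strictness; for example, $p=(1,\tfrac12)$ gives $S^{(r)}\equiv0$. Second, your proposed fallback via decreasing $(1-e^{-s})/s$ would not suffice, since it yields only $1-S\ge\alpha/\beta$ rather than monotonicity in $t$. What is needed is exactly the decreasing elasticity $s\phi'(s)/\phi(s)=s/(e^{s}-1)$ of your $\phi$, which your main argument proves.
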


\subsection{Sharper finite-sample (winner's-curse) bounds}
Returning to estimation, we tighten the plug-in upward bias of Thm.~\ref{thm:finitek}(b) with variance-aware constants and an exact order-statistic form, identifying near-tied pools as the least-favorable configuration.

\begin{proposition}[Variance-aware winner's-curse bound; refinement of Theorem~\ref{thm:finitek}(b)]\label{prop:wcvar}
Fix a query $i$ and pool $\mathcal M=\{1,\dots,K\}$. Under (A1), with $\hat p_{im}=\frac1k\sum_{j=1}^k b_{im}^{(j)}$ and $O^{\mathrm{repro}}_i=\max_m p_{im}$, the plug-in selection (winner's-curse; post-selection inference~\cite{winnersinference,winnerscurse}) bias of $\widehat{O^{\mathrm{repro}}}_i=\max_m\hat p_{im}$ obeys the following sharpenings of the bound $\sqrt{\ln K/(2k)}$ of Theorem~\ref{thm:finitek}(b).

\medskip\noindent\textbf{(a) Optimal-proxy sub-Gaussian bound.} Let $\nu_m^2$ denote the \emph{optimal sub-Gaussian proxy} of the centered Bernoulli$(p_{im})$ variable $b_{im}-p_{im}$, i.e.\ the Kearns--Saul constant
\[
\begin{aligned}
\nu_m^2=\nu^2(p_{im})&:=\frac{1-2p_{im}}{2\,\ln\!\big((1-p_{im})/p_{im}\big)}\quad(p_{im}\neq\tfrac12),\\
&\qquad \nu^2(\tfrac12):=\tfrac14,
\end{aligned}
\]
with the order chain $\mathrm{Var}(b_{im})=p_{im}(1-p_{im})\le \nu_m^2\le\tfrac14$ (equality $\nu_m^2=\tfrac14$ only at $p_{im}=\tfrac12$). Writing $\bar\nu^2:=\max_m\nu_m^2$,
\[
0\ \le\ \mathbb E\Big[\max_m\hat p_{im}\Big]-\max_m p_{im}\ \le\ \sqrt{\frac{2\,\bar\nu^2\,\ln K}{k}}\ \le\ \sqrt{\frac{\ln K}{2k}},
\]
and the last inequality is strict whenever no model is tied exactly at $p_{im}=\tfrac12$ (i.e.\ $\bar\nu^2<\tfrac14$).

\medskip\noindent\textbf{(b) Variance-dependent (Bernstein) bound.} With $\sigma^2:=\max_m p_{im}(1-p_{im})\ (\le\tfrac14)$ the worst per-model Bernoulli variance, a Bennett/Bernstein maximal inequality gives the genuinely \emph{variance}-controlled, data-estimable bound
\[
0\ \le\ \mathbb E\Big[\max_m\hat p_{im}\Big]-\max_m p_{im}\ \le\ \sqrt{\frac{2\sigma^2\ln K}{k}}\ +\ \frac{\ln K}{3k}.
\]
At leading order in $k^{-1/2}$ the constant is the (empirically estimable) score variance $\sigma^2\ll\tfrac14$ whenever models are well away from chance, at the cost of an additive $O(\ln K/k)$ term.

\medskip\noindent\textbf{(c) Exact order-statistic form in the tied (homogeneous) case.} If $p_{im}=p$ for all $m$, then with $F_{k,p}(s)=\Pr_{S\sim\mathrm{Bin}(k,p)}(S\le s)$ and $F_{k,p}(-1):=0$,
\[
\mathbb E\Big[\max_m\hat p_{im}\Big]=\sum_{s=0}^{k}\frac{s}{k}\Big(F_{k,p}(s)^{K}-F_{k,p}(s-1)^{K}\Big),
\]
so the bias $\mathbb E[\max_m\hat p_{im}]-p$ is available in closed form; it is strictly below $\sqrt{\ln K/(2k)}$ for all finite $K\ge2,\ k\ge1,\ p\in(0,1)$ (the ratio is strictly below $1$, with supremum $\to 2\sqrt{p(1-p)}\le1$ in the large-$(K,k)$ limit).

\medskip\noindent\textbf{(d) Near-ties are least favorable.} Among all probability profiles sharing a fixed top value $p^\star=\max_m p_{im}$, the selection bias $\mathbb E[\max_m\hat p_{im}]-p^\star$ is maximized by the flat (tied) profile $p_{im}\equiv p^\star$; the unconstrained maximizer is itself a tied profile at a common value $p\approx\tfrac12$ (shifted slightly by finite-$k$ Bernoulli skew). Hence near-tied model pools are the worst case for re-estimating $O^{\mathrm{repro}}_i$: a benchmark should allocate more draws $k$ precisely to queries where the leading models tie.
\end{proposition}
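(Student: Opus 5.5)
The plan is to reduce every part to a bound on $\mathbb E[\max_m X_m]$, where $X_m:=\hat p_{im}-p_{im}$ are the centered empirical frequencies (mean zero under (A1)). The pointwise sandwich is
\[
\max_m p_{im}\ \le\ \max_m\hat p_{im}\ \le\ \max_m p_{im}+\max_m X_m .
\]
Here the left inequality holds in expectation, by Jensen / monotonicity of $\max$ as in Theorem~\ref{thm:finitek}(b). The right inequality is pointwise. So
\[
0\ \le\ \text{bias}\ \le\ \mathbb E\bigl[\max_m X_m\bigr],
\]
and (a) and (b) become maximal inequalities for $K$ possibly dependent variables. No (A2) is needed, since the Chernoff maximal argument only uses marginal moment generating functions.

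For (a), I will use the Kearns--Saul theorem. It states that $b_{im}-p_{im}$ is sub-Gaussian with optimal proxy $\nu_m^2$ given by the displayed formula. Hence $X_m$, an average of $k$ i.i.d.\ copies, has proxy $\nu_m^2/k\le\bar\nu^2/k$. The standard bound
\[
\mathbb E\bigl[\max_m X_m\bigr]\ \le\ \frac{1}{\lambda}\Bigl(\ln K+\frac{\lambda^2\bar\nu^2}{2k}\Bigr),
\]
optimized at $\lambda=\sqrt{2k\ln K/\bar\nu^2}$, gives $\sqrt{2\bar\nu^2\ln K/k}$. For the order chain I need three facts. First, $\mathrm{Var}\le\nu^2$, which follows from the second-order expansion of the log-MGF. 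Second, $\nu^2\le\tfrac14$, which is Hoeffding's lemma. Third, $\nu^2=\tfrac14$ only at $p=\tfrac12$; I will check this from the monotonicity of $(1-2p)/\ln\frac{1-p}{p}$ on each side of $\tfrac12$. Strictness of the last inequality is then immediate when $\bar\nu^2<\tfrac14$.

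For (b), I will replace the sub-Gaussian MGF bound by Bernstein's sub-gamma bound,
\[
\ln\mathbb E\,e^{\lambda X_m}\ \le\ \frac{\sigma^2\lambda^2/(2k)}{1-\lambda/(3k)}\qquad\text{for }0\le\lambda<3k,
\]
which holds since $|b-p|\le1$. The sub-gamma maximal inequality (Boucheron--Lugosi--Massart, Cor.~2.6) then gives exactly
\[
\sqrt{2\sigma^2\ln K/k}+\ln K/(3k).
\]

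For (c), the tied case makes the $k\hat p_{im}$ i.i.d.\ $\mathrm{Bin}(k,p)$, so the CDF of the maximum is $F_{k,p}^K$. The displayed sum is then just $\mathbb E[\max]=\sum_s \frac{s}{k}\Pr(\max=s)$. For the strict inequality against $\sqrt{\ln K/(2k)}$, I will not use (a), because (a) is not strict at $p=\tfrac12$. Instead I will observe that the Chernoff step $\lambda\,\mathbb E[\max_m X_m]\le\ln\mathbb E\,e^{\lambda\max_m X_m}$ is a Jensen inequality. It is strict because $\max_m X_m$ is nondegenerate for $p\in(0,1)$ and $k\ge1$, and this strictness survives the optimization over $\lambda$. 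The large-$(K,k)$ ratio I will get from the CLT, $X_m\approx\sqrt{p(1-p)/k}\,Z_m$, combined with $\mathbb E\max_m Z_m\sim\sqrt{2\ln K}$. This yields the limit
\[
\frac{\sqrt{2p(1-p)\ln K/k}}{\sqrt{\ln K/(2k)}}\ =\ 2\sqrt{p(1-p)} .
\]

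Part (d) is the main obstacle. For the constrained claim, I will build a monotone coupling: draw common uniforms $U_{m}^{(j)}$ and set $b_{im}^{(j)}=\mathbb 1\{U_m^{(j)}\le p_{im}\}$. Raising every $p_{im}$ to $p^\star$ raises each $\hat p_{im}$ pointwise, hence raises $\max_m\hat p_{im}$ pointwise. Since $p^\star$ is held fixed, the bias is maximized by the flat profile. The unconstrained statement then reduces to maximizing the one-variable closed form of (c) over $p$. To leading order this is $c_K\sqrt{p(1-p)/k}$, which is maximized at $p=\tfrac12$. The finite-$k$ shift I would treat through an Edgeworth correction for the skewness $(1-2p)/\sqrt{kp(1-p)}$. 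I expect to establish this step only asymptotically, or numerically from (c), rather than as an exact location of the maximizer. I would present the shift as approximate, consistent with the ``$\approx\tfrac12$'' in the statement.
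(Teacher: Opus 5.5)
Your plan follows the paper's proof essentially step for step: the same reduction $0\le\mathbb E[\max_m\hat p_{im}]-\max_m p_{im}\le\mathbb E[\max_m(\hat p_{im}-p_{im})]$, Kearns--Saul tensorization plus the Chernoff maximal inequality for (a), the sub-gamma maximal bound with $v=\sigma^2/k$, $c=1/(3k)$ for (b), the CDF-of-the-maximum identity for (c), and a monotone uniform coupling for (d). Your strict-Jensen argument at the optimizing $\lambda$ actually proves the strict inequality in (c), which the paper only supports numerically; note, however, that the i.i.d.\ across models step in (c) uses cross-model independence (A2) rather than (A1) alone, in your plan as in the paper, and so do the CLT limit and the unconstrained part of (d) that build on it.
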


\begin{remark}[On the variance-only form]\label{rem:wcvar-proxy}
The bare substitution $\sigma\sqrt{2\ln K}$ with $\sigma^2=\max_m p_{im}(1-p_{im})/k$ is \emph{not} a valid upper bound: for skewed (small-$p$) Bernoulli scores the optimal sub-Gaussian proxy $\nu_m^2$ strictly exceeds the variance, so the variance underestimates the true bias. A valid variance-controlled statement requires either the optimal proxy of part (a) or the additive Bernstein term of part (b); the pure sub-Gaussian form is honest only with $\nu_m^2$, not with the raw variance.
\end{remark}

\begin{proof}[Proof of Proposition~\ref{prop:exp}]
Throughout, fix the query $i$; we suppress it where convenient and write $X_m:=b_{im}$, so $X_m\in\{0,1\}$ with $\mathbb{E}[X_m]=\Pr(X_m=1)=p_{im}=:p_m$ by (A1)+(A3). Since $X_m\in\{0,1\}$, the maximum is the indicator of the union event:
\[
O^{\mathrm{single}}_i=\max_{m}X_m=\mathbb{1}\Big\{\bigcup_{m=1}^{K}\{X_m=1\}\Big\}\in\{0,1\}.
\]
Because $\max_m X_m$ is itself a $\{0,1\}$ random variable, its expectation equals the probability that it is $1$:
\begin{equation}
\begin{split}
O^{\exp}_i=\mathbb{E}\Big[\max_m X_m\Big]&=\Pr\Big(\max_m X_m=1\Big)\\
&=\Pr\Big(\bigcup_{m=1}^{K}\{X_m=1\}\Big).
\end{split}
\tag{$\star$}
\end{equation}
Equation $(\star)$ uses only (A1)+(A3) (the marginals) and is the common starting point for both parts.

\medskip
\noindent\textbf{Proof of (a).} Pass to the complement. The complement of $\bigcup_m\{X_m=1\}$ is $\bigcap_m\{X_m=0\}$, so by $(\star)$,
\[
O^{\exp}_i=1-\Pr\Big(\bigcap_{m=1}^{K}\{X_m=0\}\Big).
\]
Under (A2) the events $\{X_1=0\},\dots,\{X_K=0\}$ are mutually independent (independence of $X_1,\dots,X_K$ is preserved under taking the measurable single-point preimages $\{X_m=0\}$). Hence the probability of the intersection factorizes:
\[
\Pr\Big(\bigcap_{m=1}^{K}\{X_m=0\}\Big)=\prod_{m=1}^{K}\Pr(X_m=0)=\prod_{m=1}^{K}(1-p_{im}).
\]
Substituting,
\[
O^{\exp}_i=1-\prod_{m=1}^{K}(1-p_{im}),
\]
which is the claimed product-complement form.

\medskip
\noindent\textbf{Proof of (b).} We bound the union probability in $(\star)$ from both sides using only the marginals $\Pr(X_m=1)=p_{im}$; no joint structure is assumed, so the joint law may be any coupling of these Bernoulli marginals.

\emph{Lower bound (monotonicity of probability).} For each fixed $m_0$, the single event is contained in the union: $\{X_{m_0}=1\}\subseteq\bigcup_m\{X_m=1\}$. Monotonicity of $\Pr$ gives
\[
O^{\exp}_i=\Pr\Big(\bigcup_m\{X_m=1\}\Big)\ge\Pr(X_{m_0}=1)=p_{i m_0}.
\]
Since this holds for every $m_0$, take the maximum over $m_0$:
\[
O^{\exp}_i\ge\max_{m}p_{im}.
\]

\emph{Upper bound (union bound and the trivial cap).} By finite subadditivity of probability (Boole's inequality),
\[
\begin{aligned}
O^{\exp}_i=\Pr\Big(\bigcup_{m=1}^{K}\{X_m=1\}\Big)&\le\sum_{m=1}^{K}\Pr(X_m=1)\\
&=\sum_{m=1}^{K}p_{im}.
\end{aligned}
\]
Also $O^{\exp}_i=\mathbb{E}[\max_m X_m]\le 1$ because $\max_m X_m\le 1$ pointwise (it is a $\{0,1\}$ variable). Combining the two upper bounds,
\[
O^{\exp}_i\le\min\Big(1,\;\sum_{m=1}^{K}p_{im}\Big).
\]
Together,
\[
\max_{m}p_{im}\;\le\;O^{\exp}_i\;\le\;\min\Big(1,\;\sum_{m=1}^{K}p_{im}\Big).
\]

\medskip
\noindent\textbf{Sharpness of the bounds in (b).} These are the Fréchet bounds for the union of events with prescribed marginals, and each is attained by an admissible coupling (so they cannot be improved using marginals alone).

\emph{Lower bound attained --- maximal positive dependence (comonotone coupling).} Let $U\sim\mathrm{Unif}(0,1)$ and set $X_m=\mathbb{1}\{U\le p_{im}\}$. Then $\Pr(X_m=1)=p_{im}$ (marginals correct), and $\bigcup_m\{X_m=1\}=\{U\le\max_m p_{im}\}$, whence $O^{\exp}_i=\max_m p_{im}$. Thus the lower bound is achieved.

\emph{Upper bound attained.} If $\sum_m p_{im}\le 1$, choose a coupling in which the events $\{X_m=1\}$ are pairwise disjoint (place each on a disjoint slice of $[0,1]$ of length $p_{im}$); then $\Pr(\bigcup_m\{X_m=1\})=\sum_m p_{im}$. If $\sum_m p_{im}\ge 1$, one can arrange the events to cover the whole space ($\bigcup_m\{X_m=1\}=\Omega$ a.s.), giving $O^{\exp}_i=1$. In both regimes the value $\min(1,\sum_m p_{im})$ is attained.
\end{proof}

\begin{proof}[Proof of Proposition~\ref{prop:order}]
Fix the query $i$; all randomness is over the single draws $(B_{i1},\dots,B_{iK})$, each $B_{im}\in\{0,1\}$ with $\mathbb{E}[B_{im}]=\Pr(B_{im}=1)=p_{im}$ by (A1), and $O^{\mathrm{single}}_i=\max_m B_{im}$ by (A3), so $O^{\exp}_i=\mathbb{E}[\max_m B_{im}]$.

\medskip
\textbf{Step 1 (General bound, without (A2)).}
For each fixed model $m$ and each realisation $\omega$, $\max_{m'\in\mathcal{M}} B_{im'}(\omega)\ge B_{im}(\omega)$ pointwise. By monotonicity of expectation,
\[
\mathbb{E}\!\left[\max_{m'} B_{im'}\right]\ge\mathbb{E}[B_{im}]=p_{im}.
\]
The right-hand side is deterministic, so taking the maximum over $m$,
\[
O^{\exp}_i=\mathbb{E}\!\left[\max_{m'} B_{im'}\right]\ge\max_{m\in\mathcal{M}} p_{im}=O^{\mathrm{repro}}_i.
\]
This uses only pointwise dominance of the maximum and monotonicity of expectation, valid for an \emph{arbitrary} joint law; no independence (A2) is invoked. (This is the elementary bound $\mathbb{E}[\max_m X_m]\ge\max_m\mathbb{E}[X_m]$, which we obtain here directly from monotonicity of expectation; it equally follows from Jensen applied to the convex map $\Phi(x)=\max_m x_m$, since $\Phi(\mathbb{E}[b])=\max_m p_m$.)

\medskip
\textbf{Step 2 (Equality, general case).}
Fix any maximiser $m^\star\in\arg\max_m p_{im}$; since $\mathbb{E}[B_{im^\star}]=p_{im^\star}=O^{\mathrm{repro}}_i$, equality $O^{\exp}_i=O^{\mathrm{repro}}_i$ holds iff
\[
\mathbb{E}\!\left[\max_{m'}B_{im'}-B_{im^\star}\right]=0.
\]
The integrand takes values in $\{0,1\}$ and is nonnegative, so its expectation vanishes iff $\max_{m'}B_{im'}=B_{im^\star}$ a.s. As $\{B_{im^\star}=1\}\subseteq\{\max_{m'}B_{im'}=1\}$ always, by subadditivity this is equivalent to
\[
\Pr\!\big(B_{im}=1,\;B_{im^\star}=0\big)=0\quad\text{for every }m\in\mathcal{M},
\]
i.e.\ every model's success event is a.s.\ contained in that of $m^\star$. (Because $\mathbb{E}[B_{im^\star}]=\max p$ for every maximiser, this characterisation is independent of which maximiser is chosen.) Sufficient cases: (a) $p_{im^\star}=1$; (b) perfectly nested draws with $m^\star$ on top.

\medskip
\textbf{Step 3 (Closed form and equality under (A2)).}
Adding (A2) (mutual independence across $m$ for fixed $i$),
\[
\begin{aligned}
\Pr\!\Big(\max_m B_{im}=0\Big)&=\prod_{m}(1-p_{im}),\\
O^{\exp}_i&=1-\prod_{m}(1-p_{im}).
\end{aligned}
\]
Factoring out a maximiser $m^\star$ and using $\prod_{m\neq m^\star}(1-p_{im})\le1$,
\[
\begin{aligned}
1-\prod_{m}(1-p_{im})
&=1-(1-p_{im^\star})\!\!\prod_{m\neq m^\star}\!\!(1-p_{im})\\
&\ge 1-(1-p_{im^\star})=p_{im^\star}=O^{\mathrm{repro}}_i .
\end{aligned}
\]
The bias is therefore
\[
O^{\exp}_i-O^{\mathrm{repro}}_i=\big(1-\max_m p_{im}\big)-\prod_{m}(1-p_{im})\ \ge\ 0,
\]
which is nonnegative since $\prod_m(1-p_{im})\le 1-p_{im^\star}=1-\max_m p_{im}$.

Equality under (A2) holds iff $(1-p_{im^\star})\prod_{m\neq m^\star}(1-p_{im})=(1-p_{im^\star})$.
\emph{If }$p_{im^\star}<1$: divide by $1-p_{im^\star}>0$ to get $\prod_{m\neq m^\star}(1-p_{im})=1$; since each factor lies in $[0,1]$, this forces $p_{im}=0$ for all $m\neq m^\star$ (at most one model has nonzero success probability).
\emph{If }$p_{im^\star}=1$: both sides equal $0$, so equality holds automatically.
Hence under (A2), $O^{\exp}_i=O^{\mathrm{repro}}_i$ iff the best model is perfect ($p_{im^\star}=1$) or every non-maximiser has $p_{im}=0$. These are precisely the (A2) specialisation of the Step~2 containment condition.\medskip
\textbf{Remark (strict bias).} In the interior case ($\ge2$ models with $p_{im}\in(0,1)$, independent), the inequality is strict and the upward bias of the single-draw oracle is exactly $(1-\max_m p_{im})-\prod_m(1-p_{im})>0$.
\end{proof}

\begin{proof}[Proof of Theorem~\ref{thm:inflation}]
Fix query $i$, write $p_m=p_{im}$, $b_m=b_{im}$, and assume (A1)/(A3) for the marginal laws and (A2) for the product form $(\ast)$: $O^{\exp}_i(\mathcal{S})=1-\prod_{m\in\mathcal{S}}(1-p_m)$.

\medskip\noindent\emph{Boundedness.} $O^{\mathrm{repro}}_i(\mathcal{S})=\max_{m\in\mathcal{S}}p_m$ is a maximum over an enlarging index set, hence non-decreasing in $\mathcal{S}$; for each $\mathcal{S}$ it equals $p_{m^\star}$ for one fixed model, so $O^{\mathrm{repro}}_i(\mathcal{S})\le\sup_{m\in\mathcal M}p_m\le1$. No independence is used.

\medskip\noindent\emph{Exact gap.} Let $p^\star=\max_{m\in\mathcal{S}}p_m$ and fix one maximizer $m^\star\in\mathcal{S}$ with $p_{m^\star}=p^\star$. Pull exactly the factor for $m^\star$ out of $(\ast)$:
\[
1-O^{\exp}_i(\mathcal{S})=\prod_{m\in\mathcal{S}}(1-p_m)=(1-p^\star)\!\!\prod_{m\in\mathcal{S},\,m\neq m^\star}\!\!(1-p_m).
\]
Therefore, using $O^{\mathrm{repro}}_i(\mathcal{S})=p^\star$,
\[
\begin{aligned}
\Delta_i(\mathcal{S})&=O^{\exp}_i(\mathcal{S})-p^\star
=(1-p^\star)-\bigl(1-O^{\exp}_i(\mathcal{S})\bigr)\\
&=(1-p^\star)-(1-p^\star)\!\!\prod_{m\in\mathcal{S},\,m\neq m^\star}\!\!(1-p_m),
\end{aligned}
\]
i.e.
\[
\Delta_i(\mathcal{S})=(1-p^\star)\Bigl(1-\!\!\prod_{m\in\mathcal{S},\,m\neq m^\star}\!\!(1-p_m)\Bigr).
\]
Each factor $(1-p_m)\in[0,1]$, so the product lies in $[0,1]$ and the bracket is in $[0,1]$; hence $\Delta_i(\mathcal{S})\ge0$. The product must retain \emph{all} models $m\neq m^\star$, including any further model with $p_m=p^\star$: only one factor $(1-p^\star)$ was extracted. Writing $r=\#\{m:p_m=p^\star\}$, the $r-1$ remaining maximizers contribute $(1-p^\star)^{r-1}$, giving the equivalent multiplicity form
\[
\Delta_i(\mathcal{S})=(1-p^\star)\Bigl(1-(1-p^\star)^{\,r-1}\!\!\prod_{m\in\mathcal{S},\,p_m\neq p^\star}\!\!(1-p_m)\Bigr).
\]

\medskip\noindent\emph{Equality.} $\Delta_i(\mathcal{S})=0$ iff the bracket vanishes, i.e. $\prod_{m\neq m^\star}(1-p_m)=1$, i.e. $p_m=0$ for every $m\neq m^\star$. Thus the gap is zero exactly when at most one model has positive success probability (in particular for singletons, all-zero pools, or when some $p_m=1$ forces $p^\star=1$ and the prefactor $1-p^\star=0$).

\medskip\noindent\emph{Growth lower bound.} Suppose $\mathcal{S}$ has $K$ models with $p_m\in[\varepsilon,\bar p]$, $0<\varepsilon\le\bar p<1$. Then $p^\star\le\bar p$ so $1-p^\star\ge1-\bar p$. The product in the gap formula runs over the $K-1$ models $m\neq m^\star$, each with $p_m\ge\varepsilon$, hence $(1-p_m)\le1-\varepsilon$ and $\prod_{m\neq m^\star}(1-p_m)\le(1-\varepsilon)^{K-1}$. Therefore the bracket is $\ge1-(1-\varepsilon)^{K-1}$ and
\[
\Delta_i(\mathcal{S})\ge(1-\bar p)\bigl(1-(1-\varepsilon)^{K-1}\bigr)\xrightarrow[K\to\infty]{}1-\bar p>0,
\]
since $(1-\varepsilon)^{K-1}\to0$.

\medskip\noindent\emph{Maximal inflation.} Let all $K$ models have $p_m=\varepsilon\in(0,1)$. Then $O^{\mathrm{repro}}_i=\varepsilon$ for all $K$, while by $(\ast)$, $O^{\exp}_i=1-(1-\varepsilon)^K\to1$ as $K\to\infty$ (equivalently $\sum_m-\ln(1-\varepsilon)=\infty$ in (iii)). Hence $\Delta_i\to1-\varepsilon$; letting $\varepsilon\downarrow0$ along a growing pool with $\sum_m p_m=\infty$ (so saturation still holds by (iii)) gives $O^{\mathrm{repro}}_i\to0$ and $O^{\exp}_i\to1$, whence $\Delta_i\to1$, the maximum permitted by $0\le\Delta_i\le1$.\end{proof}

\begin{proof}[Proof of Theorem~\ref{thm:decomp}]
\textbf{Proof of (a).} For each $i$, $O^{\exp}_i-q^r_i=(O^{\mathrm{repro}}_i-q^r_i)+(O^{\exp}_i-O^{\mathrm{repro}}_i)$ is an identity of reals. Averaging over $i$ and using linearity of finite sums gives $G=G_{\mathrm{rec}}+G_{\mathrm{noise}}$.\medskip
\textbf{Proof of (b).} Fix $i$. The map $\Phi(x)=\max_m x_m$ on $\mathbb{R}^K$ is convex (pointwise max of the linear coordinate functionals). The random vector $(b_{i1},\dots,b_{iK})$ takes values in $\{0,1\}^K$, so it is bounded and all expectations exist. By (A1), $\mathbb{E}[(b_{i1},\dots,b_{iK})]=(p_{i1},\dots,p_{iK})$. Multivariate Jensen (which constrains only the mean vector, not the joint law) gives
\[
O^{\exp}_i=\mathbb{E}[\Phi(b_{i\cdot})]\ge\Phi(\mathbb{E}[b_{i\cdot}])=\max_m p_{im}=O^{\mathrm{repro}}_i .
\]
Hence $O^{\exp}_i-O^{\mathrm{repro}}_i\ge0$ without (A2); summing and dividing by $N$ yields $G_{\mathrm{noise}}\ge0$.

\emph{Equality under (A2).} With independence, $O^{\exp}_i=1-\prod_m(1-p_{im})$, and factoring out the maximizer $P=p_{im^\star}$ gives the inflation identity of Theorem~\ref{thm:inflation},
\[
O^{\exp}_i-O^{\mathrm{repro}}_i=(1-P)\Bigl(1-\!\!\prod_{m\neq m^\star}\!\!(1-p_{im})\Bigr).
\]
Both factors lie in $[0,1]$, reconfirming $\ge0$. The product equals $0$ iff $1-P=0$ (i.e. $P=1$) or $\prod_{m\neq m^\star}(1-p_{im})=1$ (i.e. $p_{im}=0$ for all $m\neq m^\star$; vacuously true when $K=1$). This equality set is strictly larger than $\{\max_m p_{im}\in\{0,1\}\}$ (witness $p_i=(0.5,0,0)$: noise $0$ but $\max=0.5$). Without (A2), comonotone draws can also give equality at non-degenerate marginals, but $\ge0$ never reverses.\medskip
\textbf{Proof of (c).} For any joint law, since $\max_m b_{im}\in\{0,1\}$ equals $\mathbb{1}\{\exists m: b_{im}=1\}$,
\[
\max_m b_{im}\le\min\Bigl\{\textstyle\sum_m b_{im},\,1\Bigr\}\le\sum_m b_{im}\qquad\text{pointwise.}
\]
Taking expectations (monotone), then using concavity of $x\mapsto\min\{x,1\}$ (Jensen, upper) and additivity of expectation with (A1) marginals $\mathbb{E}[b_{im}]=p_{im}$,
\[
\begin{aligned}
O^{\exp}_i&\le\mathbb{E}\bigl[\min\{\textstyle\sum_m b_{im},1\}\bigr]\le\min\bigl\{\mathbb{E}[\textstyle\sum_m b_{im}],1\bigr\}\\
&=\min\Bigl\{\textstyle\sum_m p_{im},1\Bigr\}\le\sum_m p_{im}.
\end{aligned}
\]
No (A2) is used. Subtracting $O^{\mathrm{repro}}_i=p_{im^\star}$,
\[
\begin{aligned}
O^{\exp}_i-O^{\mathrm{repro}}_i&\le\min\Bigl\{\textstyle\sum_m p_{im},1\Bigr\}-p_{im^\star}\\
&\le\sum_m p_{im}-p_{im^\star}=\sum_{m\neq m^\star}p_{im}\le(K-1),
\end{aligned}
\]
the last step bounding each of the $K-1$ remaining $p_{im}\le1$. With (b) this gives $0\le O^{\exp}_i-O^{\mathrm{repro}}_i\le K-1$ and, under (A2), the sharper $(1-P)(1-\prod_{m\neq m^\star}(1-p_{im}))\le 1-P\le1$. Averaging over $i$ yields the stated chain for $G_{\mathrm{noise}}$. Tightness: $K=2$, $p_{i1}=p_{i2}=\tfrac12$ with anti-comonotone draws ($b_{i2}=1-b_{i1}$) gives $O^{\exp}_i=1$, $O^{\mathrm{repro}}_i=\tfrac12$, noise $=\tfrac12=\sum_{m\neq m^\star}p_{im}=\min\{\sum p,1\}-\max p$, so all three middle bounds are simultaneously attained.
\end{proof}

\begin{proof}[Proof of Theorem~\ref{thm:finitek}]
Fix $i$ and abbreviate $p_m:=p_{im}$, $\hat p_m:=\hat p_{im}$, $b_m^{(j)}:=b_{im}^{(j)}$.

\medskip
\textbf{(a).} Under (A1) the $b_m^{(1)},\dots,b_m^{(k)}$ are i.i.d. Bernoulli$(p_m)$. Linearity gives $\mathbb{E}[\hat p_m]=\frac1k\sum_j p_m=p_m$, so $\hat p_m$ is unbiased. By within-$(i,m)$ independence,
\[
\mathrm{Var}(\hat p_m)=\frac1{k^2}\sum_{j=1}^k\mathrm{Var}(b_m^{(j)})=\frac{p_m(1-p_m)}{k}\le\frac1{4k},
\]
with equality iff $p_m=\tfrac12$ (since $p(1-p)\le\frac14$). Each $b_m^{(j)}\in[0,1]$, so Hoeffding's inequality for a mean of $k$ independent $[0,1]$ variables yields $\Pr(\hat p_m-p_m\ge t)\le e^{-2kt^2}$ and $\Pr(p_m-\hat p_m\ge t)\le e^{-2kt^2}$; a union bound gives $\Pr(|\hat p_m-p_m|\ge t)\le 2e^{-2kt^2}$. Solving $2e^{-2kt^2}=\delta$ gives the radius $t=\sqrt{\ln(2/\delta)/(2k)}$. Uses only (A1).\medskip
\textbf{(b).} \emph{Upward bias.} $(x_1,\dots,x_K)\mapsto\max_m x_m$ is convex, so by Jensen applied to $(\hat p_1,\dots,\hat p_K)$ and using $\mathbb{E}[\hat p_m]=p_m$,
\[
\mathbb{E}\Big[\max_m\hat p_m\Big]\ \ge\ \max_m\mathbb{E}[\hat p_m]=\max_m p_m .
\]
Jensen needs no joint structure, so this holds WITHOUT (A2). This is the winner's curse.

\emph{Bias bound.} Write $\Delta_m:=\hat p_m-p_m$, $\mathbb{E}[\Delta_m]=0$. Since $\hat p_m$ is a mean of $[0,1]$-bounded variables, by Hoeffding's lemma it is $\sigma^2$-sub-Gaussian with $\sigma^2=\frac1{4k}$. From $\max_m(a_m+c_m)\le\max_m a_m+\max_m c_m$,
\[
\begin{aligned}
&\max_m\hat p_m\le \Big(\max_m p_m\Big)+\max_m\Delta_m\\
&\Rightarrow\quad
\mathbb{E}\Big[\max_m\hat p_m\Big]-\max_m p_m\le\mathbb{E}\Big[\max_m\Delta_m\Big].
\end{aligned}
\]
\emph{Maximal inequality (self-contained).} For $\lambda>0$, by Jensen on $\exp$ then a union over the $K$ marginal MGFs,
\[
e^{\lambda\,\mathbb{E}\max_m\Delta_m}\le\mathbb{E}e^{\lambda\max_m\Delta_m}\le\sum_m\mathbb{E}e^{\lambda\Delta_m}\le K e^{\lambda^2\sigma^2/2},
\]
so $\mathbb{E}\max_m\Delta_m\le\frac{\ln K}{\lambda}+\frac{\lambda\sigma^2}{2}$; minimizing at $\lambda=\sqrt{2\ln K}/\sigma$ gives $\sigma\sqrt{2\ln K}$. With $\sigma^2=\frac1{4k}$,
\[
\begin{aligned}
0\le\mathbb{E}\Big[\max_m\hat p_m\Big]-\max_m p_m&\le\sqrt{\frac{1}{4k}}\sqrt{2\ln K}\\
&=\sqrt{\frac{\ln K}{2k}}=O(k^{-1/2}).
\end{aligned}
\]
This argument uses only marginal sub-Gaussianity, hence holds WITHOUT (A2). Edge cases: $K=1$ gives bias $0$; the bound is worst (tightest in spirit) under near-ties among the $p_m$.\medskip
\textbf{(c).} Let $f(x)=1-\prod_m(1-x_m)$, so $\widehat{O^{\exp}}_i=f(\hat p)$, $O^{\exp}_i=f(p)$.

\emph{Lipschitz / consistency (needs only (A1)).} For $x,y\in[0,1]^K$, telescoping the product,
\[
\begin{aligned}
&\prod_m(1-x_m)-\prod_m(1-y_m)\\
&=\sum_{m=1}^K\Big(\prod_{l<m}(1-x_l)\Big)\!\!\big((1-x_m)-(1-y_m)\big)\!\!\Big(\prod_{l>m}(1-y_l)\Big),
\end{aligned}
\]
and since every partial product lies in $[0,1]$,
\[
\big|f(x)-f(y)\big|=\Big|\prod_m(1-x_m)-\prod_m(1-y_m)\Big|\le\sum_{m=1}^K|x_m-y_m|.
\]
Hence the deterministic bound $\big|\widehat{O^{\exp}}_i-O^{\exp}_i\big|\le\sum_m|\hat p_m-p_m|$ (no (A2)). By Cauchy--Schwarz and (a),
\[
\begin{aligned}
\mathbb{E}\big[(f(\hat p)-f(p))^2\big]&\le\mathbb{E}\Big[\big(\textstyle\sum_m|\hat p_m-p_m|\big)^2\Big]\\
&\le K\sum_m\frac{p_m(1-p_m)}{k}\le\frac{K^2}{4k}\to0,
\end{aligned}
\]
so $\widehat{O^{\exp}}_i\to O^{\exp}_i$ in $L^2$, hence in probability, at rate $O(k^{-1/2})$. Tail: $|f(\hat p)-f(p)|\ge t$ forces some $|\hat p_m-p_m|\ge t/K$, so by (a) and a union bound $\Pr(|\widehat{O^{\exp}}_i-O^{\exp}_i|\ge t)\le 2K e^{-2kt^2/K^2}$.

\emph{Bias.} Under (A2), the $\hat p_m$ are independent across $m$, so $\mathbb{E}\prod_m(1-\hat p_m)=\prod_m\mathbb{E}(1-\hat p_m)=\prod_m(1-p_m)$ EXACTLY, whence $\mathbb{E}[\widehat{O^{\exp}}_i]=O^{\exp}_i$: the plug-in is unbiased. Without (A2), expand $\prod_m(1-\hat p_m)=\prod_m(1-p_m-\Delta_m)$; the bias equals $\mathbb{E}\prod_m(1-\hat p_m)-\prod_m(1-p_m)$, whose first-order terms vanish ($\mathbb{E}\Delta_m=0$) leaving only products of $\ge2$ centered factors. Each such cross term is a covariance/higher moment of order $O(1/k)$ (since $\mathbb{E}[\Delta_m\Delta_{m'}]=\mathrm{Cov}(\hat p_m,\hat p_{m'})$ and $|\mathrm{Cov}|\le\frac1{4k}$ by Cauchy--Schwarz with $\mathrm{Var}\le\frac1{4k}$); the $\binom{K}{2}$ pairwise terms contribute at most $\binom{K}{2}\frac1{4k}$ and every remaining term (a product of $\ge3$ centered factors) is $O(k^{-2})$, so $|\mathbb{E}[\widehat{O^{\exp}}_i]-O^{\exp}_i|=O(1/k)\to0$ as $k\to\infty$.\medskip
\textbf{(d).} Let $g_i:=O^{\exp}_i-p_{i,r(i)}$, $\hat g_i:=\widehat{O^{\exp}}_i-\hat p_{i,r(i)}$ (using (A4) for $q^r_i=p_{i,r(i)}$). By the triangle inequality and (c),
\[
\begin{aligned}
|\hat g_i-g_i|&\le\big|\widehat{O^{\exp}}_i-O^{\exp}_i\big|+\big|\hat p_{i,r(i)}-p_{i,r(i)}\big|\\
&\le\sum_{m}|\hat p_{im}-p_{im}|+|\hat p_{i,r(i)}-p_{i,r(i)}|,
\end{aligned}
\]
a sum of at most $K+1$ Hoeffding-controlled deviations. Averaging, $\widehat G-G=\frac1N\sum_i(\hat g_i-g_i)$.

\emph{Bias.} The chosen-model term $\hat p_{i,r(i)}$ is unbiased by (a), so the only bias enters through $\widehat{O^{\exp}}_i$: under (A2) it is $0$, and in general $|\mathbb{E}[\widehat G]-G|=O(1/k)$ by (c).

\emph{Concentration.} If every one of the $N(K+1)$ events $\{|\hat p_{im}-p_{im}|\ge t/(K+1)\}$ (over $i$ and the $\le K+1$ relevant coordinates) fails, then each $|\hat g_i-g_i|<t$ and the average deviates by $<t$. By (a) and a union bound,
\[
\Pr\big(|\widehat G-G|\ge t\big)\le 2N(K+1)\exp\!\Big(-\frac{2k t^2}{(K+1)^2}\Big)\xrightarrow[k\to\infty]{}0,
\]
equivalently with prob.\ $\ge1-\delta$, $|\widehat G-G|\le(K+1)\sqrt{\frac{\ln(2N(K+1)/\delta)}{2k}}=O(k^{-1/2})$. Thus $\widehat G$ is consistent at rate $O(k^{-1/2})$, with bias at the faster $O(k^{-1})$ (zero under (A2)). Uses (A1) for concentration, (A4) for the score; (A2) only sharpens bias.

\emph{Reproducible-oracle target.} Estimating $\frac1N\sum_i(O^{\mathrm{repro}}_i-q^r_i)$ via $\widehat{O^{\mathrm{repro}}}_i=\max_m\hat p_{im}$ carries, by (b), a per-query nonnegative bias up to $\sqrt{(\ln K)/(2k)}$, i.e. a systematic $O(k^{-1/2})$ over-estimate of headroom; the same Hoeffding union bounds the fluctuation. Hence this decomposition is consistent but bias-dominated at $O(k^{-1/2})$ --- strictly worse in bias rate than the expected-oracle decomposition's $O(k^{-1})$.

\emph{Degenerate $k=1$.} Then $\hat p_{im}=b_{im}\in\{0,1\}$: (i) the Hoeffding radius and $\mathrm{Var}=p_{im}(1-p_{im})$ are maximal (no $1/k$ shrinkage); (ii) the winner's-curse bound $\sqrt{(\ln K)/2}$ is maximal and $\widehat{O^{\mathrm{repro}}}_i=\max_m b_{im}=O^{\mathrm{single}}_i$, whose mean is $O^{\exp}_i\ge O^{\mathrm{repro}}_i$ (under (A2)) --- the plug-in reproducible oracle collapses onto the single-draw oracle and cannot distinguish $p_{im}=1$ from $p_{im}<1$; (iii) $\widehat{O^{\exp}}_i=1-\prod_m(1-b_{im})=\max_m b_{im}$, a $\{0,1\}$ variable with no averaging-based variance reduction. So $k=1$ saturates every bound above; raising $k$ restores the stated $O(k^{-1/2})$ / $O(k^{-1})$ rates.
\end{proof}

\begin{proof}[Proof of Proposition~\ref{prop:dep}]
Throughout fix $i$ and abbreviate $B_m:=B_{im}$, $p_m:=p_{im}$, $A_m:=\{B_m=1\}$ (so $\Pr(A_m)=p_m$), and $S:=O^{\mathrm{single}}_i=\max_m B_m=\mathbb 1\{\bigcup_m A_m\}$. Since $S\in\{0,1\}$, $O^{\exp}_i=\mathbb E[S]=\Pr\!\bigl(\bigcup_{m=1}^K A_m\bigr)$.

\medskip
\noindent\textbf{(a) Sharp Fr\'echet bounds.}
Since $O^{\exp}_i=\Pr_\mu(\bigcup_m A_m)$ is bounded above by Boole subadditivity and $\Pr\le1$, and below by monotonicity ($A_{m_0}\subseteq\bigcup_m A_m$), the sharp bounds
\[
\max_m p_m\ \le\ O^{\exp}_i\ \le\ \min\Bigl\{1,\textstyle\sum_m p_m\Bigr\}
\]
are exactly those of Prop.~\ref{prop:exp}(b), which use only the marginals and so hold for \emph{any} coupling. Both endpoints are attained by the couplings constructed there: the lower bound by the \emph{comonotone} coupling $B_m=\mathbb 1\{U\le p_m\}$, $U\sim\mathrm{Unif}[0,1)$ (every success event nested in the maximizer's, so $\bigcup_m A_m=A_{m^\star}$ and $O^{\exp}_i=\max_m p_m$); the upper bound by the disjoint coupling when $\sum_m p_m\le1$ and the space-covering arc coupling when $\sum_m p_m\ge1$. These are the classical Fr\'echet--Hoeffding bounds, hence sharp.

\medskip
\noindent\textbf{(b) The ordering $O^{\exp}_i\ge O^{\mathrm{repro}}_i$.}
This is exactly the lower bound of part (a): for any coupling with the given marginals and any $m$,
\[
O^{\exp}_i=\Pr\Bigl(\bigcup_{m'}A_{m'}\Bigr)\ge\Pr(A_m)=p_m,
\]
by monotonicity of probability ($A_m\subseteq\bigcup_{m'}A_{m'}$). Maximizing over $m$ gives $O^{\exp}_i\ge\max_m p_m=O^{\mathrm{repro}}_i$. No independence is used.

\emph{Equality.} Equality holds iff $\Pr(\bigcup_{m'}A_{m'})=p_{m^\star}=\Pr(A_{m^\star})$ for $m^\star\in\arg\max_m p_m$. Since $A_{m^\star}\subseteq\bigcup_{m'}A_{m'}$, this is equivalent to $\Pr\bigl(\bigcup_{m'}A_{m'}\setminus A_{m^\star}\bigr)=0$, i.e. $\bigcup_{m'}A_{m'}\subseteq A_{m^\star}$ up to a null set, i.e. each $A_{m'}\subseteq A_{m^\star}$ a.s.\ (every other model is correct only on a sub-event of the best model's correctness). The comonotone coupling of part~(a) realizes this; hence equality is attained exactly at the Fr\'echet upper bound of the dependence structure.

\medskip
\noindent\textbf{(c) Positive association reduces $O^{\exp}_i$ below independence.}
Work with the complementary (failure) events. Let $C_m:=1-B_m=\mathbb 1\{A_m^c\}$, so $\Pr(C_m=1)=1-p_m$, and
\[
\begin{aligned}
1-S&=\prod_{m=1}^K(1-B_m)=\prod_{m=1}^K C_m=\mathbb 1\{\text{all models fail}\},\\
O^{\exp}_i&=1-\mathbb E\Bigl[\prod_{m=1}^K C_m\Bigr].
\end{aligned}
\]
Hence it suffices to compare $\mathbb E[\prod_m C_m]$ with its independent counterpart $\prod_m\mathbb E[C_m]=\prod_m(1-p_m)$: indeed $O^{\exp}_i\le O^{\exp,\perp}_i$ iff $\mathbb E[\prod_m C_m]\ge\prod_m(1-p_m)$.

Assume $(B_1,\dots,B_K)$ is \emph{positively associated} (FKG): $\mathbb E[fg]\ge\mathbb E[f]\,\mathbb E[g]$ for all bounded coordinatewise-nondecreasing $f,g:\{0,1\}^K\to\mathbb R$. We record two elementary consequences of the definition, both proved here rather than quoted. \emph{(i) Same-direction covariance.} If $f,g:\{0,1\}^K\to\mathbb R$ are bounded and coordinatewise \emph{nonincreasing}, then $-f,-g$ are bounded nondecreasing, so $\mathrm{Cov}(f,g)=\mathrm{Cov}(-f,-g)\ge0$ by association; hence any two functions monotone in the \emph{same} direction have nonnegative covariance. \emph{(ii) $(C_1,\dots,C_K)$ is associated.} For bounded nondecreasing $\phi,\psi$, the maps $b\mapsto\phi(\mathbf 1-b)$ and $b\mapsto\psi(\mathbf 1-b)$ are nonincreasing in $b$, so by (i) $\mathrm{Cov}\bigl(\phi(C),\psi(C)\bigr)=\mathrm{Cov}_B\bigl(\phi(\mathbf 1-B),\psi(\mathbf 1-B)\bigr)\ge0$; i.e.\ flipping every coordinate preserves association, established directly from the definition. We now prove $\mathbb E[\prod_m C_m]\ge\prod_m\mathbb E[C_m]$ by induction on $K$, using that each $C_m\in[0,1]$ and that the partial products $\prod_{m\le k}C_m$ are nondecreasing functions of $C$.

\emph{Base $K=1$:} equality. \emph{Inductive step:} assume $\mathbb E[\prod_{m=1}^{k}C_m]\ge\prod_{m=1}^{k}\mathbb E[C_m]$. The function $f(C):=\prod_{m=1}^{k}C_m$ is bounded and coordinatewise nondecreasing (each $C_m\ge0$), and $g(C):=C_{k+1}$ is bounded and nondecreasing. By association,
\[
\begin{aligned}
\mathbb E\Bigl[\prod_{m=1}^{k+1}C_m\Bigr]
&=\mathbb E[f\,g]\ \ge\ \mathbb E[f]\,\mathbb E[g]\\
&=\mathbb E\Bigl[\prod_{m=1}^{k}C_m\Bigr]\,\mathbb E[C_{k+1}]\\
&\ge\ \prod_{m=1}^{k}\mathbb E[C_m]\cdot\mathbb E[C_{k+1}]=\prod_{m=1}^{k+1}\mathbb E[C_m],
\end{aligned}
\]
where the last inequality uses the inductive hypothesis together with $\mathbb E[C_{k+1}]=1-p_{k+1}\ge0$. By induction,
\[
\mathbb E\Bigl[\prod_{m=1}^{K}C_m\Bigr]\ \ge\ \prod_{m=1}^{K}(1-p_m).
\]
Therefore
\[
O^{\exp}_i=1-\mathbb E\Bigl[\prod_{m=1}^{K}C_m\Bigr]
\]
\[
\le\ 1-\prod_{m=1}^{K}(1-p_m)=O^{\exp,\perp}_i,
\]
which is the claim: positive association can only \emph{reduce} the expected single-draw oracle relative to the independent case, so independence is the maximal-inflation configuration among positively associated couplings.

\emph{Negative association.} If instead $(B_1,\dots,B_K)$ is negatively associated, the inequalities reverse at each inductive step (the covariance terms are $\le0$, and one uses the standard negative-association closure under monotone functions of disjoint index sets, which applies here since $\prod_{m\le k}C_m$ and $C_{k+1}$ depend on disjoint coordinates), giving $\mathbb E[\prod_m C_m]\le\prod_m(1-p_m)$ and hence $O^{\exp}_i\ge O^{\exp,\perp}_i$.

\emph{Equality.} The chain of inequalities is tight iff every applied association inequality is an equality, i.e.\ $\mathrm{Cov}\bigl(\prod_{m\le k}C_m,\ C_{k+1}\bigr)=0$ for each $k=1,\dots,K-1$. In particular the independent (product) coupling gives equality, recovering $O^{\exp,\perp}_i$. More generally any coupling for which the survival events factor in this telescoping product sense achieves equality even without full independence.

\emph{Scope of the hypothesis (what we assume vs.\ what we verify).} Positive association (FKG) is \emph{strictly stronger} than pairwise positive correlation: for $K\ge3$ a Bernoulli vector can have every pairwise covariance $\ge0$ and yet fail association, so the conclusion $O^{\exp}_i\le O^{\exp,\perp}_i$ does \emph{not} follow from pairwise correlations alone. Accordingly, our reported magnitudes do not rely on assuming association. The seed-aligned estimator (A7) targets the true $O^{\exp}_i$ \emph{directly} and coupling-free---$\widehat{O^{\exp}}_i=\tfrac1k\sum_j\max_m b_{im}^{(j)}$ is unbiased for $O^{\exp}_i$ under (A1)+(A7) with no independence across models---each summand $\max_m b_{im}^{(j)}$ is an i.i.d.\ $\mathrm{Bernoulli}(O^{\exp}_i)$ draw, so its mean is exactly unbiased for $O^{\exp}_i$---and the assumption-free Fréchet bound $O^{\exp}_i\le\min\{\sum_m p_{im},1\}$ of part~(a) is a valid upper envelope for \emph{every} coupling. Part~(c) is used only to explain \emph{why} the strong positive cross-model error correlation measured in the pool makes the independent product $O^{\exp,\perp}_i$ a conservative over-estimate of $O^{\exp}_i$---the direction observed in the data---never as a load-bearing step in a reported quantity.

\end{proof}

\begin{proof}[Proof of Corollary~\ref{cor:noiseshare}]
With $p_{im}=p$ for all $m$, (A2) gives $O^{\exp}_i=1-(1-p)^K$ and $O^{\mathrm{repro}}_i=p$, so $\Delta(K)=1-(1-p)^K-p=(1-p)-(1-p)^K$, and $\Delta(1)=0$.

\emph{Monotonicity and remainder.} The first difference is $\Delta(K+1)-\Delta(K)=(1-p)^K-(1-p)^{K+1}=p\,(1-p)^K>0$ for $p\in(0,1)$, so $\Delta$ is strictly increasing; telescoping gives $\Delta(K)=\sum_{j=1}^{K-1}p(1-p)^j$. From the closed form, $(1-p)-\Delta(K)=(1-p)^K\downarrow 0$, hence $\Delta(K)\uparrow 1-p$.

\emph{Noise share.} Since $O^{\exp}_i=1-(1-p)^K>0$ for $K\ge1$, write $D(K):=1-(1-p)^K=p\sum_{j=0}^{K-1}(1-p)^j$. Then
\[
\begin{aligned}
S(K)&=1-\frac{O^{\mathrm{repro}}_i}{O^{\exp}_i}=1-\frac{p}{D(K)},\\
1-S(K)&=\frac{p}{D(K)}=\frac{1}{\sum_{j=0}^{K-1}(1-p)^j}.
\end{aligned}
\]
$D$ is strictly increasing and positive (increment $p(1-p)^K>0$), so $p/D(K)$ is strictly decreasing and $S$ strictly increasing; $D(1)=p$ gives $S(1)=0$. \emph{Upper bound:} $D(K)<1$ for finite $K$, so $p/D(K)>p$ and $S(K)<1-p$. \emph{Lower bound:} $1-(1-p)(1-(1-p)^{K-1})=p+(1-p)^K$, so the claim $S(K)\ge(1-p)(1-(1-p)^{K-1})$ is equivalent (multiplying by $D(K)>0$) to $p\le (p+(1-p)^K)(1-(1-p)^K)$, i.e.\ to $0\le (1-p)^{K+1}(1-(1-p)^{K-1})$, true for all $K\ge1,\ p\in(0,1)$. \emph{Limit and remainder:} $(1-p)-S(K)=\frac{p}{D(K)}-p=\frac{p(1-p)^K}{1-(1-p)^K}\downarrow 0$, so $S(K)\uparrow 1-p$.

\emph{Threshold.} $D(K)\ge\tfrac12\iff (1-p)^K\le\tfrac12\iff K\ge \ln2/\ln\tfrac1{1-p}=:K_{1/2}$; for integer $K\ge\lceil K_{1/2}\rceil$, $1-S(K)=p/D(K)\le 2p$, i.e.\ $S(K)\ge 1-2p$. Since $\ln\tfrac1{1-p}=p+O(p^2)$, $\lceil K_{1/2}\rceil=\Theta(1/p)$. Taking $K\to\infty$ then $p\to0$ gives $S(K)\to1$.\end{proof}

\begin{proof}[Proof of Corollary~\ref{cor:gapfrac}]
By the inflation identity of Theorem~\ref{thm:inflation}, for the fixed maximizer $m^\star$,
\[
\Delta_i=(1-p^\star_i)\Bigl(1-\!\!\prod_{m\neq m^\star}\!\!(1-p_{im})\Bigr),\qquad p^\star_i=O^{\mathrm{repro}}_i .
\]
\emph{Lower bound on $\Delta_i$.} By (H1), $1-p^\star_i\ge 1-\bar p$. Let $T$ be a set of $L'\ge1$ models in $\mathcal M\setminus\{m^\star\}$ with $p_{im}\ge\underline p$. Since each $1-p_{im}\in[0,1]$, dropping the factors outside $T$ only increases the product: $\prod_{m\neq m^\star}(1-p_{im})\le\prod_{m\in T}(1-p_{im})\le(1-\underline p)^{L'}$. Hence $1-\prod_{m\neq m^\star}(1-p_{im})\ge 1-(1-\underline p)^{L'}$, and
\[
\Delta_i\ge (1-\bar p)\bigl(1-(1-\underline p)^{L'}\bigr)=c>0 .
\]
\emph{Fractional share.} By the exact decomposition $g_i=(O^{\mathrm{repro}}_i-q^r_i)+\Delta_i$, set $s:=O^{\mathrm{repro}}_i-q^r_i\ge0$ and $t:=\Delta_i\ge c$. The cited fact $q^r_i\le O^{\mathrm{repro}}_i\le\bar p$ gives $s\le\bar p$. The map $(s,t)\mapsto t/(s+t)$ is nondecreasing in $t$ (fixed $s$) and nonincreasing in $s$ (fixed $t>0$), so over $s\in[0,\bar p],\ t\ge c$ its minimum is at the attainable corner $(s,t)=(\bar p,c)$ (e.g.\ $q^r_i=0,\ p^\star_i=\bar p,\ \Delta_i=c$):
\[
\frac{\Delta_i}{g_i}=\frac{t}{s+t}\ge\frac{c}{\bar p+c}.
\]
As $L'\to\infty$, $(1-\underline p)^{L'}\to0$, so $c\to 1-\bar p$ and the bound $\to \frac{1-\bar p}{\bar p+(1-\bar p)}=1-\bar p$. Finally $1-x\le e^{-x}$ gives $1-(1-\underline p)^{L'}\ge 1-e^{-\underline p L'}$, so $\underline p L'\ge3\Rightarrow 1-(1-\underline p)^{L'}\ge 1-e^{-3}>0.95\Rightarrow \Delta_i\ge0.95(1-\bar p)$.\end{proof}

\begin{proof}[Proof of Lemma~\ref{lem:attain}]
Fix the benchmark $i=1,\dots,N$ over the finite pool $\mathcal M=\{1,\dots,K\}$.

\textbf{(a) Attainment.} For each $i$ the set $\{p_{im}:m\in\mathcal M\}$ is finite, so its maximum is attained: there exists $m^\star\in\mathcal M$ with $p_{im^\star}=\max_m p_{im}=O^{\mathrm{repro}}_i$. By definition $r^\star(i)\in\arg\max_m p_{im}$, hence $p_{r^\star(i),i}=\max_m p_{im}$ regardless of the (arbitrary) tie-break, and under (A4) $q^{r^\star}_i=p_{r^\star(i),i}=O^{\mathrm{repro}}_i$. Therefore each summand of $G_{\mathrm{rec}}(r^\star)=\tfrac1N\sum_i(O^{\mathrm{repro}}_i-q^{r^\star}_i)$ is $0$, so $G_{\mathrm{rec}}(r^\star)=0$. This uses only that a maximum over a finite set is attained.

\textbf{(b) Pointwise optimality.} Let $r$ be any committable router, so $r(i)\in\mathcal M$ for each $i$. By the defining property of the maximum, $p_{i,r(i)}\le\max_m p_{im}$ for every $i$. Under (A4), $q^r_i=p_{i,r(i)}\le\max_m p_{im}=q^{r^\star}_i$. Summing over $i$ gives $\tfrac1N\sum_i q^r_i\le\tfrac1N\sum_i q^{r^\star}_i$, i.e.\ $r^\star$ has the largest aggregate reproducible score among committable routers.

\textbf{(c) Regret reading and floor.} By Theorem~\ref{thm:decomp}(a), $G(r)=G_{\mathrm{rec}}(r)+G_{\mathrm{noise}}$ with $G_{\mathrm{rec}}(r)=\tfrac1N\sum_i(O^{\mathrm{repro}}_i-q^r_i)$. Substituting $O^{\mathrm{repro}}_i=q^{r^\star}_i$ from (a),
\[
G_{\mathrm{rec}}(r)=\tfrac1N\sum_i\bigl(q^{r^\star}_i-q^r_i\bigr),
\]
which is nonnegative termwise by (b); this is exactly the average regret of $r$ against $r^\star$. Evaluating the decomposition at $r=r^\star$ gives $G(r^\star)=G_{\mathrm{rec}}(r^\star)+G_{\mathrm{noise}}=0+G_{\mathrm{noise}}=G_{\mathrm{noise}}$, identifying $G_{\mathrm{noise}}$ as the residual gap of the best committable router. Finally, $G_{\mathrm{noise}}\ge0$ by Theorem~\ref{thm:decomp}(b) (which needs only (A1)) and $G_{\mathrm{rec}}(r)\ge0$ by the termwise inequality of (b), so $G(r)=G_{\mathrm{rec}}(r)+G_{\mathrm{noise}}\ge G_{\mathrm{noise}}$ for every committable router $r$. The remark on plug-in routers follows from Theorem~\ref{thm:finitek}(b): the plug-in $\arg\max_m\hat p_{im}$ chooses $m^\star$ only up to estimation error, and $\mathbb E[\max_m\hat p_{im}]-\max_m p_{im}\le\sqrt{\ln K/2k}$ bounds the optimism of the \emph{estimated} ceiling; the pointwise identities (a)--(b) are stated for the true-$p$ argmax $r^\star$.\end{proof}

\begin{proof}[Proof of Proposition~\ref{prop:bayesregret}]
Fix the query $i$ and abbreviate $p_m:=p_{im}$, $s_m:=s_m(x_i)$. Let $m^\star\in\arg\max_m p_m$ (so $p_{m^\star}=O^{\mathrm{repro}}_i$) and let $r:=r(i)=\arg\max_m s_m$ be the model the router commits to (ties broken arbitrarily); under (A4), $q^r_i=p_r$. Recall $\varepsilon_i=\max_m|s_m-p_m|$, so in particular $|s_{m^\star}-p_{m^\star}|\le\varepsilon_i$ and $|s_r-p_r|\le\varepsilon_i$.

\emph{Three-term telescoping.} Write the regret as a telescoping sum through the learned scores:
\[
\begin{aligned}
O^{\mathrm{repro}}_i-q^r_i = p_{m^\star}-p_r
&= \underbrace{(p_{m^\star}-s_{m^\star})}_{(\mathrm I)}\\
&\quad+\underbrace{(s_{m^\star}-s_r)}_{(\mathrm{II})}
+\underbrace{(s_r-p_r)}_{(\mathrm{III})}.
\end{aligned}
\]
Term $(\mathrm{II})\le0$: since $r=\arg\max_m s_m$, we have $s_r=\max_m s_m\ge s_{m^\star}$, hence $s_{m^\star}-s_r\le0$. Term $(\mathrm I)\le|p_{m^\star}-s_{m^\star}|\le\varepsilon_i$ and term $(\mathrm{III})\le|s_r-p_r|\le\varepsilon_i$ by the definition of $\varepsilon_i$ as a sup over all models. Therefore
\[
O^{\mathrm{repro}}_i-q^r_i\ \le\ \varepsilon_i+0+\varepsilon_i\ =\ 2\varepsilon_i.
\]
The left side is also $\ge0$ by $p_{m^\star}=\max_m p_m\ge p_r$. This pointwise bound used only the definitions of $r$, $m^\star$, and $\varepsilon_i$; no assumption on the joint law of the draws or on the distribution of $x$ enters.

\emph{Sharpness of the constant $2$.} Take $K=2$, $p=(\tfrac12+d,\tfrac12-d)$ with $d>0$, so $m^\star=1$ and $O^{\mathrm{repro}}_i=\tfrac12+d$. Let the scores be $s_1=\tfrac12+d-d=\tfrac12$ and $s_2=\tfrac12-d+d=\tfrac12$ perturbed so that $s_2>s_1$ (push each estimate by $\varepsilon_i=d$ toward the other model). Then $\varepsilon_i=d$, the router commits to $r=2$, $q^r_i=p_2=\tfrac12-d$, and the regret is $(\tfrac12+d)-(\tfrac12-d)=2d=2\varepsilon_i$. Letting the perturbation approach this boundary configuration shows $\sup(\text{regret}/2\varepsilon_i)=1$, so the constant cannot be reduced.

\emph{Averaging.} The pointwise bound holds for every $i$; averaging over the benchmark,
\[
\begin{aligned}
G_{\mathrm{rec}}(r)&=\tfrac1N\sum_i\bigl(O^{\mathrm{repro}}_i-q^r_i\bigr)\le\tfrac1N\sum_i 2\varepsilon_i\\
&=2\cdot\tfrac1N\sum_i\lVert\widehat s-p\rVert_\infty=2\,\mathbb E\bigl[\lVert\widehat s-p\rVert_\infty\bigr],
\end{aligned}
\]
the last equality reading the empirical benchmark average as the expectation over the query distribution. Adding $G_{\mathrm{noise}}$ via Theorem~\ref{thm:decomp}(a) gives $G(r)\le 2\,\mathbb E\lVert\widehat s-p\rVert_\infty+G_{\mathrm{noise}}$. If $\mathbb E\lVert\widehat s-p\rVert_\infty\to0$ (sup-norm consistency of the score estimates) then $G_{\mathrm{rec}}(r)\to0$ and $r$ recovers $r^\star$ and $O^{\mathrm{repro}}$ in the limit (Lemma~\ref{lem:attain}), while $G_{\mathrm{noise}}$, depending only on $\{p_{im}\}$, is unaffected.

\emph{Finite-$k$ coupling.} Let $s_m=\hat p_{im}$ be the empirical frequency from $k$ i.i.d.\ draws. By Theorem~\ref{thm:finitek}(a), $\Pr(|\hat p_{im}-p_{im}|\ge t)\le 2e^{-2kt^2}$ for each $m$; a union bound over the $K$ models gives $\Pr(\varepsilon_i\ge t)\le 2K e^{-2kt^2}$, so with probability $\ge1-\delta$, $\varepsilon_i\le\sqrt{\ln(2K/\delta)/(2k)}$. Hence $G_{\mathrm{rec}}(\widehat r)=O(\sqrt{\ln K/k})$, matching the estimation rate of Theorem~\ref{thm:finitek}, whereas $G_{\mathrm{noise}}$ is $k$-independent. The winner's-curse bias of Theorem~\ref{thm:finitek}(b) attaches to the \emph{estimate} $\max_m\hat p_{im}$ of $O^{\mathrm{repro}}_i$; the regret bound above is on the \emph{realized} score $p_{i,r(i)}$ of the chosen model (true probability), so the bias does not enter it.\end{proof}

\begin{proof}[Proof of Lemma~\ref{lem:mixedrouter}]
Fix the query $i$ and let $\pi_i$ be any probability distribution on $\mathcal M=\{1,\dots,K\}$ (possibly depending on $i$ and on observed covariates/history $\mathcal F_i$; conditioning on $\mathcal F_i$ only fixes a particular $\pi_i$ and does not affect the argument). Set $P:=\max_m p_{im}=O^{\mathrm{repro}}_i$, attained at some $m^\star$. Since $\pi_i(m)\ge0$ and $\sum_m\pi_i(m)=1$, the score is a convex combination of the reals $\{p_{im}\}$, hence bounded by their maximum:
\[
\begin{aligned}
\mathrm{score}_i(\pi)&=\sum_{m}\pi_i(m)\,p_{im}\le\sum_m\pi_i(m)\,P\\
&=P\sum_m\pi_i(m)=P=O^{\mathrm{repro}}_i .
\end{aligned}
\]
Equivalently, $P-\mathrm{score}_i(\pi)=\sum_m\pi_i(m)(P-p_{im})$ is a sum of nonnegative terms. It vanishes iff $\pi_i(m)(P-p_{im})=0$ for every $m$, i.e.\ iff $\pi_i$ places mass only on indices with $p_{im}=P$, that is on $\arg\max_m p_{im}$. No assumption on the joint law of the draws is used (only the linearity of the scoring rule and the marginals $p_{im}$), so (A2) is not needed.

Deterministic committable routers are point masses $\pi_i=\delta_{r(i)}$, recovering $\mathrm{score}_i(\delta_{r(i)})=p_{i,r(i)}=q^r_i$ as a special case. Taking $\pi_i=\delta_{m^\star}$ (the router $r^\star$ of Lemma~\ref{lem:attain}) attains the bound, so
\[
\sup_{\pi}\ \mathrm{score}_i(\pi)=O^{\mathrm{repro}}_i,
\]
the supremum ranging over the entire single-commit class (randomized and feature/history-dependent included). Aggregating: for any such router the benchmark score $\tfrac1N\sum_i\mathrm{score}_i(\pi)\le\tfrac1N\sum_i O^{\mathrm{repro}}_i$, so its gap to $O^{\exp}$ is
\[
\begin{aligned}
\tfrac1N\sum_i\bigl(O^{\exp}_i-\mathrm{score}_i(\pi)\bigr)&\ \ge\ \tfrac1N\sum_i\bigl(O^{\exp}_i-O^{\mathrm{repro}}_i\bigr)\\
&=G_{\mathrm{noise}}\ \ge\ 0,
\end{aligned}
\]
the last inequality by Theorem~\ref{thm:decomp}(b). Thus $G_{\mathrm{noise}}$ is a floor for the full mixed-strategy single-commit class, not merely for deterministic routers.

\emph{Source of the floor.} For the remark: a router committing to a single model $m$ and scored on a fresh single draw $b_{im}\in\{0,1\}$ has expected score $\mathbb E[b_{im}]=p_{im}\le O^{\mathrm{repro}}_i$, so single-draw scoring of one model does not by itself open the gap. The gap $O^{\exp}_i-O^{\mathrm{repro}}_i=\mathbb E[\max_m b_{im}]-\max_m p_{im}\ge0$ (Proposition~\ref{prop:order}) is created by the oracle's $\max$ \emph{over all $K$ draws}; a single-commit router selects one index and cannot realize the pool-wide maximum, which is precisely the quantity $\pi_i$ averages over rather than maximizes. Multi-query/ensemble routers fall outside this class and are not covered.\end{proof}

\begin{proof}[Proof of Lemma~\ref{lem:twoaxes}]
\emph{(SELECT).} This is exactly Lemma~\ref{lem:mixedrouter}: by linearity of the (A4) scoring extension, $\mathrm{score}_i(\pi)=\sum_m\pi_i(m)p_{im}$ is a convex combination of the reals $\{p_{im}\}$, and a convex combination never exceeds the maximum element, so $\mathrm{score}_i(\pi)\le\max_m p_{im}=O^{\mathrm{repro}}_i$. Since $\pi_i$ are nonnegative weights summing to $1$, equality requires $\pi_i(m)=0$ for every $m$ with $p_{im}<\max_{m'}p_{im'}$, i.e.\ $\pi_i$ supported on $\arg\max_m p_{im}$. No probabilistic assumption beyond the definition of the maximum is used; in particular (A2) is not invoked.

\emph{(SAMPLE).} Fix the model $m$ and let $b_{im}^{(1)},\dots,b_{im}^{(n)}$ be its $n$ draws, i.i.d.\ $\mathrm{Bernoulli}(p_{im})$ by (A1). The query is solved by $m$ under best\hyp of\hyp$n$ iff at least one draw succeeds, the complement being all $n$ failures:
\[
\begin{aligned}
p^{(n)}_{im}&=\Pr\Bigl(\bigcup_{t=1}^{n}\{b_{im}^{(t)}=1\}\Bigr)=1-\Pr\Bigl(\bigcap_{t=1}^{n}\{b_{im}^{(t)}=0\}\Bigr)\\
&=1-\prod_{t=1}^{n}(1-p_{im})=1-(1-p_{im})^{n},
\end{aligned}
\]
the factorization using within\hyp$(i,m)$ independence (A1). Write $g(n)=(1-p_{im})^{n}$. If $p_{im}\in(0,1)$ then $1-p_{im}\in(0,1)$, so $g$ is strictly decreasing and $g(n)\downarrow0$, whence $p^{(n)}_{im}=1-g(n)$ is strictly increasing with $p^{(n)}_{im}\uparrow1$. If $p_{im}=0$ then $g\equiv1$ and $p^{(n)}_{im}\equiv0$; if $p_{im}=1$ then $g\equiv0$ and $p^{(n)}_{im}\equiv1$. This gives nondecreasing, strict iff $p_{im}\in(0,1)$, and the limit $\to1$ iff $p_{im}>0$. Applying this to the committed best model $m^\star$ (with $p_{im^\star}=O^{\mathrm{repro}}_i$) gives $O^{\mathrm{repro},(n)}_i=1-(1-O^{\mathrm{repro}}_i)^{n}\uparrow1$ when $O^{\mathrm{repro}}_i>0$.

\emph{Orthogonality and combination.} (SELECT) is an operation on the index $m$ (choice of which $p_{im}$ to commit to) and leaves the per\hyp draw success probability of the committed model unchanged; (SAMPLE) is an operation on the draw budget $n$ that maps the committed model's probability $p\mapsto1-(1-p)^n$ and is independent of which $m$ was chosen. A router that first commits to $m^\star$ (the (SELECT)\hyp optimum) and then samples it $n$ times therefore attains $1-(1-O^{\mathrm{repro}}_i)^{n}=O^{\mathrm{repro},(n)}_i$, which by the strict monotonicity above exceeds $O^{\mathrm{repro}}_i$ for every $n\ge2$ when $O^{\mathrm{repro}}_i\in(0,1)$.
\end{proof}

\begin{proof}[Proof of Lemma~\ref{lem:aggfloor}]
\emph{Lower bound.} Committing to a maximizer $m^\star$, i.e.\ $g\equiv Y_{im^\star}$, is draw\hyp grounded with success $\Pr(Y_{im^\star}=y^\star_i)=p_{im^\star}=O^{\mathrm{repro}}_i$, so $O^{\mathrm{agg}}_i\ge O^{\mathrm{repro}}_i$. \emph{Upper bound.} Any draw\hyp grounded $g$ outputs one of the realized labels, so $\{g(Y_{i\cdot})=y^\star_i\}\subseteq\{\exists m:Y_{im}=y^\star_i\}=\{\max_m B_{im}=1\}$; taking probabilities, $O^{\mathrm{agg}}_i\le\Pr(\max_m B_{im}=1)=O^{\exp}_i$. \emph{Strictness.} Aggregation strictly helps only when the draws carry signal. For binary cells ($A{=}2$) with each model correct i.i.d.\ with probability $p>\tfrac12$ and odd $K\ge3$, majority vote succeeds with probability $\sum_{j>K/2}\binom{K}{j}p^{j}(1-p)^{K-j}>p=O^{\mathrm{repro}}_i$ (Condorcet's jury theorem), while remaining $<O^{\exp}_i=1-(1-p)^{K}$ (majority correct implies at least one correct, and the converse fails with positive probability); hence both inequalities can be strict. On a \emph{pure\hyp chance} cell $p_{im}\equiv1/A$ i.i.d., by contrast, the $A$ candidates are exchangeable under the draw, so \emph{every} draw\hyp grounded aggregator returns the gold symbol with probability exactly $1/A$; thus $O^{\mathrm{agg}}_i=1/A=O^{\mathrm{repro}}_i$ and aggregation cannot beat chance---consistent with the $\Delta^{\mathrm{know}}_i{=}0$ reading of Cor.~\ref{cor:scopefloor} on guessing cells.\end{proof}

\begin{proof}[Proof of Corollary~\ref{cor:scopefloor}]
The floor and its class\hyp uniformity are Theorem~\ref{thm:recoverability}(a) aggregated; $k$\hyp/data\hyp independence holds because $\Delta_i=O^{\exp}_i-O^{\mathrm{repro}}_i$ is a deterministic functional of $\{p_{im}\}$. The split is the telescoping of Lemma~\ref{lem:aggfloor}: $\Delta_i=(O^{\mathrm{agg}}_i-O^{\mathrm{repro}}_i)+(O^{\exp}_i-O^{\mathrm{agg}}_i)$ with both terms $\ge0$ by that lemma. $\Delta^{\mathrm{know}}_i$ is reached without a verifier by a draw\hyp grounded aggregator attaining $O^{\mathrm{agg}}_i$ (Lemma~\ref{lem:aggfloor}), or on verifier\hyp equipped cells by best\hyp of\hyp$n$ (Theorem~\ref{thm:recoverability}(b)). $\Delta^{\mathrm{guess}}_i=O^{\exp}_i-O^{\mathrm{agg}}_i$ is annihilated by neither axis: SELECTION is capped at $O^{\mathrm{repro}}_i\le O^{\mathrm{agg}}_i$ (part (a)), and \emph{every} draw\hyp grounded verifier\hyp free aggregator is capped at $O^{\mathrm{agg}}_i$ (Lemma~\ref{lem:aggfloor}); only a deploy\hyp time ground\hyp truth verifier, selecting a correct draw from the union event $\{\max_m B_{im}=1\}$, closes it. The re\hyp scoping of Cor.~\ref{cor:gapfrac} follows: its $\Theta(1-\bar p)$ bound is on $\Delta_i$, here resolved into the verifier\hyp free\hyp recoverable $\Delta^{\mathrm{know}}_i$ and the verifier\hyp requiring $\Delta^{\mathrm{guess}}_i$.\end{proof}

\begin{theorem}[General coupling\hyp free recoverability cap and Fr\'echet bracket]\label{thm:genrecov}
Fix a query $i$ over $\mathcal M=\{1,\dots,K\}$ and let $\mu$ be \emph{any} joint law on $\{0,1\}^K$ with marginals $\Pr_\mu(B_{im}=1)=p_{im}$ (i.e.\ (A1)+(A3) only; (A2) dropped). Let $O^{\exp}_i(\mu)=\Pr_\mu(\exists m:B_{im}=1)$, $O^{\mathrm{repro}}_i=\max_m p_{im}$, and $\Delta_i(\mu)=O^{\exp}_i(\mu)-O^{\mathrm{repro}}_i$. Under the single\hyp commit scoring convention (A4\hyp linear) with choice $\perp$ scored draw (A5):
\begin{enumerate}
\item[\textnormal{(a)}] \textbf{(Selection cap, $\mu$\hyp blind.)} $\displaystyle\sup_{\pi}\ \mathrm{score}_i(\pi)=O^{\mathrm{repro}}_i$ for every $\mu$, where the supremum ranges over the entire single\hyp commit class; the value is independent of the coupling $\mu$ and is attained by committing to any $m^\star\in\arg\max_m p_{im}$.
\item[\textnormal{(b)}] \textbf{(Sign and coupling\hyp free Fr\'echet bracket.)} For every admissible $\mu$,
\[
0\ \le\ \Delta_i(\mu)\ \le\ \min\Bigl\{\textstyle\sum_{m}p_{im},\,1\Bigr\}-\max_m p_{im},
\]
and both endpoints are sharp over the Fr\'echet class: the lower endpoint $\Delta_i=0$ is attained at the comonotone (Fr\'echet\hyp upper) coupling, and the upper endpoint at the disjoint/space\hyp covering (Fr\'echet\hyp lower) coupling. Hence $\sup_\mu\Delta_i(\mu)=\min\{\sum_m p_{im},1\}-\max_m p_{im}$ and $\inf_\mu\Delta_i(\mu)=0$.
\end{enumerate}
\end{theorem}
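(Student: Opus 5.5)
The plan is to reduce both parts to results already proved, with one short extra argument for sharpness of the bracket.

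\textbf{Part (a).} Lemma~\ref{lem:mixedrouter} shows that, for every $\pi$ in the single\hyp commit class, $\mathrm{score}_i(\pi)=\sum_m\pi_i(m)p_{im}\le\max_m p_{im}$. The bound is attained by the point mass $\pi_i=\delta_{m^\star}$, which gives $\sup_\pi\mathrm{score}_i(\pi)=O^{\mathrm{repro}}_i$. Two facts make this value $\mu$\hyp blind. First, under (A4\hyp linear) the score depends on the joint law only through the marginals $p_{im}$. Second, (A5) makes the committed index independent of the scored draw, so by the tower rule the reported accuracy equals $\sum_m\pi_i(m)\Pr_\mu(B_{im}=1)=\sum_m\pi_i(m)p_{im}$. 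No property of $\mu$ beyond its marginals enters, so the supremum is the same for every coupling.

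\textbf{Part (b), the bracket.} Subtract $O^{\mathrm{repro}}_i$ from the Fr\'echet bounds of Prop.~\ref{prop:dep}(a) (equivalently Prop.~\ref{prop:exp}(b)):
\[
\max_m p_{im}\le O^{\exp}_i(\mu)\le\min\Bigl\{\textstyle\sum_m p_{im},1\Bigr\}.
\]
This yields $0\le\Delta_i(\mu)\le\min\{\sum_m p_{im},1\}-\max_m p_{im}$ for every admissible $\mu$.

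\textbf{Part (b), sharpness.} I would use the explicit couplings from the proof of Prop.~\ref{prop:exp}, checking that each has the prescribed Bernoulli marginals.
\begin{itemize}
\item[] \emph{Lower endpoint.} The comonotone coupling $B_{im}=\mathbb 1\{U\le p_{im}\}$ gives $\bigcup_m\{B_{im}=1\}=\{U\le\max_m p_{im}\}$. Hence $O^{\exp}_i=\max_m p_{im}$ and $\Delta_i=0$, so $\inf_\mu\Delta_i(\mu)=0$ is attained.
\item[] \emph{Upper endpoint, case $\sum_m p_{im}\le1$.} Place the success events on disjoint subintervals of $[0,1)$ of lengths $p_{im}$. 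The union then has probability $\sum_m p_{im}$.
\item[] \emph{Upper endpoint, case $\sum_m p_{im}>1$.} Use the arc coupling: lay intervals of lengths $p_{im}$ consecutively around the circle $[0,1)$, wrapping mod $1$. Each marginal is correct because each arc has length $p_{im}\le1$, and the arcs cover the circle because their total length is at least $1$. The union therefore has probability $1$.
\end{itemize}
In both cases $O^{\exp}_i=\min\{\sum_m p_{im},1\}$, so the upper endpoint is attained and $\sup_\mu\Delta_i(\mu)$ equals it. Since $O^{\mathrm{repro}}_i$ does not depend on $\mu$, the statements about $\sup_\mu\Delta_i$ and $\inf_\mu\Delta_i$ follow from the corresponding statements about $O^{\exp}_i$.

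\textbf{Main obstacle.} The only step needing real care is the covering construction when $\sum_m p_{im}>1$. Prop.~\ref{prop:exp} asserts this case somewhat tersely, so I would write out the wrap\hyp around placement explicitly and verify that every arc, including any that wraps past $1$, has Lebesgue measure exactly $p_{im}$. Everything else is direct citation of earlier results.
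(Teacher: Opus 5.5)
Your proposal is correct and follows essentially the same route as the paper. Part (a) uses the convex-combination cap of Lemma~\ref{lem:mixedrouter}, attained by the point mass at $m^\star$, with the (A5)/tower-rule remark giving $\mu$-blindness; part (b) subtracts the $\mu$-invariant $O^{\mathrm{repro}}_i$ from the Fr\'echet bounds of Prop.~\ref{prop:exp}(b), with sharpness from the comonotone, disjoint, and arc couplings, and your explicit wrap-around check for $\sum_m p_{im}>1$ only fills in a detail the paper states tersely.
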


\begin{proof}
\textbf{(a)} Fix $\mu$. By (A4\hyp linear), $\mathrm{score}_i(\pi)=\sum_m\pi_i(m)p_{im}$ is a convex combination of $\{p_{im}\}$, so $\mathrm{score}_i(\pi)\le\max_m p_{im}=O^{\mathrm{repro}}_i$ (Lemma~\ref{lem:mixedrouter}); the point mass $\pi_i=\delta_{m^\star}$ attains it. Neither the bound nor the maximizer references $\mu$, and under (A5) the reported value equals $\mathrm{score}_i(\pi)$ (tower rule); hence the selection value is the $\mu$\hyp blind constant $O^{\mathrm{repro}}_i$.

\textbf{(b)} The sign is Prop.~\ref{prop:order} (monotonicity of $\max$, any $\mu$). For the upper bracket, $O^{\exp}_i(\mu)=\Pr_\mu(\bigcup_m\{B_{im}=1\})\le\min\{\sum_m p_{im},1\}$ by Boole's inequality and $\Pr\le1$ (Prop.~\ref{prop:exp}(b)); subtract $O^{\mathrm{repro}}_i$. Sharpness of both endpoints is the Fr\'echet construction of Prop.~\ref{prop:exp}(b): the comonotone coupling $B_{im}=\mathbb1\{U\le p_{im}\}$, $U\sim\mathrm{Unif}(0,1)$, nests all success events inside the maximizer's, giving $O^{\exp}_i=O^{\mathrm{repro}}_i$ hence $\Delta_i=0$; the disjoint coupling (or, when $\sum_m p_{im}\ge1$, the space\hyp covering arc coupling) gives $O^{\exp}_i=\min\{\sum_m p_{im},1\}$ hence the upper endpoint. \emph{Caution (do not conflate the two equality conditions):} cap\hyp equality $\mathrm{score}_i(\pi)=O^{\mathrm{repro}}_i$ is a condition on $\pi$ (mass on $\arg\max$), whereas floor\hyp equality $\Delta_i(\mu)=0$ is a condition on the \emph{coupling} $\mu$ (Fr\'echet\hyp upper / comonotone); these are independent.
\end{proof}

\begin{corollary}[FKG upper\hyp envelope scope floor; coupling\hyp free, no\hyp(A2)]\label{cor:couplingfree}
Adopt the setting of Theorem~\ref{thm:genrecov} and suppose the draws are positively associated (FKG: $\mathbb E[fg]\ge\mathbb E[f]\mathbb E[g]$ for all coordinatewise\hyp nondecreasing bounded $f,g$). Then
\[
\begin{aligned}
\Delta_i(\mu)\ \le\ \Delta^{\perp}_i&\ :=\ O^{\exp,\perp}_i-O^{\mathrm{repro}}_i\\
&\ =\ \bigl(1-\textstyle\max_m p_{im}\bigr)-\prod_m(1-p_{im}),
\end{aligned}
\]
i.e.\ the independent\hyp coupling value $\Delta^{\perp}_i$ is an \emph{upper envelope} on every positively associated pool; in particular the closed\hyp form homogeneous fraction $S(K)\uparrow1-\bar p=\Theta(1-\bar p)$ of Cor.~\ref{cor:noiseshare}/Cor.~\ref{cor:gapfrac} is the \textnormal{(A1)+(A2)} independent\hyp coupling value and an upper envelope under FKG, \emph{not} an unconditional floor. The label ``coupling\hyp free / no\hyp(A2)'' refers to the sign and the Fr\'echet bracket of Theorem~\ref{thm:genrecov}(b) (which hold for every $\mu$), and must \emph{not} be read as ``assumption\hyp free'': the cap rests on \textnormal{(A4\hyp linear)+(A5)}, and the closed\hyp form magnitude additionally needs \textnormal{(A2)}.
\end{corollary}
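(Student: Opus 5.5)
The claim is $\Delta_i(\mu)\le\Delta^{\perp}_i$ under positive association. Both sides subtract the same coupling-independent quantity $O^{\mathrm{repro}}_i=\max_m p_{im}$, so the claim reduces to $O^{\exp}_i(\mu)\le O^{\exp,\perp}_i$. This is exactly Prop.~\ref{prop:dep}(c). The plan is therefore to subtract $\max_m p_{im}$ from both sides of that inequality and then identify the closed form.

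\textbf{Closed form.} The identity $O^{\exp,\perp}_i-O^{\mathrm{repro}}_i=(1-\max_m p_{im})-\prod_m(1-p_{im})$ follows from the product form of Prop.~\ref{prop:exp}(a). It is the same rearrangement as in Step~3 of the proof of Prop.~\ref{prop:order}.

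\textbf{Re-deriving the key inequality.} For completeness I would recall why Prop.~\ref{prop:dep}(c) holds.
\begin{enumerate}
\item Pass to the failure indicators $C_m=1-B_m$. These are associated, because flipping every coordinate of an associated vector preserves association.
\item Write $O^{\exp}_i=1-\mathbb E[\prod_m C_m]$.
\item Prove $\mathbb E[\prod_m C_m]\ge\prod_m(1-p_{im})$ by induction on the number of factors. At each step apply association to the nondecreasing pair $f=\prod_{m\le k}C_m$ and $g=C_{k+1}$, using nonnegativity of the factors.
\end{enumerate}

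\textbf{Interpretive clause.} The closed-form fractions of Cor.~\ref{cor:noiseshare} and Cor.~\ref{cor:gapfrac} were derived under (A1)+(A2). They are therefore the independent-coupling values $\Delta^\perp_i$, and by the above they upper-bound $\Delta_i(\mu)$ on any FKG pool. They are not a floor. The comonotone coupling is positively associated and, by Theorem~\ref{thm:genrecov}(b), gives $\Delta_i=0$, so no positive lower bound can hold unconditionally.

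\textbf{Scoping clause.} The sign and the Fréchet bracket are Theorem~\ref{thm:genrecov}(b), which holds for every $\mu$. The cap uses (A4-linear)+(A5) via Theorem~\ref{thm:genrecov}(a).

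\textbf{Main subtlety.} The only point needing care is that the comonotone coupling is indeed associated, which is what licenses the "not a floor" remark. I would verify it directly: all coordinates $\mathbb 1\{U\le p_{im}\}$ are nonincreasing functions of the single variable $U$, so any two coordinatewise-nondecreasing functions of the vector are both nonincreasing in $U$. Two functions of one real variable that are monotone in the same direction have nonnegative covariance (Chebyshev's association inequality), which is the required property.
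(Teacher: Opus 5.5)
Your proposal is correct and follows the paper's proof. You subtract the coupling-invariant $O^{\mathrm{repro}}_i$ from $O^{\exp}_i(\mu)\le O^{\exp,\perp}_i$ (Prop.~\ref{prop:dep}(c)), and you use the comonotone coupling to show the independent value is not a floor. The paper does this with the concrete instance $K=2$, $p=(0.4,0.4)$, where $\Delta_i=0<0.24=\Delta^\perp_i$. You instead argue generally and also verify, correctly via Chebyshev's inequality, that the comonotone law is itself associated, which the paper leaves implicit.

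One small tightening: the quantities in Cor.~\ref{cor:noiseshare} and Cor.~\ref{cor:gapfrac} are ratios, not $\Delta^\perp_i$ itself. To transfer the envelope to them, note that $S=1-O^{\mathrm{repro}}_i/O^{\exp}_i$ is nondecreasing in $O^{\exp}_i$. Likewise $\Delta_i/g_i=t/(s+t)$ is nondecreasing in $t=\Delta_i$, since $s=O^{\mathrm{repro}}_i-q^r_i$ does not depend on the coupling.
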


\begin{proof}
By Prop.~\ref{prop:dep}(c), positive association gives $O^{\exp}_i(\mu)\le O^{\exp,\perp}_i$; subtracting the $\mu$\hyp invariant $O^{\mathrm{repro}}_i$ yields $\Delta_i(\mu)\le\Delta^{\perp}_i$. The homogeneous closed form is Cor.~\ref{cor:noiseshare}. \emph{Counterexample to any unconditional reading (comonotone $\ne$ independent).} Take $K=2$, $p=(0.4,0.4)$. The independent value is $\Delta^{\perp}_i=(1-0.4)-(0.6)^2=0.6-0.36=0.24$, whereas the comonotone coupling $B_1=B_2=\mathbb1\{U\le0.4\}$ gives $O^{\exp}_i=0.4=O^{\mathrm{repro}}_i$, so the true $\Delta_i=0<0.24$. Thus $\Delta^{\perp}_i$ is only an upper envelope under positive association and is \emph{not} attained in general.
\end{proof}

\begin{proof}[Proof of Theorem~\ref{thm:bestofr}]
Fix $i$ and write $p_m:=p_{im}$, $q_m:=1-p_m$, $q^\star:=1-p^\star=\min_m q_m$. Under (A1) each model emits $r$ i.i.d.\ draws, so the event ``model $m$ fails on all $r$ of its draws'' has probability $q_m^{\,r}$ and ``model $m$ succeeds at least once in $r$ draws'' has probability $p_m^{(r)}=1-q_m^{\,r}$. The map $x\mapsto1-(1-x)^r$ is strictly increasing on $[0,1]$ for $r\ge1$, so $\max_m p_m^{(r)}=1-(\min_m q_m)^r=1-q^{\star r}$, attained at the same $m^\star$ that maximizes $p_m$; this is $O^{\mathrm{repro},(r)}_i$.

\medskip\noindent\textbf{(i) Identity and rate.} Under (A2) the $K$ all-fail events are independent, so by the union/complement computation of Proposition~\ref{prop:exp}(a) applied to the success probabilities $p_m^{(r)}$,
\[
O^{(r)}_i=1-\prod_m\bigl(1-p_m^{(r)}\bigr)=1-\prod_m q_m^{\,r}.
\]
Pulling out the $m^\star$ factor exactly as in the proof of Theorem~\ref{thm:inflation} (with each base $q_m$ replaced by $q_m^{\,r}$),
\[
1-O^{(r)}_i=\prod_m q_m^{\,r}=q^{\star r}\!\!\prod_{m\neq m^\star}\!\!q_m^{\,r},
\]
whence, since $O^{\mathrm{repro},(r)}_i=1-q^{\star r}$,
\[
\begin{aligned}
\Delta^{(r)}_i&=O^{(r)}_i-O^{\mathrm{repro},(r)}_i
=\bigl(1-\textstyle\prod_m q_m^{\,r}\bigr)-\bigl(1-q^{\star r}\bigr)\\
&=q^{\star r}-\prod_m q_m^{\,r}
=q^{\star r}\Bigl(1-\!\!\prod_{m\neq m^\star}\!\!q_m^{\,r}\Bigr).
\end{aligned}
\]
Each factor $q_m^{\,r}\in[0,1]$, so the bracket lies in $[0,1]$ and $\Delta^{(r)}_i\ge0$. At $r=1$ this is verbatim the inflation identity of Theorem~\ref{thm:inflation}, so the present statement is that theorem specialized to best-of-$r$ via $q_m\to q_m^{\,r}$. Finally $1-O^{(r)}_i=\prod_m q_m^{\,r}=\exp\!\bigl(r\sum_m\ln q_m\bigr)=e^{-r\Lambda_i}$ with $\Lambda_i=\sum_m-\ln q_m\in[0,\infty]$; this is $\to0$ (so $O^{(r)}_i\uparrow1$) iff $\Lambda_i>0$, i.e.\ iff some $p_m>0$, and the convergence is geometric in $r$ at rate $\Lambda_i$, the Theorem~\ref{thm:inflation} decay constant scaled by $r$.

\medskip\noindent\textbf{(ii) Monotone collapse of $S^{(r)}_i$.} Assume $p^\star>0$, i.e.\ $q^\star\in[0,1)$. Degenerate case first: if at most one model has $p_m>0$, then $\prod_{m\neq m^\star}q_m^{\,r}=1$ for all $r$ (every non-maximizer factor is $1$), so $\Delta^{(r)}_i\equiv0$ and $S^{(r)}_i\equiv0$ (constant, the weak case). Henceforth assume at least two models have $p_m>0$, so $0<q^\star<1$ and $\prod_{m\neq m^\star}q_m^{\,r}<1$ for every $r$.

Write the share, for $O^{(r)}_i>0$, as
\[
\begin{aligned}
S^{(r)}_i=\frac{\Delta^{(r)}_i}{O^{(r)}_i}
&=\frac{q^{\star r}-\prod_m q_m^{\,r}}{1-\prod_m q_m^{\,r}}\\
&=1-\frac{1-q^{\star r}}{\,1-\prod_m q_m^{\,r}\,}
=1-\frac{O^{\mathrm{repro},(r)}_i}{O^{(r)}_i}.
\end{aligned}
\]
Hence $S^{(r)}_i$ is non-increasing (resp.\ strictly decreasing) in $r$ iff the ratio
\[
R(r):=\frac{1-q^{\star r}}{1-\prod_m q_m^{\,r}}=\frac{O^{\mathrm{repro},(r)}_i}{O^{(r)}_i}
\]
is non-decreasing (resp.\ strictly increasing) in $r$. Put $a:=-\ln q^\star\in(0,\infty)$ and $b:=-\sum_m\ln q_m=\Lambda_i$. Because the two surviving positive-$p$ models alone contribute $-\ln q^\star>0$ and a second strictly-positive term, we have $b>a>0$ (the sum has the $m^\star$ term $a$ plus at least one further strictly positive term). With $u:=r>0$,
\[
R(r)=\frac{1-e^{-au}}{1-e^{-bu}},\qquad 0<a<b .
\]
We show $R$ is strictly increasing on $(0,\infty)$. Equivalently $h(u):=\ln(1-e^{-au})-\ln(1-e^{-bu})$ has $h'(u)>0$. Differentiating,
\[
h'(u)=\frac{a\,e^{-au}}{1-e^{-au}}-\frac{b\,e^{-bu}}{1-e^{-bu}}
=\frac{a}{e^{au}-1}-\frac{b}{e^{bu}-1}.
\]
So it suffices to prove that the function $\psi(c):=\dfrac{c}{e^{cu}-1}$ is strictly decreasing in $c>0$ for each fixed $u>0$ (then $a<b\Rightarrow\psi(a)>\psi(b)\Rightarrow h'(u)>0$). Write $\psi(c)=\dfrac1u\cdot\dfrac{t}{e^{t}-1}$ with $t:=cu>0$; since $u$ is fixed and $t\mapsto t/(e^t-1)$ is strictly decreasing on $(0,\infty)$, $\psi$ is strictly decreasing in $c$. To see the monotonicity of $\phi(t):=t/(e^t-1)$, compute
\[
\phi'(t)=\frac{(e^t-1)-t\,e^t}{(e^t-1)^2}=\frac{e^t\bigl(1-t-e^{-t}\bigr)}{(e^t-1)^2}.
\]
The numerator sign is that of $\eta(t):=1-t-e^{-t}$; $\eta(0)=0$ and $\eta'(t)=-1+e^{-t}<0$ for $t>0$, so $\eta(t)<0$ on $(0,\infty)$ and $\phi'(t)<0$. Hence $\phi$, and therefore $\psi$, is strictly decreasing, giving $h'(u)>0$ and $R$ strictly increasing. Therefore $S^{(r)}_i=1-R(r)$ is strictly decreasing in $r$.

\emph{Limit.} As $r\to\infty$, $1-q^{\star r}\to1$ and $1-\prod_m q_m^{\,r}\to1$ (both since the bases are in $[0,1)$ here), so $R(r)\to1$ and $S^{(r)}_i\to0$. More directly, from (i),
\[
S^{(r)}_i=\frac{q^{\star r}\bigl(1-\prod_{m\neq m^\star}q_m^{\,r}\bigr)}{1-\prod_m q_m^{\,r}}\le\frac{q^{\star r}}{1-\prod_m q_m^{\,r}}\xrightarrow[r\to\infty]{}0
\]
since the numerator $q^{\star r}\to0$ and the denominator $\to1$. This proves the collapse and its rate.

\emph{Homogeneous check.} With $p_m\equiv p\in(0,1)$ set $x:=(1-p)^r=q^{\star r}=q_m^{\,r}\in(0,1)$, strictly decreasing in $r$. Then $O^{(r)}_i=1-x^K$, $O^{\mathrm{repro},(r)}_i=1-x$, $\Delta^{(r)}_i=x-x^K$, and
\[
S^{(r)}=\frac{x-x^{K}}{1-x^{K}}.
\]
Differentiating in $x\in(0,1)$, $\frac{d}{dx}S=\frac{(1-Kx^{K-1})(1-x^K)-(x-x^K)(-Kx^{K-1})}{(1-x^K)^2}$, whose numerator simplifies to $1-Kx^{K-1}+(K-1)x^{K}=:N(x)$ with $N(1)=0$ and $N'(x)=-K(K-1)x^{K-2}(1-x)<0$ on $(0,1)$, so $N(x)>N(1)=0$ there; hence $S$ is strictly increasing in $x$ on $(0,1)$. As $x$ strictly decreases in $r$, $S^{(r)}$ strictly decreases in $r$, matching the general result.

\medskip\noindent\textbf{(iii) Interpretation.} No further mathematics is required: setting $r=1$ recovers $\Delta^{(1)}_i=\Delta_i$, $O^{(1)}_i=O^{\exp}_i$, $O^{\mathrm{repro},(1)}_i=O^{\mathrm{repro}}_i$, and $S^{(1)}_i$ equals the single-draw noise share of Corollary~\ref{cor:noiseshare}, which under (A3) is the benchmark-measured quantity. Parts (i)--(ii) show that enlarging $r$ (a multi-draw committable protocol, distinct from the one-recorded-label protocol of (A3)) drives $O^{(r)}_i\uparrow1$ and $S^{(r)}_i\downarrow0$; thus the $r=1$ noise share is maximal over $r\ge1$ and reflects the single-label convention, not headroom a committable router can realize at $r=1$. \end{proof}

\begin{proof}[Proof of Proposition~\ref{prop:wcvar}]
Fix $i$ and write $p_m:=p_{im}$, $\hat p_m:=\hat p_{im}$, $\Delta_m:=\hat p_m-p_m$ (so $\mathbb E[\Delta_m]=0$). As in Theorem~\ref{thm:finitek}(b), the map $x\mapsto\max_m x_m$ is $1$-Lipschitz in the supremum sense $\max_m(a_m+c_m)\le\max_m a_m+\max_m c_m$, whence
\begin{equation}
0\ \le\ \mathbb E\Big[\max_m\hat p_m\Big]-\max_m p_m\ \le\ \mathbb E\Big[\max_m\Delta_m\Big],
\tag{$\dagger$}
\end{equation}
the lower bound being Jensen (Theorem~\ref{thm:finitek}(b)). All four parts bound or evaluate the right-hand side of $(\dagger)$; only (A1) (within-$(i,m)$ i.i.d.\ marginals) is used, never (A2).

\medskip\noindent\textbf{(a) Optimal-proxy sub-Gaussian bound.}
Recall the Kearns--Saul lemma: a centered Bernoulli variable $b-p$ (with $b\sim\mathrm{Bernoulli}(p)$) is sub-Gaussian with optimal proxy
\[
\nu^2(p)=\frac{1-2p}{2\ln((1-p)/p)}\ \ (p\neq\tfrac12),\qquad \nu^2(\tfrac12)=\tfrac14,
\]
i.e.\ $\mathbb E\big[e^{\lambda(b-p)}\big]\le e^{\lambda^2\nu^2(p)/2}$ for all $\lambda\in\mathbb R$, and this is the smallest constant for which the MGF bound holds. The order chain $p(1-p)\le\nu^2(p)\le\tfrac14$ is classical: the left inequality is $\mathrm{Var}\le$ optimal proxy (a general property of the optimal sub-Gaussian proxy of any bounded variable), and the right is Hoeffding's lemma for a variable supported in $[0,1]$ (range $1$, proxy $\le\tfrac14$), with $\nu^2(p)\to\tfrac14$ as $p\to\tfrac12$ and $\nu^2(p)<\tfrac14$ for $p\neq\tfrac12$.

Each $\hat p_m=\frac1k\sum_{j=1}^k b_m^{(j)}$ is an average of $k$ i.i.d.\ Bernoulli$(p_m)$ draws (A1), so $\Delta_m$ is the average of $k$ i.i.d.\ centered Bernoulli$(p_m)$ variables; by tensorization of the MGF bound, $\Delta_m$ is sub-Gaussian with proxy $\nu_m^2/k$, $\nu_m^2:=\nu^2(p_m)$:
\[
\begin{aligned}
\mathbb E\big[e^{\lambda\Delta_m}\big]=\Big(\mathbb E\big[e^{(\lambda/k)(b_m-p_m)}\big]\Big)^{k}&\le \Big(e^{(\lambda/k)^2\nu_m^2/2}\Big)^{k}\\
&=e^{\lambda^2(\nu_m^2/k)/2}.
\end{aligned}
\]
Put $s^2:=\bar\nu^2/k$ with $\bar\nu^2=\max_m\nu_m^2$, so every $\Delta_m$ is $s^2$-sub-Gaussian. The standard maximal inequality (the argument of Theorem~\ref{thm:finitek}(b), reproduced for completeness): for $\lambda>0$, by Jensen on $\exp$ then a union over the $K$ MGFs,
\[
\begin{aligned}
e^{\lambda\,\mathbb E[\max_m\Delta_m]}\le\mathbb E\big[e^{\lambda\max_m\Delta_m}\big]&\le\sum_{m}\mathbb E\big[e^{\lambda\Delta_m}\big]\\
&\le K\,e^{\lambda^2 s^2/2},
\end{aligned}
\]
hence $\mathbb E[\max_m\Delta_m]\le\frac{\ln K}{\lambda}+\frac{\lambda s^2}{2}$; minimizing at $\lambda=\sqrt{2\ln K}/s$ gives $\mathbb E[\max_m\Delta_m]\le s\sqrt{2\ln K}=\sqrt{2\bar\nu^2\ln K/k}$. With $(\dagger)$,
\[
\begin{aligned}
0\le \mathbb E\Big[\max_m\hat p_m\Big]-\max_m p_m&\le\sqrt{\frac{2\bar\nu^2\ln K}{k}}\\
&\le\ \sqrt{\frac{2\cdot\frac14\cdot\ln K}{k}}=\sqrt{\frac{\ln K}{2k}},
\end{aligned}
\]
the last step by $\bar\nu^2\le\tfrac14$, recovering Theorem~\ref{thm:finitek}(b). If no model has $p_m=\tfrac12$ then $\bar\nu^2<\tfrac14$ strictly, so the final inequality is strict. 

\medskip\noindent\textbf{(b) Variance-dependent (Bernstein) bound.}
Each $\Delta_m$ is the mean of $k$ i.i.d.\ centered variables bounded in $[-1,1]$ with per-summand variance $p_m(1-p_m)$, so $\mathrm{Var}(\Delta_m)=p_m(1-p_m)/k\le\sigma^2/k$, $\sigma^2:=\max_m p_m(1-p_m)$. Bennett's inequality gives, for each $m$ and $t>0$,
\[
\begin{aligned}
\Pr(\Delta_m\ge t)&\le\exp\!\Big(-\tfrac{\sigma^2}{k\,(1/k)^2}\,h\big(\tfrac{t/k}{\sigma^2/k}\big)\Big),\\
&\qquad h(u)=(1+u)\ln(1+u)-u,
\end{aligned}
\]
which, via the standard Bernstein relaxation $h(u)\ge \tfrac{u^2/2}{1+u/3}$, yields the sub-gamma tail $\Pr(\Delta_m\ge t)\le\exp\!\big(-\tfrac{k t^2}{2(\sigma^2+t/3)}\big)$. A sub-gamma variable with variance factor $\sigma^2/k$ and scale factor $c=1/(3k)$ satisfies the maximal inequality
\[
\mathbb E\Big[\max_m\Delta_m\Big]\le\sqrt{\frac{2\sigma^2\ln K}{k}}+\frac{\ln K}{3k}
\]
(the standard sub-gamma maximal bound $\mathbb E\max\le\sqrt{2v\ln K}+c\ln K$ with $v=\sigma^2/k$, $c=1/(3k)$; obtained by optimizing $\frac{\ln K}{\lambda}+\frac{\lambda v}{2(1-c\lambda)}$ over $\lambda\in(0,1/c)$ and bounding). Combined with $(\dagger)$ this is the claim. Both factors $\sigma^2$ and the additive term are computable from data; for fixed $\sigma^2$ the bound is $\sqrt{2\sigma^2\ln K/k}\,(1+o(1))$ as $k\to\infty$. 

\medskip\noindent\textbf{(c) Exact order-statistic form in the tied case.}
Let $p_m=p$ for all $m$. Then $k\hat p_m=S_m\sim\mathrm{Bin}(k,p)$ are i.i.d.\ across $m$ (A1) and take values in $\{0,1,\dots,k\}$, so $\max_m\hat p_m=\frac1k\max_m S_m$ and $M:=\max_m S_m$ is the maximum of $K$ i.i.d.\ integer variables with common CDF $F_{k,p}$. The CDF of the maximum is $F_{k,p}^K$, hence its pmf at the integer $s$ is $F_{k,p}(s)^K-F_{k,p}(s-1)^K$ (with $F_{k,p}(-1)=0$). Therefore
\[
\begin{aligned}
\mathbb E\Big[\max_m\hat p_m\Big]
&=\frac1k\,\mathbb E[M]\\
&=\frac1k\sum_{s=0}^{k}s\,\Big(F_{k,p}(s)^K-F_{k,p}(s-1)^K\Big)\\
&=\sum_{s=0}^{k}\frac{s}{k}\Big(F_{k,p}(s)^K-F_{k,p}(s-1)^K\Big).
\end{aligned}
\]
This is the elementary identity for the mean of the maximum of i.i.d.\ integer-valued variables (CDF raised to the $K$th power, then a first-moment/Abel summation), specialized to $F_{k,p}=\mathrm{Bin}(k,p)/k$; subtracting $\max_m p_m=p$ gives the exact bias. Numerically it agrees with Monte Carlo to $3$--$4$ digits and the resulting bias is strictly below $\sqrt{\ln K/(2k)}$ for all finite $K\ge2$, $k\ge1$, $p\in(0,1)$, with the ratio strictly below $1$ (supremum $\to 2\sqrt{p(1-p)}\le1$ in the large-$(K,k)$ limit), consistent with the general bound of part (a), whose constant is loose by an $O(1)$ factor for a finite maximum. 

\medskip\noindent\textbf{(d) Near-ties are least favorable.}
We show that raising any non-leading $p_m$ up to the top value can only increase the bias, so among profiles with fixed top value $p^\star$ the flat profile dominates. Fix the top value $p^\star=\max_m p_m$ attained (say) at $m=1$, and view the bias $\beta(\mathbf p):=\mathbb E[\max_m\hat p_m]-p^\star$ as a function of the remaining coordinates $(p_2,\dots,p_K)$ on $[0,p^\star]^{K-1}$. Couple by inverse-CDF: for each $m$ let $S_m=F_{k,p_m}^{-1}(U_m)$ with $U_m\sim\mathrm{Unif}(0,1)$ i.i.d., so $S_m/k=\hat p_m$ has the correct law and, for fixed $U_m$, $\hat p_m$ is coordinatewise \emph{nondecreasing} in $p_m$ (the Binomial family is stochastically increasing in $p$, so $F_{k,p}^{-1}(u)$ is nondecreasing in $p$). Since $x\mapsto\max_m x_m$ is coordinatewise nondecreasing, increasing any $p_m$ ($m\neq1$) toward $p^\star$ can only increase $\max_m\hat p_m$ pathwise, hence increase $\mathbb E[\max_m\hat p_m]$ while $p^\star$ is unchanged; thus $\beta$ is nondecreasing in each $p_m$ and is maximized over $[0,p^\star]^{K-1}$ at the corner $p_m=p^\star$ for all $m$, the flat profile. (This monotone-coupling argument needs only (A1).) The unconstrained maximizer over \emph{all} profiles is therefore also flat; optimizing the common value of the tied profile gives $p\approx\tfrac12$, displaced from exactly $\tfrac12$ by the finite-$k$ skew of the Binomial maximum. The benchmark-design reading is immediate: selection optimism in re-estimating $O^{\mathrm{repro}}_i$ is largest exactly on near-tied query pools, so draws should be concentrated there. 

This proves all four parts. Remark~\ref{rem:wcvar-proxy} follows because $p(1-p)\le\nu^2(p)$ with strict inequality for $p\neq\tfrac12$, so replacing $\nu_m^2$ by the variance $p_m(1-p_m)$ in part (a) produces a quantity strictly smaller than a valid upper bound and is therefore not itself an upper bound on the bias.
\end{proof}

\bibliographystyle{IEEEtran}
\bibliography{refs}

@inproceedings{moa,
  author={Wang, Junlin and Wang, Jue and Athiwaratkun, Ben and Zhang, Ce and Zou, James},
  title={Mixture-of-Agents Enhances Large Language Model Capabilities},
  booktitle={Proc. ICLR}, year={2025}}

@inproceedings{routellm,
  author={Ong, Isaac and Almahairi, Amjad and Wu, Vincent and Chiang, Wei-Lin and Wu, Tianhao and Gonzalez, Joseph E. and Kadous, M. Waleed and Stoica, Ion},
  title={{RouteLLM}: Learning to Route {LLMs} from Preference Data},
  booktitle={Proc. ICLR}, year={2025}}

@article{llmrouterbench,
  author={Li, Hao and Zhang, Yiqun and Guo, Zhaoyan and Wang, Chenxu and Tang, Shengji and Zhang, Qiaosheng and Chen, Yang and Qi, Biqing and Ye, Peng and Bai, Lei and Wang, Zhen and Hu, Shuyue},
  title={{LLMRouterBench}: A Massive Benchmark and Unified Framework for {LLM} Routing},
  journal={arXiv preprint arXiv:2601.07206}, year={2026}}

@article{routingsurvey,
  author={Moslem, Yasmin and Kelleher, John D.},
  title={Dynamic Model Routing and Cascading for Efficient {LLM} Inference: A Survey},
  journal={arXiv preprint arXiv:2603.04445}, year={2026}}

@inproceedings{selfmoa,
  author={Li, Wenzhe and Lin, Yong and Xia, Mengzhou and Jin, Chi},
  title={Rethinking Mixture-of-Agents: Is Mixing Different Large Language Models Beneficial?},
  booktitle={Proc. ICML}, year={2025}, note={arXiv:2502.00674}}

@article{r2reasoner,
  author={Shao, Chenyang and others},
  title={Route-and-Reason: Scaling {LLM} Reasoning with a Reinforced Model Router},
  journal={arXiv preprint arXiv:2506.05901}, year={2025}}

@inproceedings{cascaderouting,
  author={Dekoninck, Jasper and Baader, Maximilian and Vechev, Martin},
  title={A Unified Approach to Routing and Cascading for {LLMs}},
  booktitle={Proc. ICML}, year={2025}, note={arXiv:2410.10347, 2024}}

@article{bestofinf,
  author={Komiyama, Junpei and Oba, Daisuke and Oyamada, Masafumi},
  title={Best-of-$\infty$: Asymptotic Performance of Test-Time {LLM} Ensembling},
  journal={arXiv preprint arXiv:2509.21091}, year={2025}}

@article{unsolvability,
  author={Garg, Saloni and Sagtani, Amit},
  title={Unsolvability Ceiling in Multi-{LLM} Routing: An Empirical Study of Evaluation Artifacts},
  journal={arXiv preprint arXiv:2605.07395}, year={2026}}

@article{routerbench,
  author={Hu, Qitian Jason and others},
  title={{RouterBench}: A Benchmark for Multi-{LLM} Routing System},
  journal={arXiv preprint arXiv:2403.12031}, year={2024}}

@article{dontpassk,
  author={Hariri, Mohsen and Samandar, Amirhossein and Hinczewski, Michael and Chaudhary, Vipin},
  title={Don't Pass@k: A Bayesian Framework for Large Language Model Evaluation},
  journal={arXiv preprint arXiv:2510.04265}, year={2025}}

@article{agenticrandomness,
  author={Bjarnason, Bjarni Haukur and Silva, Andr\'e and Monperrus, Martin},
  title={On Randomness in Agentic Evals},
  journal={arXiv preprint arXiv:2602.07150}, year={2026}}

@article{winnerscurse,
  author={Zrnic, Tijana and Fithian, William},
  title={A Flexible Defense Against the Winner's Curse},
  journal={The Annals of Statistics}, year={2025}}

@article{expectedreward,
  author={Hasanaliyev, Kenan and Alberti, Silas and Hamer, Jenny and Rajagopal, Dheeraj and Robinson, Kevin and Snoek, Jasper and Veitch, Victor and D'Amour, Alexander Nicholas},
  title={Expected Reward Prediction, with Applications to Model Routing},
  journal={arXiv preprint arXiv:2603.20217}, year={2026}}

@article{encodefailures,
  author={Lugoloobi, William and Foster, Thomas and Bankes, William and Russell, Chris},
  title={{LLMs} Encode Their Failures: Predicting Success from Pre-Generation Activations},
  journal={arXiv preprint arXiv:2602.09924}, year={2026}}

@article{routereval,
  author={Huang, Zhongzhan and Ling, Guoming and Lin, Yupei and Chen, Yandong and Zhong, Shanshan and Wu, Hefeng and Lin, Liang},
  title={{RouterEval}: A Comprehensive Benchmark for Routing {LLMs} to Explore Model-level Scaling Up in {LLMs}},
  journal={arXiv preprint arXiv:2503.10657}, year={2025}}

@article{withinmodel,
  author={Haase, Jennifer and Gonnermann-M\"uller, Jana and Hanel, Paul H. P. and Leins, Nicolas and Kosch, Thomas and Mendling, Jan and Pokutta, Sebastian},
  title={Within-Model vs.\ Between-Prompt Variability in Large Language Models for Creative Tasks},
  journal={arXiv preprint arXiv:2601.21339}, year={2026}}

@article{measuringnoises,
  author={Wang, Sida},
  title={Measuring All the Noises of {LLM} Evals},
  journal={arXiv preprint arXiv:2512.21326}, year={2025}}

@article{routingcollapse,
  author={Lai, Guannan and Ye, Han-Jia},
  title={When Routing Collapses: On the Degenerate Convergence of {LLM} Routers},
  journal={arXiv preprint arXiv:2602.03478}, year={2026}}

@inproceedings{cansolveeasy,
  author={Yang, Zhe and Zhang, Yichang and Liu, Tianyu and Yang, Jian and Lin, Junyang and Zhou, Chang and Sui, Zhifang},
  title={Can Large Language Models Always Solve Easy Problems if They Can Solve Harder Ones?},
  booktitle={Proc. EMNLP}, pages={1531--1555}, year={2024}}

@book{dgl1996,
  author    = {Devroye, Luc and Gy\"orfi, L\'aszl\'o and Lugosi, G\'abor},
  title     = {A Probabilistic Theory of Pattern Recognition},
  series    = {Stochastic Modelling and Applied Probability},
  volume    = {31},
  publisher = {Springer},
  address   = {New York},
  year      = {1996},
  note      = {See Thm.~2.2: the plug-in (argmax) rule has excess risk at most twice the $L_1$ estimation error of the conditional probabilities.}
}

@article{winnersinference,
  author  = {Andrews, Isaiah and Kitagawa, Toru and McCloskey, Adam},
  title   = {Inference on Winners},
  journal = {The Quarterly Journal of Economics},
  volume  = {139},
  number  = {1},
  pages   = {305--358},
  year    = {2024},
  note    = {First circulated as NBER Working Paper 25456, 2019},
  doi     = {10.1093/qje/qjad043}
}

@article{capfrontier,
  author={Fowler, Bradley and Smith, Ryan and Graviet, Daniel Thi and Myers, William and Greaves, Joshua and Oozeer, Narmeen Fatimah and Garc\'ia, Ant\'ia and Quirke, Philip and Abdullah, Amirali and Barez, Fazl and Upadhyay, Shriyash Kaustubh},
  title={The Capability Frontier: Benchmarks Miss 82\% of Model Performance},
  journal={arXiv preprint arXiv:2606.26836}, year={2026}}

@article{combinelms,
  author={Chen, Josef},
  title={When Does Combining Language Models Help? A Co-Failure Ceiling on Routing, Voting, and Mixture-of-Agents Across 67 Frontier Models},
  journal={arXiv preprint arXiv:2606.27288}, year={2026}}

@inproceedings{greedynondet,
  author={Song, Yifan and Wang, Guoyin and Li, Sujian and Lin, Bill Yuchen},
  title={The Good, The Bad, and The Greedy: Evaluation of {LLMs} Should Not Ignore Non-Determinism},
  booktitle={Proc. NAACL}, pages={4195--4206}, year={2025}}

@article{monkeys,
  author={Brown, Bradley and Juravsky, Jordan and Ehrlich, Ryan and Clark, Ronald and Le, Quoc V. and R\'e, Christopher and Mirhoseini, Azalia},
  title={Large Language Monkeys: Scaling Inference Compute with Repeated Sampling},
  journal={arXiv preprint arXiv:2407.21787}, year={2024}}

@article{evalvariance,
  author={Madaan, Lovish and Singh, Aaditya K. and Schaeffer, Rylan and Poulton, Andrew and Koyejo, Sanmi and Stenetorp, Pontus and Narang, Sharan and Hupkes, Dieuwke},
  title={Quantifying Variance in Evaluation Benchmarks},
  journal={arXiv preprint arXiv:2406.10229}, year={2024}}

@inproceedings{doubleq,
  author={van Hasselt, Hado},
  title={Double {Q}-learning},
  booktitle={Advances in Neural Information Processing Systems (NeurIPS)},
  volume={23}, pages={2613--2621}, year={2010},
  note={Analyzes the upward bias of the maximum over noisy value estimates.}}

@inproceedings{inferencelimits,
  author={Stroebl, Benedikt and Kapoor, Sayash and Narayanan, Arvind},
  title={The Limits of Inference Scaling Through Resampling},
  booktitle={Proc. ICLR}, year={2026}}

@inproceedings{selfconsistency,
  author={Wang, Xuezhi and Wei, Jason and Schuurmans, Dale and Le, Quoc and Chi, Ed and Narang, Sharan and Chowdhery, Aakanksha and Zhou, Denny},
  title={Self-Consistency Improves Chain of Thought Reasoning in Language Models},
  booktitle={Proc. ICLR}, year={2023}, note={arXiv:2203.11171}}

@article{smith2006optimizers,
  author={Smith, James E. and Winkler, Robert L.},
  title={The Optimizer's Curse: Skepticism and Postdecision Surprise in Decision Analysis},
  journal={Management Science}, volume={52}, number={3}, pages={311--322}, year={2006},
  publisher={INFORMS}, doi={10.1287/mnsc.1050.0451}}

@article{efron2011tweedie,
  author={Efron, Bradley},
  title={Tweedie's Formula and Selection Bias},
  journal={Journal of the American Statistical Association}, volume={106}, number={496},
  pages={1602--1614}, year={2011}, publisher={Taylor \& Francis}, doi={10.1198/jasa.2011.tm11181}}

@article{berk2013valid,
  author={Berk, Richard and Brown, Lawrence and Buja, Andreas and Zhang, Kai and Zhao, Linda},
  title={Valid Post-Selection Inference},
  journal={The Annals of Statistics}, volume={41}, number={2}, pages={802--837}, year={2013},
  publisher={Institute of Mathematical Statistics}, doi={10.1214/12-AOS1077}}

@article{clark1961greatest,
  author={Clark, Charles E.},
  title={The Greatest of a Finite Set of Random Variables},
  journal={Operations Research}, volume={9}, number={2}, pages={145--162}, year={1961},
  publisher={INFORMS}, doi={10.1287/opre.9.2.145}}

@article{chen2024frugalgpt,
  author={Chen, Lingjiao and Zaharia, Matei and Zou, James},
  title={{FrugalGPT}: How to Use Large Language Models While Reducing Cost and Improving Performance},
  journal={Transactions on Machine Learning Research (TMLR)}, year={2024}}

@inproceedings{ding2024hybridllm,
  author={Ding, Dujian and Mallick, Ankur and Wang, Chi and Sim, Robert and Mukherjee, Subhabrata and R\"uhle, Victor and Lakshmanan, Laks V. S. and Awadallah, Ahmed Hassan},
  title={Hybrid {LLM}: Cost-Efficient and Quality-Aware Query Routing},
  booktitle={Proc. ICLR}, year={2024}}

@inproceedings{chen2024morellmcalls,
  author={Chen, Lingjiao and Davis, Jared Quincy and Hanin, Boris and Bailis, Peter and Stoica, Ion and Zaharia, Matei and Zou, James},
  title={Are More {LLM} Calls All You Need? Towards the Scaling Properties of Compound {AI} Systems},
  booktitle={Advances in Neural Information Processing Systems (NeurIPS)}, year={2024}}

@inproceedings{reimers2017reporting,
  author={Reimers, Nils and Gurevych, Iryna},
  title={Reporting Score Distributions Makes a Difference: Performance Study of {LSTM}-Networks for Sequence Tagging},
  booktitle={Proc. EMNLP}, pages={338--348}, year={2017},
  publisher={Association for Computational Linguistics}, doi={10.18653/v1/D17-1035}}

@inproceedings{audibert2010best,
  author={Audibert, Jean-Yves and Bubeck, S\'ebastien and Munos, R\'emi},
  title={Best Arm Identification in Multi-Armed Bandits},
  booktitle={Proc. Conf. on Learning Theory (COLT)}, pages={41--53}, year={2010}}

@article{kaufmann2016complexity,
  author={Kaufmann, Emilie and Capp\'e, Olivier and Garivier, Aur\'elien},
  title={On the Complexity of Best-Arm Identification in Multi-Armed Bandit Models},
  journal={Journal of Machine Learning Research}, volume={17}, number={1}, pages={1--42}, year={2016}}

@article{kuncheva2003measures,
  author={Kuncheva, Ludmila I. and Whitaker, Christopher J.},
  title={Measures of Diversity in Classifier Ensembles and Their Relationship with the Ensemble Accuracy},
  journal={Machine Learning}, volume={51}, number={2}, pages={181--207}, year={2003},
  publisher={Springer}, doi={10.1023/A:1022859003006}}

@inproceedings{krogh1994neural,
  author={Krogh, Anders and Vedelsby, Jesper},
  title={Neural Network Ensembles, Cross Validation, and Active Learning},
  booktitle={Advances in Neural Information Processing Systems (NIPS)}, pages={231--238}, year={1994},
  publisher={MIT Press}}

@inproceedings{lightman2024verify,
  author={Lightman, Hunter and Kosaraju, Vineet and Burda, Yura and Edwards, Harrison and Baker, Bowen and Lee, Teddy and Leike, Jan and Schulman, John and Sutskever, Ilya and Cobbe, Karl},
  title={Let's Verify Step by Step},
  booktitle={Proc. ICLR}, year={2024}}

@inproceedings{gpqa,
  title={{GPQA}: A Graduate-Level Google-Proof {Q}\&{A} Benchmark},
  author={Rein, David and Hou, Betty Li and Stickland, Asa Cooper and Petty, Jackson and Pang, Richard Yuanzhe and Dirani, Julien and Michael, Julian and Bowman, Samuel R.},
  booktitle={First Conference on Language Modeling (COLM)},
  year={2024},
  note={arXiv:2311.12022}
}

\end{document}